\documentclass{article}

\usepackage[main, final]{neurips_2026}

\usepackage[utf8]{inputenc}
\usepackage[T1]{fontenc}
\usepackage{hyperref}
\usepackage{url}
\usepackage{booktabs}
\usepackage{amsfonts}
\usepackage{nicefrac}
\usepackage{microtype}
\usepackage{xcolor}

\usepackage{amsmath}
\usepackage{algorithm}
\usepackage{algpseudocode}
\usepackage{wrapfig}
\usepackage{subcaption}
\usepackage{graphicx}
\usepackage{multirow}
\usepackage{amsthm}
\usepackage[most]{tcolorbox}
\tcbuselibrary{theorems}

\title{RL-IF: Tracking Non-local Training Data Influence for Online Reinforcement Learning}

\author{
    \textbf{Shixuan Liu}\thanks{Equal contribution.},
    \textbf{Cheng Tang}\footnotemark[1],
    \textbf{Yuzheng Hu}\footnotemark[1],
    \textbf{Fan Wu},
    \textbf{Han Zhao},
    \textbf{Jiaqi W. Ma}\\[0.6em]
    University of Illinois Urbana-Champaign\\[0.3em]
    {\small\texttt{\{shixuanl,chengt6,yh46,fanw6,hanzhao,jiaqima\}@illinois.edu}}
}

\newtcbtheorem[number within=section]{theorem}{Theorem}
{colback=gray!5,
 colframe=gray!70,
 boxrule=0.5pt,
 arc=2pt,
 fonttitle=\bfseries}
{thm}

\newtcbtheorem[number within=section]{proposition}{Proposition}
{colback=gray!3,
 colframe=gray!35,
 boxrule=0.4pt,
 arc=2pt,
 left=6pt,
 right=6pt,
 top=5pt,
 bottom=5pt,
 fonttitle=\bfseries,
 coltitle=black}
{prop}

\begin{document}

\maketitle

\begin{abstract}
Online reinforcement learning (RL) has achieved significant success in decision-making tasks, but modern online RL remains highly sensitive to the quality of training experience. Understanding the contribution of individual training instances is therefore central to data-centric RL, with applications ranging from experience selection to policy interpretation. Existing data attribution methods for online RL primarily focus on local influence within a single training round, overlooking the non-local data influence across multiple training rounds. In this work, we formalize data attribution in online RL through a trajectory-level leave-one-out quantity, \emph{RL-LOO}, which measures how one training sample influences a downstream target after subsequent online updates have taken place. We then derive \emph{RL-IF}, a first-order estimator that propagates data influence through both optimization effects and policy-induced sampling effects, and provide theoretical guarantees on its approximation accuracy under smoothness assumptions. Our analysis further disentangles these two effects and shows both theoretically and empirically that the full RL-IF estimator can often be well approximated using the optimization effect alone. Building on this observation, we develop \textbf{RINSE}, a practical data filtering method for online RL. Experiments on standard control benchmarks and an RLHF-style toxicity-mitigation setting show that RINSE improves training efficiency and final performance over standard PPO and local attribution baselines. These results suggest that RL-IF  provides a practical and principled framework for understanding and improving online RL.
\end{abstract}

\section{Introduction}

Online reinforcement learning (RL) has become a central paradigm for learning decision-making policies through repeated interaction with an environment, from robotic control~\citep{todorov2012mujoco,andrychowicz2020learning} to the alignment and enhancement of large language models (LLMs)~\citep{ouyang2022training,shao2024deepseekmath}. Despite this success, modern online RL remains highly sensitive to the quality of training experience: training often requires millions of interactions, while the quality of collected experience can vary substantially across time and trajectories, leading to well-known challenges such as sample inefficiency, high variance, and training instability~\citep{yu2018towards,henderson2018deep,dulac2019challenges}. Hence, understanding the contribution of individual training instances from these experiences is fundamental to data-centric studies of RL and has broad implications for applications such as data selection~\citep{hu2026drposttrainingdataregularization,zhu2025data}, experience replay~\citep{schaul2015prioritized}, or interpretation~\citep{hu2025snapshotinfluencelocaldata}.

Data attribution offers a natural lens for understanding the role of individual training instances. In supervised learning, data attribution aims to identify the training examples that most strongly drive model behavior~\citep{koh2017understanding,deng2025survey}, with applications in model debugging~\citep{koh2017understanding}, mislabeled data detection~\citep{pruthi2020estimating}, data valuation~\citep{ghorbani2019data,jia2019towards}, and data selection~\citep{xia2024less}. Recently, \citet{hu2025snapshotinfluencelocaldata} introduce data attribution to online RL, and develop a local attribution framework for PPO. The proposed framework measures the influence of rollout records on a local policy update within a round, and shows that such local scores can identify harmful training time steps and improve sample efficiency through data filtering.

However, the existing attribution methods are not yet principled and capture only a partial notion of data influence in online RL. First, they mainly rely on heuristic influence estimation, such as gradient dot products, which are intuitive and scalable, but do not fully account for the training dynamics or correspond to a clear counterfactual notion of data value~\citep{koh2017understanding}. Second, the scope of existing attribution methods is restricted to a single training round, leaving the influence of training data across rounds largely unexplored. Characterizing such inter-round influence is more challenging in online RL because the effect of a single training occurrence can propagate across future rounds through subsequent optimization updates and, in principle, through changes in the policy-induced distribution of future rollout data.

This motivates the development of a framework that directly estimates counterfactual influence beyond the local scope of individual RL training rounds. To address this, we formulate data attribution in online RL through a principled leave-one-out objective defined over multiple training rounds, and use it to study the non-local value of rollout data. Our main contributions are summarized as follows:

\begin{itemize}
\item \textbf{A non-local attribution framework for online RL (Section~\ref{sec:prelim-problem-formulation}).} Prior online-RL attribution methods are local and heuristic; we formulate non-local data attribution in online RL through an RL leave-one-out objective, \emph{RL-LOO}, defined over the training trajectory.
\item \textbf{A first-order estimator with optimization and sampling effects (Section~\ref{sec:sloo}).}  We derive \emph{RL-IF}, a first-order estimator of RL-LOO, and show that online-RL data influence decomposes into \textbf{optimization effect} and \textbf{sampling effect}. This yields both a conceptual lens for understanding long-horizon influence and a computationally efficient approximation, \emph{RL-IF-opt}, that is well supported by our theory and experiments.
\item \textbf{A practical data-selection algorithm with downstream gains (Section~\ref{sec:data+selection}).} Building on an efficient approximation to RL-IF, we develop \textbf{RINSE}, a non-local attribution-based filtering method for online RL. Across standard benchmark environments and an RLHF-style toxicity-mitigation setting, RINSE improves training efficiency and final performance relative to standard PPO and local filtering baselines.
\end{itemize}

\section{Preliminaries}
\label{sec:sloo-formultion}

\subsection{Background}
\label{sec:prelim-background}
\paragraph{Online RL algorithms.} We consider an online reinforcement learning (RL) process indexed by rounds $t=1,\ldots,T$. At round $t$, the current policy $\pi_{\theta_t}$ interacts with the environment to collect a rollout buffer $D_t$, after which the policy is updated for $M$ gradient steps. In this work, we focus on a specific RL algorithm, proximal policy optimization (PPO)~\citep{schulman2017proximal}. Given the state action pair $(s, a)$, the standard clipped PPO objective maximizes
\begin{equation}
L^{\mathrm{clip}}(\theta)
:=
\mathbb{E}_{(s,a)\sim D_t}\Big[
\min\big(
 r_t(\theta)\hat{A},
 \mathrm{clip}(r_t(\theta),1-\epsilon,1+\epsilon)\hat{A}
\big)
\Big], \quad
r_t(\theta)
:=
\frac{\pi_\theta(a\mid s)}{\pi_{\theta_{t}}(a\mid s)}.
\label{eq:ppo-clip}
\end{equation}
Here, $r_t(\theta)$ is the policy ratio, $\hat{A}$ is an advantage estimate, $\pi_{\theta_{t}}$ is the behavior policy used to collect the rollout, and $\epsilon$ is the clipping threshold. \citet{schulman2017proximal} also introduced an alternative formulation of the PPO objective, in which the clipping mechanism is replaced by KL regularization. Specifically, let $\beta$ be the coefficient of the KL penalty, and the KL-regularized form of PPO maximizes
\begin{equation*}
L^{\mathrm{KL}}(\theta)
:=
\mathbb{E}_{(s,a)\sim D_t}\big[
r_t(\theta)\hat{A}
-\beta \,
\mathrm{KL}\big(
\pi_{\theta_{t}}(\cdot\mid s)
\,\|\,
\pi_\theta(\cdot\mid s)
\big)
\big].
\end{equation*}
Let $P_t(\cdot \mid \theta_t)$ denote the distribution over rollout buffers induced by $\pi_{\theta_t}$, where a rollout buffer $D_t$ is the collection of trajectories sampled in round $t$. Write $D_t^{(j)} \subseteq D_t$ for the $j$-th minibatch in round $t$, and $\theta_t^{(j)}$ for the parameter at the $j$-th gradient step, with $\theta_t = \theta_t^{(0)} = \theta_{t-1}^{(M)}$. For simplicity, we assume that the policy is updated with stochastic gradient descent, and each update takes the form:
\begin{equation}
\theta_t^{(j)}
=
\theta_t^{(j-1)}
-
\frac{\eta_t}{b}
\sum_{x\in D_t^{(j)}}
\nabla_\theta \ell(\theta_t^{(j-1)},x),
\qquad
 D_t \sim P_t(\cdot \mid \theta_{t}),
\label{eq:sloo-update}
\end{equation}

where $\eta_t$ is the learning rate, $b$ is the minibatch size, and $\ell$ denotes the PPO loss.

\begin{figure}[t]
    \centering
    \includegraphics[width=0.99\linewidth]{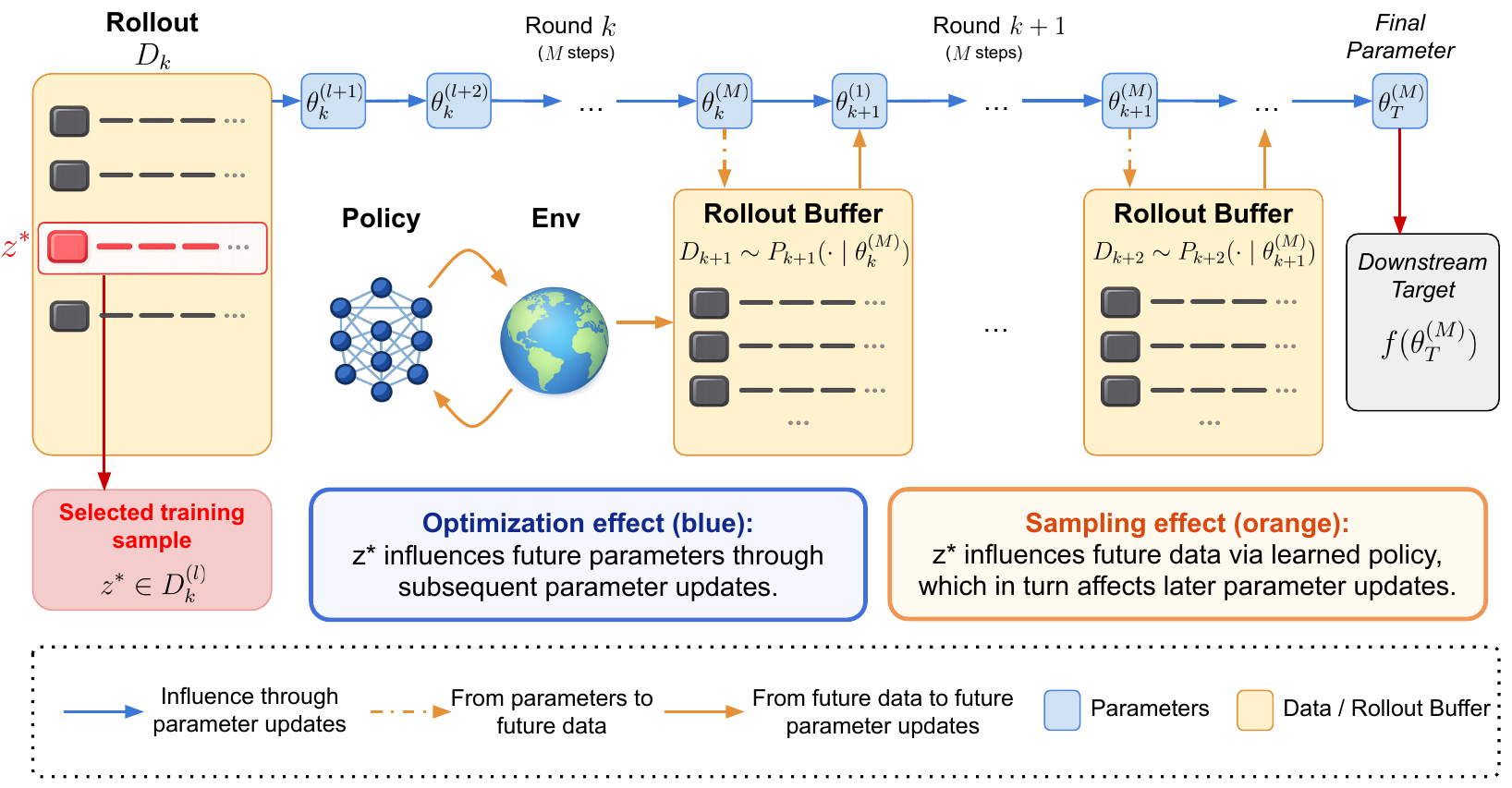}
    \caption{Two-fold data influence in RL. A training occurrence $z^* \in D_k^{(l)}$ affects learning through (i) the \emph{optimization path} (blue): propagating via parameter updates to $\theta_T^{(M)}$, and (ii) the \emph{sampling path} (orange): altering future rollout buffers $D_{k+1}, D_{k+2}, \dots$. These effects are coupled through the policy-induced data distribution.}
    \label{fig:two-fold}
\end{figure}

\paragraph{Local data attribution in online RL.}
\citet{hu2025snapshotinfluencelocaldata} propose a \emph{local} data attribution framework for online RL, which measures how much a training sample contributes to a downstream target function $f$. In this setting, the training samples are scored by gradient inner products between the per-sample gradient and gradient on an attribution target $f$. The local attribution score is estimated as
\begin{equation}
I^{(k,l)}(z^*;f)
:=
-\eta_k \,
\left\langle
\nabla_\theta \ell(\theta, z^*),
\nabla_\theta f(\theta)
\right\rangle,
\label{eq:prelim-tracin}
\end{equation}
A key design choice is the attribution target $f$. \citet{hu2025snapshotinfluencelocaldata} propose to leverage a policy-gradient surrogate for downstream return, defined on a  reference validation distribution. Given a reference policy $\pi^{\mathrm{ref}}$ and a reference advantage estimator $\hat A^{\mathrm{ref}}$, \cite{hu2025snapshotinfluencelocaldata} define
\begin{equation}
f^{\mathrm{return}}(\theta)
:=
\mathbb{E}_{\tau\sim \pi^{\mathrm{ref}},\,(s,a)\sim \tau}
\big[
\log \pi_\theta(a\mid s)\, \hat A^{\mathrm{ref}}(s,a)
\big].
\label{eq:prelim-return-target}
\end{equation}

The target gives the attribution score a direct behavioral interpretation: a positive score means the sample pushes the policy toward higher surrogate return, whereas a negative score suggests that removing the sample may help.

\paragraph{Discrepancy between short-term and long-term influence.}
A major limitation of local data attribution is its myopic view of data influence. To illustrate this point, we empirically estimate the data influence\footnote{We estimate data influence by removing the sample from the training trajectory and measuring the resulting change in the final attribution target, as formally defined in Section~\ref{sec:prelim-problem-formulation}.} in \texttt{LunarLander}~\citep{brockman2016openai}, and track how the influence of the same sample changes as the \emph{horizon} $H$ grows, where horizon means the number of training rounds over which the sample's effect is propagated and evaluated. As shown in Figure~\ref{fig:counter}, a good sample that stabilizes the lander may appear harmful from a short-term viewpoint even though it improves downstream behavior and is beneficial in the long run, and vice versa. Systematically, we also observed a bias induced by the local viewpoint. In particular, local attribution tends to overestimate the long-term value of late-episode samples, those from late time steps in an episode.

\begin{wrapfigure}{r}{0.45\textwidth}
    \centering
    \includegraphics[width=0.45\textwidth]{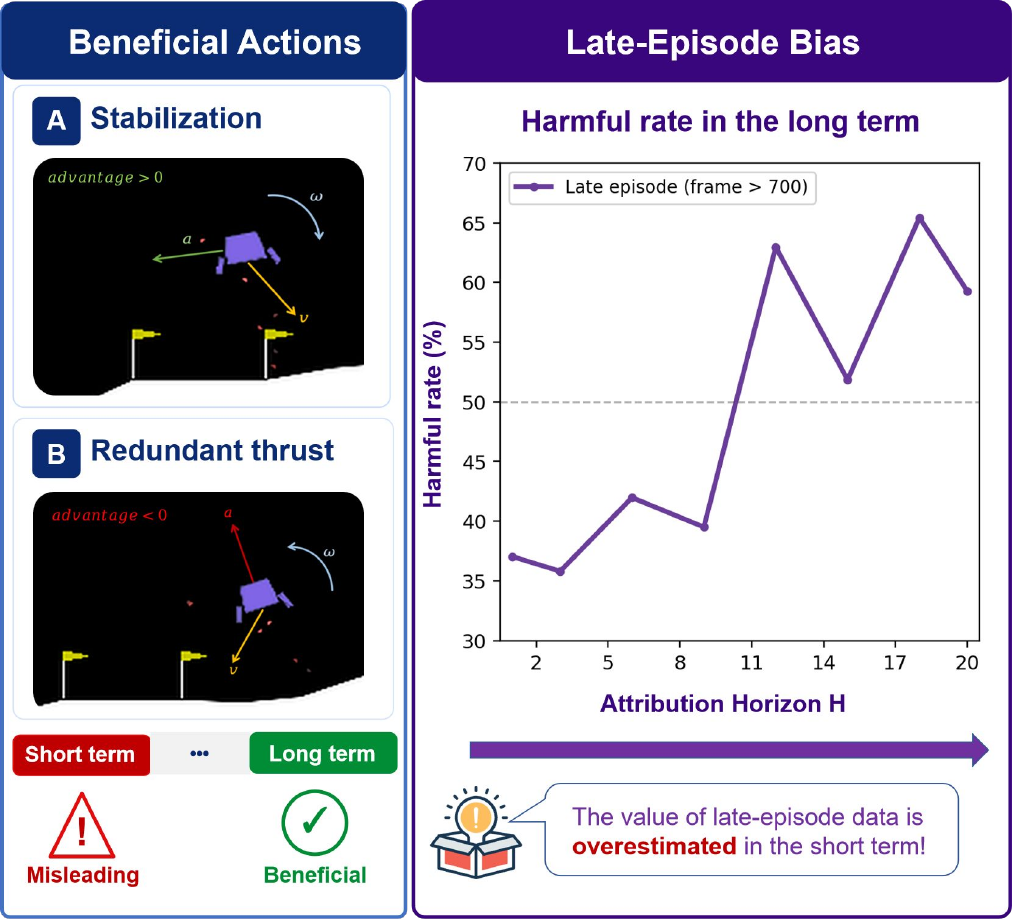}
    \caption{ Left: two LunarLander samples that appear harmful locally but are beneficial in the long run. Right: fraction of late-episode samples labeled harmful vs. attribution horizon $H$. }
\label{fig:counter}
\vspace{-10pt}
\end{wrapfigure}

\paragraph{Two-fold data influence in online RL.}
In online RL, the influence of a training sample is not limited to its direct effect on a parameter update. A rollout affects learning through two coupled channels, as demonstrated in Figure~\ref{fig:two-fold}. First, it influences the checkpoint by contributing to the gradient update that produces the next parameter. This is the usual \textbf{optimization effect} of data: removing one sample perturbs the parameter trajectory and therefore the final checkpoint. Second, the same rollout influences future data sampling. Since subsequent rollout buffers are collected by policies obtained from earlier checkpoints, perturbing one training sample changes not only the next iterate but also the state-action distribution from which later data are drawn. This is referred to as the \textbf{sampling effect}: a sample influences future learning by altering the distribution of later data. Hence, in on-policy RL, data influence is inherently two-fold: it acts on both the optimization path and the sampling path. Moreover, the two effects are generally entangled rather than separable: a perturbation to one sample changes the data distribution of future rollouts, and this distributional shift in turn alters the subsequent optimization trajectory.

\section{Measuring Data Influence in Online RL}
\label{sec:sloo-derivation}

In this section, we systematically formulate and approximate data influence in online reinforcement learning. We first introduce a leave-one-out counterfactual objective, RL-LOO, which measures the long-term effect of removing a training occurrence. Then, we derive a first-order influence estimator, RL-IF, that approximates this counterfactual without retraining by propagating perturbations through the training trajectory. The resulting recursion naturally decomposes data influence into the aforementioned optimization effect and sampling effect, and admits theoretical guarantees under suitable smoothness assumptions. Finally, we analyze both the effectiveness of RL-IF and the relative contributions of these two effects through theoretical analysis and empirical evaluation.

\subsection{Problem Formulation}
\label{sec:prelim-problem-formulation}

Following prior work, we study data attribution in an online PPO-style training pipeline, and extend it beyond individual RL training rounds. Let $\theta_T^{(M)}$ denote the final policy parameter after all rounds of training. Our goal is to systematically measure how one training sample would affect a downstream target after subsequent online updates have taken place.

We formalize this through a leave-one-out (LOO)~\citep{koh2017understanding} counterfactual specific to RL, or \emph{RL-LOO}, following a similar setup in supervised learning literature~\citep{hara2019datacleansingmodelstrained,wang2024capturingtemporaldependencetraining}. The unit of attribution is a particular \emph{occurrence} of a sample in a training minibatch, written as $z^* \in D_k^{(l)}$, which specifies both the sample identity and the minibatch in which it is used. Let $\widetilde{\theta}_T^{(M;-z^*)}$ denote the final parameter obtained by removing this occurrence from the training trajectory and replaying the remaining optimization process, including the induced changes in future sampling. For a downstream target function $f$, RL-LOO is defined as
\begin{equation*}
\mathrm{RL}\text{-}\mathrm{LOO}^{(k,l)}(z^*;f)
:=
f(\widetilde{\theta}_T^{(M;-z^*)})
-
f(\theta_T^{(M)}).
\end{equation*}
This quantity measures the effect of removing one training occurrence on a downstream target after subsequent training. As for the attribution target, we adopt the negative of the target defined in \eqref{eq:prelim-return-target}, so that negative scores identify occurrences whose removal improves the surrogate objective. In the non-local setting, the reference policy is naturally chosen as the final policy $\pi_{\theta_T}^{(M)}$, so that the target measures how removing one training occurrence changes the surrogate return of the final policy on its own induced distribution. $\hat A^{\mathrm{ref}}$ can be approximated using the corresponding critic and the same GAE procedure and hyperparameters as PPO. When exact advantages are used, we show that RL-LOO with this target serves as a local proxy for the change in final expected return in Appendix~\ref{app:reference-target}.

\subsection{Approximating Leave-One-Out in RL}
\label{sec:sloo}

We now derive a first-order estimator, \emph{RL-IF} (RL InFluence), for approximating RL-LOO. At a high level, RL-LOO asks how the final policy would change if a training sample were removed from a rollout-based training trajectory. Rather than explicitly retraining the policy, RL-IF approximates this counterfactual effect through first-order perturbation analysis.

Consider the RL training process in \eqref{eq:sloo-update}. Let $z^* \in D_k^{(l)}$ be a sample removed from update $(k,l)$, and let $\theta_t^{(j)}(\varepsilon)$ denote the perturbed training trajectory obtained by continuously downweighting $z^*$ by a scalar $\varepsilon$. A first-order Taylor expansion gives
\begin{equation}
\mathrm{RL}\text{-}\mathrm{LOO}^{(k,l)}(z^*;f)
\approx
\nabla_\theta f(\theta_T^{(M)})^\top
\Delta_T^{(M)},
\qquad
\Delta_t^{(j)}
:=
\left.
\frac{\partial \theta_t^{(j)}(\varepsilon)}
{\partial \varepsilon}
\right|_{\varepsilon=0},
\label{eq:sloo-delta-definition}
\end{equation}
which motivates the first-order estimator
\begin{equation*}
\mathrm{RL}\text{-}\mathrm{IF}^{(k,l)}(z^*;f)
:=
\nabla_\theta f(\theta_T^{(M)})^\top \Delta_T^{(M)}.
\end{equation*}

The key step is to characterize the perturbation trajectory $\Delta_t^{(j)}$. In online RL, perturbations propagate through two distinct channels, the \emph{optimization effect} and the \emph{sampling effect}, respectively. To unify the effects from both channels, we define the stage update operator
\begin{equation}
F_t^{(j)}(\vartheta,\beta)
:=
\vartheta
-
\frac{\eta_t}{b}
\sum_{x\sim S_t^{(j)}(\beta)}
\nabla_\theta \ell(\vartheta,x),
\quad
S_t^{(j)}(\beta)
:=
\begin{cases}
D_t^{(j)}, & \text{if } t = k \text{ and } j > l, \\
P_t(\cdot \mid \beta), & \text{if } t \ge k+1.
\end{cases}
 \label{eq: stage update operator}
\end{equation}
where $\vartheta$ denotes the parameter being updated and $\beta$ denotes the policy parameter controlling data collection. Differentiating the update recursion with respect to $\varepsilon$ yields
\begin{equation*}
\Delta_t^{(j)}
=
\frac{\partial F_t^{(j)}}{\partial \vartheta}
\Delta_t^{(j-1)}
+
\frac{\partial F_t^{(j)}}{\partial \beta}
\Delta_{t}^{(0)},
\end{equation*}
where the resulting recursion naturally distinguishes the two channels: the first term associated with the current parameters $\vartheta$ captures the optimization effect, and the second term related to the sampling parameters $\beta$ captures the sampling effect. With further simplification, the resulting first-order perturbation recursion is summarized below.

\begin{proposition}{First-order perturbation recursion for RL-IF}{sloo-recursion}
Let $\Delta_t^{(j)}$ denote the perturbation induced by downweighting sample $z^* \in D_k^{(l)}$. The perturbation is initialized as
\[
\Delta_k^{(l)}
=
\frac{\eta_k}{b}
\nabla_\theta \ell(\theta_k^{(l-1)},z^*),
\]
and evolves according to
\begin{equation}
\Delta_t^{(j)}=
\begin{cases}
A_t^{(j)}\Delta_t^{(j-1)},
& \text{if } t=k,\ j>l, \\[4pt]
A_t^{(j)}\Delta_t^{(j-1)}
+
B_t^{(j)}\Delta_t^{(0)},
& \text{if } t\ge k+1,
\end{cases}
\label{eq:rlinf-recursion-main}
\end{equation}
where
\begin{equation*}
A_t^{(j)}
:=
I-\frac{\eta_t}{b}H_t^{(j)},
\qquad
B_t^{(j)}
:=
-\frac{\eta_t}{b}
\sum_{x\in D_t^{(j)}}
\nabla_\theta \ell(\theta_t^{(j-1)},x)
\nabla_\theta \log P_t(x\mid\theta_{t})^\top.
\end{equation*}
Here $A_t^{(j)}$ and $B_t^{(j)}$ correspond to the optimization and sampling effects, respectively.
\end{proposition}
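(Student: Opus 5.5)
The plan is to differentiate the perturbed SGD recursion \eqref{eq:sloo-update} with respect to the downweighting scalar $\varepsilon$ at $\varepsilon=0$, one stage at a time, using the stage update operator \eqref{eq: stage update operator}.

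\textbf{Perturbed trajectory.} Write the perturbed trajectory as $\theta_t^{(j)}(\varepsilon)$. Before update $(k,l)$ nothing depends on $\varepsilon$, so $\Delta_t^{(j)}=0$ there. At update $(k,l)$ the perturbed step is
\[
\theta_k^{(l)}(\varepsilon)=\theta_k^{(l-1)}-\frac{\eta_k}{b}\sum_{x\in D_k^{(l)}}\nabla_\theta\ell(\theta_k^{(l-1)},x)+\varepsilon\,\frac{\eta_k}{b}\nabla_\theta\ell(\theta_k^{(l-1)},z^*).
\]
Differentiating at $\varepsilon=0$ gives the stated initialization directly. Here the sign convention is that $\varepsilon$ removes weight, and $\varepsilon=1$ corresponds to full removal.

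\textbf{Same round, $t=k$, $j>l$.} The minibatch $D_k^{(j)}$ was already sampled by $\pi_{\theta_k}$. Since $\theta_k=\theta_k^{(0)}$ precedes the perturbation, the sample is fixed, and only the $\vartheta$-argument of $F_k^{(j)}$ depends on $\varepsilon$. The chain rule gives $\Delta_k^{(j)}=(I-\frac{\eta_k}{b}\sum_{x}\nabla^2_\theta\ell(\theta_k^{(j-1)},x))\Delta_k^{(j-1)}$. This equals $A_k^{(j)}\Delta_k^{(j-1)}$ with $H_t^{(j)}:=\sum_{x\in D_t^{(j)}}\nabla^2_\theta\ell(\theta_t^{(j-1)},x)$. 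The term $\partial F/\partial\beta$ vanishes because $\Delta_k^{(0)}=0$.

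\textbf{Later rounds, $t\ge k+1$.} The minibatch is now random, drawn from $P_t(\cdot\mid\theta_t(\varepsilon))$, and $\theta_t(\varepsilon)=\theta_t^{(0)}(\varepsilon)$ carries the perturbation $\Delta_t^{(0)}=\Delta_{t-1}^{(M)}$. I interpret the update in expectation over the sampling law: $F_t^{(j)}(\vartheta,\beta)=\vartheta-\frac{\eta_t}{b}\,\mathbb{E}_{x\sim P_t(\cdot\mid\beta)}\big[\sum_x\nabla_\theta\ell(\vartheta,x)\big]$.

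The $\vartheta$-derivative is handled exactly as in the same-round case. For the $\beta$-derivative I apply the log-derivative (score-function) identity
\[
\nabla_\beta\,\mathbb{E}_{P_t(\cdot\mid\beta)}[g(x)]=\mathbb{E}\big[g(x)\,\nabla_\beta\log P_t(x\mid\beta)^\top\big],
\]
with $g(x)=\nabla_\theta\ell(\vartheta,x)$. This exchange of differentiation and integration requires regularity assumptions. I then replace the expectation by its single-sample Monte Carlo estimate on the realized minibatch $D_t^{(j)}$, evaluated at $\vartheta=\theta_t^{(j-1)}$ and $\beta=\theta_t$. This yields exactly $B_t^{(j)}$. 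Combining the two derivatives through the total-derivative formula gives $\Delta_t^{(j)}=A_t^{(j)}\Delta_t^{(j-1)}+B_t^{(j)}\Delta_t^{(0)}$. Chaining $\Delta_{t+1}^{(0)}=\Delta_t^{(M)}$ across rounds completes the recursion by induction on $(t,j)$.

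\textbf{Main obstacle.} The sampling step is the hard part. A realized discrete rollout is not a differentiable function of $\beta$, so $\partial F/\partial\beta$ only makes sense in distribution. The justification is therefore either a pathwise statement about the expected dynamics, with the realized-sample formula as its unbiased estimator, or an interpretation of $\Delta$ as a derivative of the expected trajectory. I would state this convention explicitly and assume enough smoothness and integrability of $P_t$ and $\ell$ to exchange $\nabla$ and $\mathbb{E}$; with that in place, the rest is routine chain-rule bookkeeping. A subtle further point is that the minibatch partition is held fixed under perturbation.
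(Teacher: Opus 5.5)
Your proposal is correct and follows essentially the same route as the paper: the same $(1-\varepsilon)$ downweighting at update $(k,l)$, the chain rule through the stage operator $F_t^{(j)}(\vartheta,\beta)$ with no sampling contribution in round $k$, and the score-function identity followed by replacing the expectation with the realized minibatch to obtain $B_t^{(j)}$. Your remark that $\partial F_t^{(j)}/\partial\beta$ only makes sense in distribution matches the paper's own approximation step, where it writes $\partial F_t^{(j)}/\partial\beta \approx B_t^{(j)}$: the recursion is exact for the expected dynamics, and the stated $B_t^{(j)}$ is its minibatch estimate.
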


The recursion formula of RL-IF in Proposition~\ref{prop:sloo-recursion} propagates the first-order effect of removing $z^*$ through both later optimization steps and future sampling decisions. The term $A_t^{(j)}\Delta_t^{(j-1)}$ captures the \emph{optimization effect}, while $B_t^{(j)}\Delta_{t}^{(0)}$ captures the additional \emph{sampling effect} caused by policy-dependent data collection. Notably, the sampling effect vanishes when the perturbation remains within the same training round. In practice, we use $\nabla \log \pi_{\theta_{t}}(a\mid s)$ to approximate $\nabla \log P_t((s,a)\mid \theta_{t})$. Note that both matrix multiplications in this formula can be tackled efficiently: multiplying by $A_t^{(j)}$ can be computed using Hessian Vector Product (HVP), and multiplying by $B_t^{(j)}$ is essentially two vector products. We further provide a detailed derivation of the recursion, together with a first-order approximation guarantee for RL-IF under smoothness assumptions, in Appendix~\ref{app:rlloo-derivation} and Appendix~\ref{app:theorem-approx-error}.

\subsection{Sampling vs. Optimization Effects in RL-IF}
\label{sec:sampling-vs-optimization}

A more detailed analysis of RL-IF can be obtained by unrolling the recursion and separating the optimization and sampling effects. In particular, the recursion in \eqref{eq:rlinf-recursion-main} admits a compact closed form.
\begin{equation}
\mathrm{RL}\text{-}\mathrm{IF}^{(k,l)}(z^*;f)
=
\nabla_\theta f(\theta_T^{(M)})^\top
\left(\prod_{t=T}^{k+1} C_t\right)
\left(\prod_{j=M}^{l+1} A_k^{(j)}\right)
\frac{\eta_k}{b}\nabla_\theta \ell(\theta_k^{(l-1)},z^*).
\label{eq:sloo-loss-closed}
\end{equation}
Here, for each stage $t>k$,
\begin{equation*}
A_t := \prod_{j=M}^{1} A_t^{(j)},
\qquad
C_t :=
A_t
+
\sum_{q=1}^{M}
\left(\prod_{j=M}^{q+1} A_t^{(j)}\right) B_t^{(q)}.
\end{equation*}

Equation~\eqref{eq:sloo-loss-closed} is the closed-form solution of the recursion. Here, the product of $A_t$ captures the accumulated \emph{optimization effect}, while the terms involving $B_t^{(j)}$ capture the accumulated \emph{sampling effect}. When $B_t^{(q)}=0$, the expression reduces to the optimization-only case. The following theorem formalizes the gap between the optimization-only and full sampling-aware effects.

\begin{theorem}{Accumulated relative gap}{sloo_acc_gap_nonconvex}
Assume that $\ell(\theta;x)$ is twice differentiable, that its gradient norm is bounded by $G$, that its Hessian norm is bounded by $\Lambda$. Assume in addition that for each future stage $t>k$, the sampling family $P_t(\cdot\mid\theta)$ is KL-smooth with score second moment bounded by $L_{\max}$. Assume the learning rate takes the harmonic form $\eta_t = \frac{\eta_0}{M(t-k)}$ with $\eta_0\Lambda<1$. Then, for any $T>k$ with $H:=T-k$, the accumulated gap satisfies
\begin{equation}
\left\|
\prod_{t=T}^{k+1} C_t
-
\prod_{t=T}^{k+1} A_t
\right\|_2
\le
(e^\alpha H^\alpha-1)
\prod_{t=k+1}^{T}\|A_t\|_2,
\label{eq:sloo-acc-gap-nonconvex}
\end{equation}
where $\alpha := \frac{G\sqrt{L_{\max}}\,\eta_0}{(1-\eta_0\Lambda)^2}$.
\end{theorem}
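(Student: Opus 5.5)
We write $C_t = A_t + E_t$ with $E_t := \sum_{q=1}^{M}\bigl(\prod_{j=M}^{q+1}A_t^{(j)}\bigr)B_t^{(q)}$. The plan is to show a per-stage relative bound $\|E_t\|_2 \le \frac{\alpha}{t-k}\|A_t\|_2$. We then telescope the product difference, obtaining a bound of the form $\prod_t(1+\alpha/(t-k)) - 1$ times $\prod_t\|A_t\|_2$. Finally we bound the harmonic product by $(eH)^\alpha$ via $1+x\le e^x$ and $\sum_{s=1}^H 1/s \le 1+\log H$, which gives $\exp(\alpha(1+\log H)) = e^\alpha H^\alpha$.

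\textbf{Per-stage bound.} With $\eta_t = \eta_0/(M(t-k))$ and the Hessian bound $\Lambda$, each factor $A_t^{(j)} = I - \frac{\eta_t}{b}H_t^{(j)}$ is invertible, and $\|(A_t^{(j)})^{-1}\|_2\le (1-\eta_t\Lambda)^{-1}\le(1-\eta_0\Lambda)^{-1}$. Here we read $\frac{1}{b}H_t^{(j)}$ as the minibatch-averaged Hessian, so its norm is at most $\Lambda$. This lets us write
\[
\prod_{j=M}^{q+1}A_t^{(j)} = A_t\Bigl(\prod_{j=1}^{q}A_t^{(j)}\Bigr)^{-1},
\]
whose norm is at most $\|A_t\|_2(1-\eta_0\Lambda)^{-q}$. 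That is still exponential in $q$. To avoid this, we refine using $(1-\eta_t\Lambda)^{-q}\le (1-\eta_0\Lambda/(t-k))^{-q}$ with $q\le M$ and $\eta_t q\Lambda\le \eta_0\Lambda$. From this we aim for a uniform factor $(1-\eta_0\Lambda)^{-1}$ via $(1-x/M)^{-M}\le (1-x)^{-1}$ for $x<1$. The other exponent of $(1-\eta_0\Lambda)^{-1}$ in $\alpha$ presumably enters through a similar inverse bound used in the sampling term or in the KL-smoothness step.

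Next we bound $B_t^{(q)}$. It is $-\frac{\eta_t}{b}\sum_x \nabla\ell\,\nabla\log P_t^\top$, which is rank-structured. By Cauchy--Schwarz and the gradient bound $G$, we get $\|B_t^{(q)}\|_2\le \eta_t G\,\bigl(\mathbb{E}\|\nabla\log P_t\|^2\bigr)^{1/2}\le \eta_t G\sqrt{L_{\max}}$. The KL-smoothness and score second-moment assumptions are used exactly here, since they control the score in expectation/operator norm. Summing over $q=1,\dots,M$ gives
\[
\|E_t\|_2 \le M\eta_t G\sqrt{L_{\max}}\,(1-\eta_0\Lambda)^{-2}\|A_t\|_2 = \frac{\alpha}{t-k}\|A_t\|_2.
\]

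\textbf{Telescoping.} Write $\prod C_t-\prod A_t$ as a sum over nonempty subsets $S$ of stages, where the terms have $E_t$ in the positions $t\in S$ and $A_t$ elsewhere. Using submultiplicativity and the per-stage bound $\|E_t\|\le \frac{\alpha}{t-k}\|A_t\|$, each term is at most $\prod_{t\in S}\frac{\alpha}{t-k}\prod_t\|A_t\|$. Summing over $S$ gives
\[
\Bigl[\prod_{s=1}^H\bigl(1+\tfrac{\alpha}{s}\bigr)-1\Bigr]\prod_t\|A_t\|_2 \le (e^\alpha H^\alpha - 1)\prod_t\|A_t\|_2.
\]

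\textbf{Main obstacle.} The delicate step is converting the partial products $\prod_{j>q}A_t^{(j)}$ into a multiple of $\|A_t\|_2$ without an $M$-dependent blowup. This relies on invertibility and the $(1-x/M)^{-M}\le(1-x)^{-1}$ trick, and it is where the harmonic schedule and $\eta_0\Lambda<1$ are essential. A secondary subtlety is that $B_t^{(q)}$ is a sample-based quantity. The score second-moment bound should be applied to the population version, with KL-smoothness justifying differentiating $\log P_t$ and interchanging it with the expectation. If a pathwise bound is needed, we would state the result for the expected sampling operator.
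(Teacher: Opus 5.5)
Your proof is correct and follows the paper's outer structure: a per-stage relative bound, then a product telescoping, then the harmonic sum $\le 1+\log H$. Your subset expansion of $\prod_t(A_t+E_t)-\prod_t A_t$ is equivalent to the paper's telescoping $\sum_s(\prod_{t>s}C_t)(C_s-A_s)(\prod_{t<s}A_t)$ with $\|C_t\|_2\le(1+\rho_t)\|A_t\|_2$. Your bound $\|B_t^{(q)}\|_2\le\eta_tG\sqrt{L_{\max}}$ is the same step the paper takes; the paper simply asserts it, so your sample-versus-population caveat applies to the paper as well.

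The genuine difference is the per-stage relative bound. The paper first bounds $\|C_t-A_t\|_2$ in absolute terms via $\|A_t^{(j)}\|_2\le1+\eta_t\Lambda$, getting $G\sqrt{L_t}\,((1+\eta_t\Lambda)^M-1)/\Lambda$. It then converts this to a relative bound using $\|A_t\|_2\ge(1-\eta_t\Lambda)^M$. Under the harmonic schedule each of these two pieces costs a factor $(1-M\eta_t\Lambda)^{-1}$. That is where the square $(1-\eta_0\Lambda)^{-2}$ in $\alpha$ comes from, not from the sampling or KL-smoothness step as you guessed.

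You instead factor $A_t$ out directly, $\prod_{j=M}^{q+1}A_t^{(j)}=A_t\,(A_t^{(q)}\cdots A_t^{(1)})^{-1}$, inverting only $q$ factors. The clean chain is $(1-\eta_t\Lambda)^{-q}\le(1-\eta_t\Lambda)^{-M}\le(1-M\eta_t\Lambda)^{-1}\le(1-\eta_0\Lambda)^{-1}$. Drop your intermediate $(1-\eta_0\Lambda/(t-k))^{-q}$, since it throws away the $1/M$ that the Bernoulli step needs. This yields $\|E_t\|_2\le\frac{\alpha(1-\eta_0\Lambda)}{t-k}\|A_t\|_2$. So you actually prove the theorem with the smaller exponent $\alpha(1-\eta_0\Lambda)$, and the stated bound follows by weakening.

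Your route is therefore shorter and slightly sharper. The paper's route has the expository benefit of first isolating a schedule-free per-stage constant $\rho_t$ and a general product bound $\prod_t(1+\rho_t)-1$. Your factorization gives an analogous schedule-free constant, $G\sqrt{L_t}\,((1-\eta_t\Lambda)^{-M}-1)/\Lambda\le\rho_t$, as well.
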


The bound in \eqref{eq:sloo-acc-gap-nonconvex} shows that the sampling effect remains a \emph{relative perturbation} to the optimization effect. In particular, if $\eta_0\Lambda \ll 1$, the accumulated gap over $H$ stages grows only logarithmically
\begin{equation*}
    e^\alpha H^\alpha-1 \approx \alpha(1 + \log H)
\end{equation*}

\begin{wraptable}{r}{0.55\textwidth}
\vspace{-10pt}
\centering
\footnotesize
\caption{Linear data modeling scores of attribution methods on \texttt{Ant-v4} and \texttt{HalfCheetah-v4}.}
\label{tab:lds-results}
\begin{tabular}{lcc}
\toprule
Score & Ant-v4 LDS $\uparrow$ & HalfCheetah-v4 LDS $\uparrow$ \\
\midrule
Random & $-0.031 \pm 0.058$ & $-0.063 \pm 0.032$ \\
TracIn & $0.055 \pm 0.102$ & $0.138 \pm 0.037$ \\
RL-IF-opt & $0.191 \pm 0.037$ & $0.218 \pm 0.027$ \\
RL-IF-full & $0.192 \pm 0.033$ & $0.218 \pm 0.026$ \\
\bottomrule
\end{tabular}
\end{wraptable}

Hence, in this regime, the accumulated gap remains well-conditioned over long horizons, and the sampling effect acts as a controlled perturbation to the optimization dynamics. Accordingly, we define the optimization-only variant \textbf{RL-IF-opt} as the estimator obtained by setting $B_t^{(j)} = 0$ in the recursion, or equivalently by dropping the sampling terms in the closed-form operator $C_t$. The original estimator that retains both optimization and sampling terms is denoted \textbf{RL-IF-full}. The gap suggests that RL-IF is often dominated by its optimization component, making RL-IF-opt a compelling simplification in practice. The proof of Theorem~\ref{thm:sloo_acc_gap_nonconvex} is deferred to Appendix~\ref{app:acc-bound}.

\paragraph{Experimental comparison.}
To further validate the theoretical findings, we evaluate attribution quality using \emph{Linear Data Modeling Scores} (LDS)~\citep{park2023trak}. LDS measures the Spearman correlation between aggregated attribution scores and the observed effect of removing subsets of training data under retraining. We compare the two variants of RL-IF, along with TracIn~\citep{pruthi2020estimating}, an attribution method that does not account for optimization or sampling dynamics, and a random baseline. We consider KL-regularized PPO on two consecutive training rounds, where retraining is performed with future rollouts resampled using the updated policy. The results are shown in Table~\ref{tab:lds-results}. Both RL-IF-opt and RL-IF-full substantially outperform local attribution methods such as TracIn, while exhibiting nearly identical performance to each other. This small gap suggests that, empirically, the sampling effect contributes only marginally beyond the optimization effect. Additional setup and evaluation details are provided in Appendix~\ref{app:lds-details}.

\section{Data Selection with RL-IF}
\label{sec:data+selection}

We demonstrate a direct downstream application of RL-IF in online RL: using RL-IF attribution scores to guide data selection over multiple training rounds. The goal is to identify and remove training samples that are harmful to downstream performance, thereby improving both compute efficiency and sample efficiency. We build our data selection algorithm on KL-regularized PPO, since the clipping objective in \eqref{eq:ppo-clip} is piecewise-defined and introduces non-smooth behavior at the clipping boundary, which is less compatible with the first-order approximation used in RL-IF. Our source code is available at \url{https://github.com/sx-liu/RL-IF}.

\begin{wrapfigure}{r}{0.58\textwidth}
\vspace{-22pt}
\begin{minipage}{0.57\textwidth}
\begin{algorithm}[H]
\caption{RINSE: RL-IF-based Sample Selection}
\label{alg:onestep-selection}
\begin{algorithmic}[1]
\Require Initial policy $\pi_{\theta_k}$, horizon $H$, discard ratio $\rho$
\State $\theta \gets \theta_k$
\For{$h = 1, \dots, H$}
    \State $D_{k+h-1} \gets \textsc{CollectRollout}(\theta)$
    \State $r(h) \gets \textsc{Evaluate}(D_{k+h-1})$
    \If{$h > 1$ and $r(h-1) > r(h)$}
        \State \textbf{break}
    \EndIf
    \State $\widehat{H} \gets h$
    \State $\theta \gets \textsc{TrainRollout}(\theta; D_{k+h-1})$
\EndFor
\State Compute optimization-only RL-IF-opt scores $\{s(x)\}$ for all samples $x \in \bigcup_{t=k}^{k+\widehat{H}-1} D_t$
\State $\mathcal{R} \gets \textsc{NegBottomFraction}(\{s(x)\}, \rho)$
\State Construct filtered datasets $\widetilde{D}_t = D_t \setminus \mathcal{R}$ for $t = k,\ldots,k+\widehat{H}-1$
\State $\theta \gets \theta_k$
\For{$h = 1, \dots, \widehat{H}$}
    \State $\theta \gets \textsc{TrainRollout}(\theta; \widetilde{D}_{k+h-1})$
\EndFor
\State \Return $\pi_{\theta_{k+\widehat{H}}}$
\end{algorithmic}
\end{algorithm}
\end{minipage}
\vspace{-10pt}
\end{wrapfigure}

\textbf{RINSE: RL-IF-based Sample Selection.} The RL-IF formulation naturally leads to a data-selection algorithm, which we call \textbf{RINSE} (\textbf{R}L-\textbf{IN}fluence-based \textbf{S}ample s\textbf{E}lection), over a finite horizon window. Suppose we aim to improve PPO training for $H$ rounds, starting from round $k$. We first run the standard PPO pipeline for $H$ subsequent rounds, obtaining policies $\pi_{\theta_{k+1}}, \ldots, \pi_{\theta_{k+H}}$ and the corresponding training trajectory $\{D_t^{(j)}\}_{t=k}^{k+H-1}$. This rollout trajectory provides the reference path on which long-horizon influence is evaluated.

Given this trajectory, we estimate the influence of samples in $\{D_t\}_{t=k}^{k+H-1}$ using RL-IF, with the final policy $\pi_{\theta_{k+H}}$ in this window as the reference policy in \eqref{eq:prelim-return-target}. Because the optimization-only approximation achieves comparable empirical performance at substantially lower cost, RINSE uses the optimization-only variant, RL-IF-opt, in the selection procedure. We rank samples by their attribution scores, remove those with the most negative influence based on discard ratio $\rho$, and then retrain on the filtered datasets to obtain updated policies $\pi_{\theta_{k+1}'}, \ldots, \pi_{\theta_{k+H}'}$. Since this retraining stage is inherently off-policy, we use an adjusted retraining objective,
\begin{equation*}
L^{\mathrm{KL\text{-}Retrain}}(\theta)
:=
\mathbb{E}\big[
r_t(\theta)\hat{A}_t
-\beta \,
\mathrm{KL}\big(
\pi_{\theta_{t}'}(\cdot\mid s_t)
\,\|\,
\pi_\theta(\cdot\mid s_t)
\big)
\big],
\end{equation*}
which limits policy drift from the reference policies along the retraining trajectory.

\textit{\underline{Design of attribution horizon.}} The effectiveness of attribution-based selection depends on the attribution horizon and the choice of reference policy. Short horizons lead to overly myopic attribution that reduces to local effects, while excessively long horizons introduce severe off-policy mismatch between the selected data and later training dynamics. Moreover, the reference policy must be sufficiently strong to provide a reliable attribution target.

To balance these considerations, we adopt a \textbf{window-based data selection} strategy, which partitions training into consecutive windows and performs attribution within each window independently. This maintains proximity to the current training dynamics while still capturing non-local effects over multiple future updates.

We further use an \textbf{adaptive horizon selection} procedure. Given a maximum window size $H_{\max}$, we progressively extend the horizon under standard training and monitor the mean episode return $r(h)$ of policy $\pi_{\theta_{k+h-1}}$. We stop when either $h=H_{\max}$ or performance ceases to improve, i.e., $r(h-1) > r(h)$, and select the largest valid horizon for attribution and data selection.

The complete procedure is summarized in Algorithm~\ref{alg:onestep-selection}, while ablations of our design choices are presented in Appendix~\ref{app:ablation}.

\begin{figure*}[t]
    \centering

    \begin{subfigure}[t]{\textwidth}
        \centering
        \caption{\small Test returns over training rounds.\quad \protect\textcolor[HTML]{000000}{\protect\raisebox{0.45ex}{\protect\rule{1.8em}{1pt}}}~Standard\quad \protect\textcolor[HTML]{CC79A7}{\protect\raisebox{0.45ex}{\protect\rule{1.8em}{1pt}}}~Random\quad \protect\textcolor[HTML]{0072B2}{\protect\raisebox{0.45ex}{\protect\rule{1.8em}{1pt}}}~\textsc{IIF}\quad \protect\textcolor[HTML]{D55E00}{\protect\raisebox{0.45ex}{\protect\rule{1.8em}{1pt}}}~RINSE }

        \includegraphics[width=0.32\textwidth]{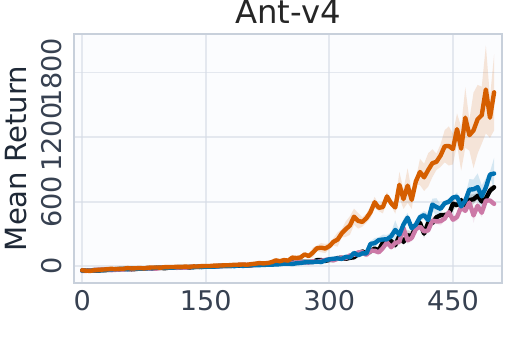}
        \hfill
        \includegraphics[width=0.32\textwidth]{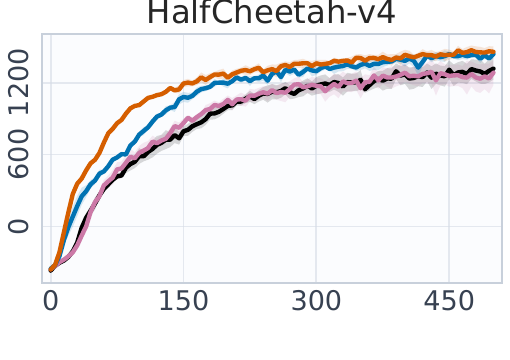}
        \hfill
        \includegraphics[width=0.32\textwidth]{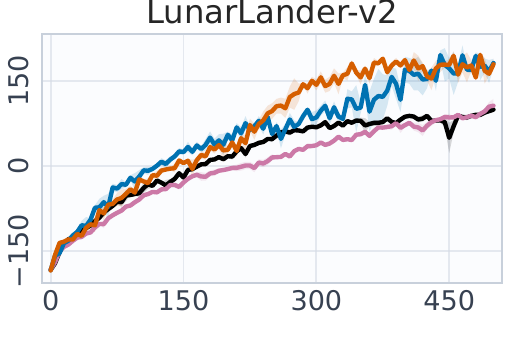}

        \includegraphics[width=0.32\textwidth]{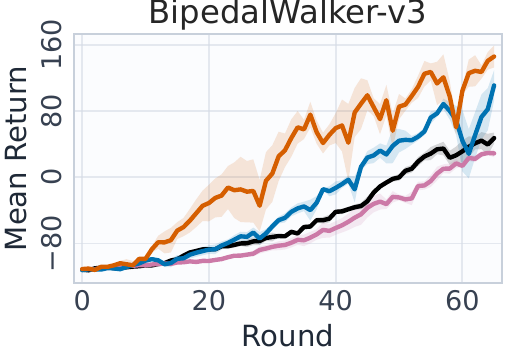}
        \hfill
        \includegraphics[width=0.32\textwidth]{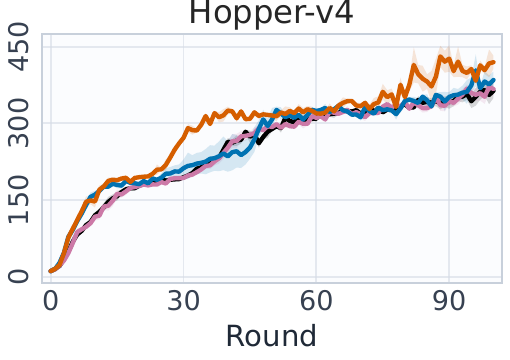}
        \hfill
        \includegraphics[width=0.32\textwidth]{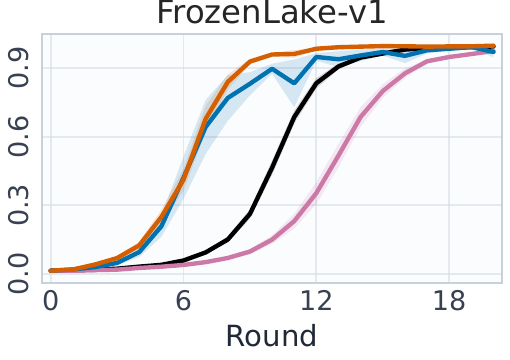}

        \label{fig:data-selection-curves}
    \end{subfigure}

    \vspace{0.8em}

    \begin{subfigure}[t]{\textwidth}
        \centering
        \footnotesize
        \caption{Relative efficiency and peak reward. TE/SE are normalized to standard PPO; lower is better.}
        \setlength{\tabcolsep}{3.5pt}
        \begin{tabular*}{\linewidth}{@{\extracolsep{\fill}}llcccccc@{}}
        \toprule
        Metric & Method & Ant & HalfCheetah & LunarLander & BipedalWalker & Hopper & FrozenLake \\
        \midrule
        \multirow{2}{*}{$\text{TE}\%\downarrow$}
        & IIF & 110.2{\tiny$\pm$3.7} & 75.0{\tiny$\pm$11.0} & \textbf{76.1{\tiny$\pm$8.5}} & 94.9{\tiny$\pm$6.0} & 94.5{\tiny$\pm$18.8} & 133.4{\tiny$\pm$26.0} \\
        & RINSE & \textbf{103.0{\tiny$\pm$4.9}} & \textbf{63.9{\tiny$\pm$10.4}} & 89.9{\tiny$\pm$6.8} & \textbf{67.9{\tiny$\pm$13.6}} & \textbf{88.9{\tiny$\pm$16.7}} & \textbf{116.3{\tiny$\pm$12.5}} \\
        \midrule
        \multirow{2}{*}{$\text{SE}\%\downarrow$}
        & IIF & 86.3{\tiny$\pm$3.3} & 57.7{\tiny$\pm$8.5} & 52.6{\tiny$\pm$6.0} & 74.8{\tiny$\pm$5.0} & 72.2{\tiny$\pm$14.6} & 64.2{\tiny$\pm$12.2} \\
        & RINSE & \textbf{74.9{\tiny$\pm$3.7}} & \textbf{45.4{\tiny$\pm$7.3}} & \textbf{49.8{\tiny$\pm$3.7}} & \textbf{47.1{\tiny$\pm$9.7}} & \textbf{63.9{\tiny$\pm$11.9}} & \textbf{53.5{\tiny$\pm$5.0}} \\
        \midrule
        \multirow{3}{*}{\shortstack{Max\\Reward}$\uparrow$}
        & Standard & 734.3{\tiny$\pm$14.4} & 1316.6{\tiny$\pm$91.6} & 99.1{\tiny$\pm$4.3} & 47.5{\tiny$\pm$6.7} & 365.3{\tiny$\pm$11.6} & 0.99{\tiny$\pm$0.00} \\
        & IIF & 861.6{\tiny$\pm$146.2} & 1437.9{\tiny$\pm$20.2} & 195.0{\tiny$\pm$8.3} & 111.1{\tiny$\pm$18.4} & 403.7{\tiny$\pm$31.3} & 0.99{\tiny$\pm$0.00} \\
        & RINSE & \textbf{1642.8{\tiny$\pm$408.6}} & \textbf{1470.2{\tiny$\pm$39.3}} & \textbf{195.3{\tiny$\pm$3.8}} & \textbf{146.0{\tiny$\pm$13.3}} & \textbf{431.0{\tiny$\pm$20.7}} & \textbf{1.00{\tiny$\pm$0.00}} \\
        \bottomrule
        \end{tabular*}
        \label{fig:data-selection-efficiency}
    \end{subfigure}

    \caption{ Data-selection results under rollout filtering. (a)~Test returns over training rounds. (b)~Relative time efficiency, sample efficiency, and peak mean return across environments. RINSE largely improves return and efficiency over standard training and local baselines. }
    \label{fig:data-selection-results}
    \vspace{-20pt}
\end{figure*}

\paragraph{Experimental setup.}
We evaluate our method on six standard RL benchmarks spanning continuous control and navigation tasks from the Gym/Gymnasium ecosystem, including MuJoCo domains~\citep{todorov2012mujoco,brockman2016openai,towers2024gymnasium}. We use PPO with a KL-regularized objective as the underlying training algorithm and fix the maximum horizon to $H_{\max}=5$ without tuning. We compare \textsc{RINSE} against three baselines: standard PPO, \textbf{Random} filtering, and \textbf{IIF}~\citep{hu2025snapshotinfluencelocaldata}, a local-attribution filtering method based on TracIn scores. For fairness, Random and IIF use the same discard ratio $\rho$ as \textsc{RINSE}.

We report both \textbf{time efficiency (TE)} and \textbf{sample efficiency (SE)} relative to standard PPO. For each method, we measure the runtime and the amount of training data required to reach a set of target performance levels, normalized such that the PPO baseline corresponds to $100\%$. We also report full training curves. Additional setup and metric details are provided in Appendix~\ref{app:experiment-rl} and Appendix~\ref{app:efficiency-details}.

\paragraph{Results.}
Figure~\ref{fig:data-selection-results} shows that \textsc{RINSE} consistently outperforms both standard PPO and IIF. Although \textsc{RINSE} invokes \textsc{TRAINROLLOUT} twice per round and incurs additional influence-computation overhead, it reduces the number of required rollouts by an average of $42.8\%$ to achieve the same performance as standard training. Consequently, \textsc{RINSE} remains more efficient than the local-attribution baseline on nearly all environments.

\textsc{RINSE} achieves the best sample efficiency and highest peak return across all six tasks, with especially large improvements on challenging continuous-control environments such as \texttt{Ant} and \texttt{BipedalWalker}. These results suggest that RL-IF-opt captures the most useful component of long-term data influence for multi-round training improvement. We further report the wall-clock breakdown in Appendix~\ref{app:experiment-rt}.

\subsection{Extension to RLHF}
We also consider an RLHF-style toxicity-mitigation task. Starting from GPT-Neo 2.7B~\citep{black2021gpt}, each training round collects prompt-response trajectories from the current policy, assigns a toxicity-based reward to each response using a pretrained classifier~\citep{vidgen2021lftw,gehman2020realtoxicityprompts}, and then updates the policy following standard RLHF pipelines~\citep{ouyang2022training}. We use the KL-regularized PPO objective described in Section~\ref{sec:prelim-background}. For additional efficiency, we use the Generalized Gauss--Newton matrix (GGN)~\citep{martens2020new,wang2024capturingtemporaldependencetraining} to approximate the Hessian term in \eqref{eq:rlinf-recursion-main}, completely reducing the HVP to efficient vector product operations. Further details are in Appendix~\ref{app:experiment-rlhf}.

\begin{wrapfigure}{r}{0.45\textwidth}
    \vspace{-20pt}
    \centering
    \includegraphics[width=0.45\textwidth]{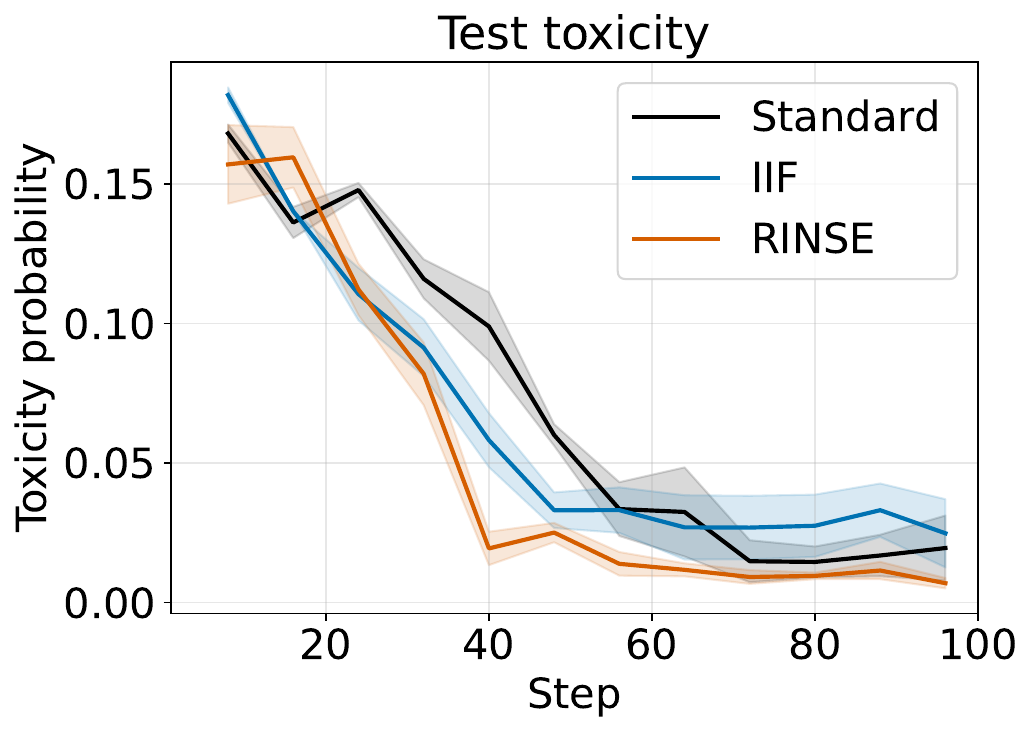}

    \includegraphics[width=0.43\textwidth]{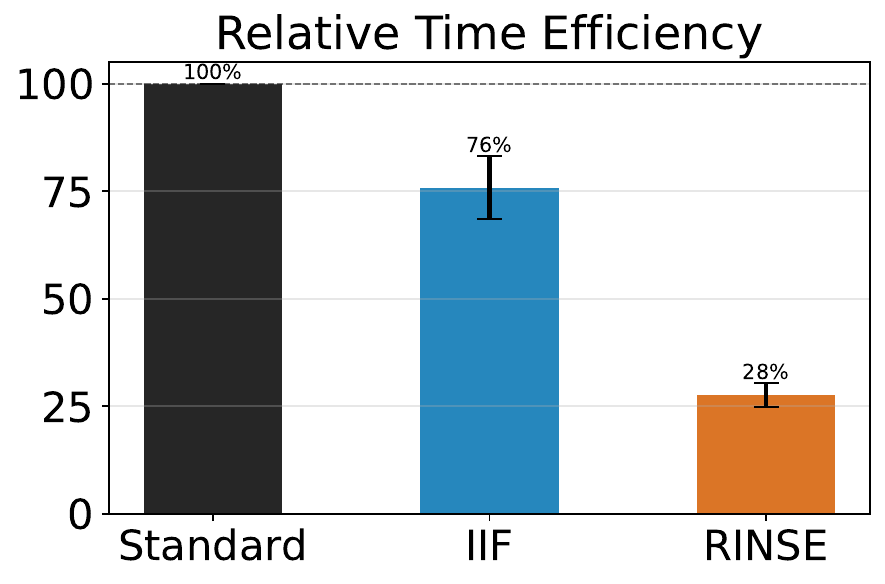}

    \caption{Toxicity-mitigation results in the RLHF setting.}
    \label{fig:rlhf-toxicity-results}
    \vspace{-40pt}
\end{wrapfigure}

\paragraph{Results.} Figure~\ref{fig:rlhf-toxicity-results} shows the toxicity trajectory during training: RINSE achieves lower toxicity probability than the baselines across most of training. Notably, in terms of relative TE on reducing test toxicity, RINSE achieves an average training speedup of more than $3\times$ over the standard training pipeline and more than $2\times$ over IIF. We also provide detailed training details, additional analysis and runtime breakdown in Appendix~\ref{app:results-rlhf}. These results suggest that RINSE extends naturally beyond continuous-control benchmarks to large-scale language-model alignment.

\section{Related Work}
\label{sec:related_Work}

\paragraph{Training-data attribution and data valuation.}
Our work is closely related to the literature on training-data attribution and data valuation~\citep{deng2025survey}. Classical influence functions approximate leave-one-out effects through second-order sensitivity analysis~\citep{koh2017understanding}, while later methods improve scalability by tracing gradient alignment along optimization trajectories, as in TracIn~\citep{pruthi2020estimating}, by using representer-point decompositions~\citep{yeh2018representer}, or by designing scalable estimators for modern large-scale models~\citep{park2023trak}. Recent work further studies attribution in temporally dependent or sequential training settings~\citep{wang2024capturingtemporaldependencetraining}. A closely related line studies \emph{data valuation}, where training examples are assigned value according to their contribution to model quality or downstream utility~\citep{ghorbani2019data,jia2019towards,kwon2021beta}. Other work uses per-example influence signals for data cleaning, robust reweighting, and data selection~\citep{hara2019datacleansingmodelstrained,xia2024less,wang2024greats}.

Most existing attribution methods assume a fixed training distribution. In contrast, our work studies attribution in online RL, where the training distribution evolves with the policy itself. We focus specifically on \emph{non-local} attribution: instead of measuring only the immediate effect of a sample on a nearby update, we study how perturbations propagate through future optimization and future data collection over an extended training trajectory.

\paragraph{Data-centric perspectives in RL.}
A growing line of work studies RL from a data-centric perspective, asking which trajectories, transitions, or training contexts are most useful for improving policy learning. In off-policy RL, this question appears naturally in experience replay and replay prioritization, where informative transitions are replayed more frequently to improve sample efficiency~\citep{schaul2015prioritized}. Related ideas also appear in curriculum learning and curriculum design, where training tasks or experiences are structured adaptively to improve learning dynamics~\citep{florensa2017reverse,jiang2021prioritized}. Recent work also explores trajectory filtering, selective data reuse, and replay-buffer management for improving robustness and efficiency in RL training.

Our work is most closely related to recent attribution methods for online RL. In on-policy training, trajectories are generated by the current policy and immediately reused for optimization~\citep{williams1992simple,sutton1999policy,schulman2015trust,schulman2017proximal}. Consequently, perturbing a single training occurrence can influence not only later parameters, but also the future rollout distribution induced by the updated policy. \citet{hu2025snapshotinfluencelocaldata} take an important first step in this direction by proposing a local attribution framework for PPO that measures the influence of rollout records relative to a nearby checkpoint within a fixed rollout buffer. Our work builds directly on this line, but studies a fundamentally different target: instead of estimating the immediate local effect of a sample, we study its \emph{non-local} influence on future learning outcomes over subsequent training rounds.

\section{Conclusion and Limitations}
\label{sec:conclusions}

We study data attribution in online reinforcement learning, where the training data are generated by the evolving policy rather than drawn from a fixed distribution. To capture this setting, we formulate a trajectory-level leave-one-out quantity, \emph{RL-LOO}, and derive a first-order estimator, \emph{RL-IF}, that propagates the effect of one training occurrence through both later optimization and future sampling. This yields a counterfactual perspective on data value in online RL that goes beyond local attribution. Empirically, the resulting optimization-only approximation, \emph{RL-IF-opt}, is already effective enough to support a practical filtering method, \textbf{RINSE}, which improves efficiency and final performance on standard benchmarks and in an RLHF-style toxicity-mitigation setting.

Our work also has several limitations. First, the current development is specialized to PPO-style on-policy training~\citep{schulman2017proximal} and to a first-order approximation of the leave-one-out trajectory. Extending the framework to other online RL algorithms, including closely related trust-region methods such as TRPO~\citep{schulman2015trust} and other settings with substantially different sampling or update structures, remains an important direction. Second, our derivation is based on SGD-style~\citep{amari1993backpropagation} updates, whereas modern deep RL and RLHF pipelines often rely on adaptive optimizers such as Adam~\citep{kingma2014adam}. Although the method works empirically in our Adam-based experiments, a derivation that explicitly accounts for adaptive preconditioning would make the estimator more principled~\citep{deng2026faithfultrajectorybaseddataattribution}. Finally, our analysis and experiments suggest that the sampling term is often smaller than the optimization term in the regimes we study, but this need not hold universally; understanding when sampling effects become dominant is an open question.

\subsection*{Acknowledgements}
This project was partially supported by an ACCESS allocation CIS260068. JM would like to acknowledge support by NSF CAREER Award No. 2542856. YH and HZ are partially supported by an NSF grant No. 2416897 and an NSF CAREER Award No. 2442290.

\bibliographystyle{plainnat}
\bibliography{references}

\appendix

\section{Additional Details on RL-IF}

\subsection{Derivation of RL Leave-One-Out and RL-IF}
\label{app:rlloo-derivation}

In this section, we derive the recursion underlying RL-IF. To estimate the RL-LOO effect of a sample $z^* \in D_k^{(l)}$, we inject a perturbation at round $k$, step $l$, while keeping all subsequent updates unchanged:
\begin{equation}
\theta_k^{(l)}(\varepsilon)
=
\theta_k^{(l-1)}
-
\frac{\eta_k}{b}
\sum_{x\in D_k^{(l)}\setminus\{z^*\}}
\nabla_\theta \ell(\theta_k^{(l-1)},x)
-
\frac{\eta_k}{b}(1-\varepsilon)
\nabla_\theta \ell(\theta_k^{(l-1)},z^*).
\label{eq:rlloo-perturbed-update}
\end{equation}
The injected perturbation at $(k,l)$ is obtained by differentiating \eqref{eq:rlloo-perturbed-update}:
\begin{equation*}
\Delta_k^{(l)}
:=
\left.
\frac{\partial \theta_k^{(l)}(\varepsilon)}{\partial \varepsilon}
\right|_{\varepsilon=0}
=
\frac{\eta_k}{b}
\nabla_\theta \ell(\theta_k^{(l-1)},z^*).
\end{equation*}
To characterize the terminal perturbation $\Delta_T^{(M)}$, we introduce a stage-update operator that makes explicit the dependence on the current iterate and the rollout distribution. For each update $(t,j)$, define
\begin{equation*}
F_t^{(j)}(\vartheta, \beta)
:=
\vartheta
-
\frac{\eta_t}{b}
\sum_{x \sim S_t^{(j)}(\beta)}
\nabla_\theta \ell(\vartheta, x),
\qquad
S_t^{(j)}(\beta)
:=
\begin{cases}
D_t^{(j)}, & \text{if } t = k \text{ and } j > l, \\
P_t(\cdot \mid \beta), & \text{if } t \ge k+1.
\end{cases}
\end{equation*}
Here $D_t^{(j)}$ denotes a fixed minibatch, while $P_t(\cdot\mid \beta)$ denotes the rollout distribution induced by $\beta$. The perturbed trajectory follows $\theta_t^{(j)}(\varepsilon) = F_t^{(j)}\big( \theta_t^{(j-1)}(\varepsilon), \theta_t^{(0)}(\varepsilon) \big).$ Recall the definition of perturbation $\Delta_t^{(j)}$ in \eqref{eq:sloo-delta-definition}, and by the chain rule, we have
\begin{equation}
\Delta_t^{(j)}
=
\frac{\partial F_t^{(j)}}{\partial \vartheta}
\Delta_t^{(j-1)}
+
\frac{\partial F_t^{(j)}}{\partial \beta}
\Delta_{t}^{(0)}.
\label{eq:rlloo-unified-chain}
\end{equation}
\paragraph{Optimization effect.}
Differentiating $F_t^{(j)}$ with respect to the current iterate $\vartheta$ gives $\frac{\partial F_t^{(j)}}{\partial \vartheta} = A_t^{(j)} = I-\frac{\eta_t}{b}H_t^{(j)},$ where $H_t^{(j)} := \sum_{x\in D_t^{(j)}} \nabla_\theta^2 \ell(\theta_t^{(j-1)},x)$.
\paragraph{Sampling effect.}
For $t=k$ and $j>l$, the minibatch is fixed, so $S_t^{(j)}(\beta)=D_t^{(j)}$ and $\frac{\partial F_t^{(j)}}{\partial \beta}=0.$ Thus, by~\eqref{eq:rlloo-unified-chain} we have $\Delta_k^{(j)} = A_k^{(j)}\Delta_k^{(j-1)}$. For future stages $t\ge k+1$, the sampling distribution depends on $\beta$. Using the score-function identity,
\begin{equation*}
\nabla_\beta
\mathbb{E}_{x\sim P_t(\cdot\mid \beta)}[g(x)]
=
\mathbb{E}_{x\sim P_t(\cdot\mid \beta)}
\left[
g(x)\nabla_\beta \log P_t(x\mid \beta)^\top
\right],
\end{equation*}
with $g(x)=\nabla_\theta \ell(\vartheta,x)$, we obtain
\begin{equation*}
\frac{\partial F_t^{(j)}}{\partial \beta}
=
-\eta_t
\mathbb{E}_{x\sim P_t(\cdot\mid \beta)}
\left[
\nabla_\theta \ell(\vartheta,x)
\nabla_\beta \log P_t(x\mid \beta)^\top
\right].
\end{equation*}
Approximating the expectation with the sampled minibatch yields $\frac{\partial F_t^{(j)}}{\partial \beta} \approx B_t^{(j)}$, where
\begin{equation*}
B_t^{(j)}
:=
-\frac{\eta_t}{b}
\sum_{x\in D_t^{(j)}}
\nabla_\theta \ell(\theta_t^{(j-1)},x)
\nabla_\theta \log P_t(x\mid \theta_t^{(0)})^\top .
\end{equation*}
Substituting into \eqref{eq:rlloo-unified-chain}, we obtain the unified recursion $\Delta_t^{(j)} = A_t^{(j)}\Delta_t^{(j-1)} + B_t^{(j)}\Delta_{t}^{(0)},$ where the second term is active only for $t\ge k+1$.

\subsection{Efficient Computation through Adjoint Formulation}

Even with the recursive formulation of RL-IF, a key challenge is efficient computation. Directly materializing $\Delta_t^{(j)}$ for each candidate sample is computationally prohibitive. Instead, we evaluate RL-IF by propagating an adjoint vector backward through the training trajectory. This converts the forward perturbation recursion into a single backward pass involving Hessian--vector products and score-function vector products. The resulting procedure is summarized in Algorithm~\ref{alg:rlinf}.

The algorithm returns $\nabla_\theta f(\theta_T^{(M)})^\top \Delta_T^{(M)}$, which approximates the RL leave-one-out effect $\mathrm{RL}\text{-}\mathrm{LOO}^{(k,l)}(z^*; f)$. Notably, all influence scores across the trajectory can be computed within a single backward pass.

\begin{algorithm}[H]
\caption{One-pass RL-IF}
\label{alg:rlinf}
\begin{algorithmic}[1]
\Require Training trajectory $\{\theta_t^{(j)}, D_t^{(j)}\}$, attribution target $f$, target sample $z^*\in D_k^{(l)}$
\State $U \gets \nabla_\theta f(\theta_T^{(M)})$
\For{$t = T$ down to $1$}
    \State $U_{\mathrm{prev}} \gets 0$
    \For{$j = M$ down to $1$}
        \If{$t = k$ and $j = l$}
            \State \Return $\left\langle U,\; \frac{\eta_k}{b} \nabla_\theta \ell(\theta_k^{(l-1)}, z^*) \right\rangle$
        \EndIf
        \State \textbf{// sampling contribution}
        \State $U_{\mathrm{prev}} \gets U_{\mathrm{prev}} + B_t^{(j)\top} U$
        \State \textbf{// optimization step}
        \State $U \gets \left(I - \frac{\eta_t}{b} H_t^{(j)}\right) U$
    \EndFor
    \State $U \gets U + U_{\mathrm{prev}}$
\EndFor
\end{algorithmic}
\end{algorithm}
We prove that Algorithm~\ref{alg:rlinf} returns the first-order RL-IF estimator. Recall the forward perturbation recursion in \eqref{eq:rlinf-recursion-main}. For fixed-minibatch updates, the sampling term is inactive, which is equivalently captured by setting $B_t^{(j)}=0$. The target scalar is $\nabla_\theta f(\theta_T^{(M)})^\top \Delta_T^{(M)}.$ We initialize
\begin{equation*}
U_T^{(M)}
=
\nabla_\theta f(\theta_T^{(M)}).
\end{equation*}
Suppose that, before processing update $(t,j)$ in the backward pass, the current adjoint vector is $U_t^{(j)}$. Then, using \eqref{eq:rlinf-recursion-main}, we have
\begin{align}
\left(U_t^{(j)}\right)^\top \Delta_t^{(j)}
&=
\left(U_t^{(j)}\right)^\top
\left(
A_t^{(j)}\Delta_t^{(j-1)}
+
B_t^{(j)}\Delta_{t}^{(0)}
\right) \nonumber\\
&=
\left(A_t^{(j)\top}U_t^{(j)}\right)^\top
\Delta_t^{(j-1)}
+
\left(B_t^{(j)\top}U_t^{(j)}\right)^\top
\Delta_{t}^{(0)}.
\label{eq:rlinf-adjoint-one-step}
\end{align}

Therefore, the adjoint contribution to the previous minibatch iterate is $U_t^{(j-1)} = A_t^{(j)\top}U_t^{(j)}$, which gives the optimization update:
\begin{equation*}
U
\gets
A_t^{(j)\top}U
=
\left(I-\frac{\eta_t}{b}H_t^{(j)}\right)U,
\end{equation*}
where we use the symmetry of $H_t^{(j)}$. The second term in \eqref{eq:rlinf-adjoint-one-step} contributes to the previous-stage perturbation $\Delta_{t}^{(0)}$ rather than to $\Delta_t^{(j-1)}$. Hence, during the backward sweep over stage $t$, all such cross-stage contributions are accumulated as
\begin{equation*}
U_{\mathrm{prev}}
\gets
U_{\mathrm{prev}}
+
B_t^{(j)\top}U.
\end{equation*}
After all updates in stage $t$ have been processed, the adjoint propagated through the optimization dynamics and the accumulated sampling contribution both act on the same previous-stage perturbation $\Delta_{t}^{(0)}$. Thus they are combined by
\begin{equation*}
U \gets U + U_{\mathrm{prev}}.
\end{equation*}
By repeated application of \eqref{eq:rlinf-adjoint-one-step}, the algorithm maintains the invariant that the current adjoint $U$ satisfies
\begin{equation*}
\nabla_\theta f(\theta_T^{(M)})^\top \Delta_T^{(M)}
=
U^\top \Delta_t^{(j)} + U_{\mathrm{prev}}^\top \Delta_t^{(0)}
\end{equation*}
whenever the backward pass is positioned at iterate $\theta_t^{(j)}$. In particular, when the backward pass reaches the injection point $(k,l)$, $\Delta_k^{(0)}=0$, so we have $\nabla_\theta f(\theta_T^{(M)})^\top \Delta_T^{(M)} = U^\top \Delta_k^{(l)}$. The injected perturbation is $\Delta_k^{(l)} = \frac{\eta_k}{b} \nabla_\theta \ell(\theta_k^{(l-1)},z^*).$ Substituting this expression gives
\begin{equation*}
\nabla_\theta f(\theta_T^{(M)})^\top \Delta_T^{(M)}
=
\left\langle
U,\;
\frac{\eta_k}{b}
\nabla_\theta \ell(\theta_k^{(l-1)},z^*)
\right\rangle .
\end{equation*}
This is exactly the value returned by Algorithm~\ref{alg:rlinf}. Therefore, Algorithm~\ref{alg:rlinf} correctly computes the first-order RL-IF estimator for the RL-LOO effect.

\subsection{Closed-Form RL-IF: Proof of Equation \eqref{eq:sloo-loss-closed}}
In this section, we show that the recursion in \eqref{eq:rlinf-recursion-main} directly leads to a compact closed form. For each stage $t>k$, define
\begin{equation}
A_t
:=
\prod_{j=M}^{1} A_t^{(j)},
\qquad
C_t
:=
A_t
+
\sum_{q=1}^{M}
\left(\prod_{j=M}^{q+1} A_t^{(j)}\right) B_t^{(q)}.
\label{eq:rlinf-stage-operator}
\end{equation}
Here, products are ordered from right to left along the forward trajectory; in particular,
\[
\prod_{j=M}^{1}A_t^{(j)}
=
A_t^{(M)}A_t^{(M-1)}\cdots A_t^{(1)}.
\]
Then
\begin{equation}
\mathrm{RL}\text{-}\mathrm{IF}^{(k,l)}(z^*;f)
=
\nabla_\theta f(\theta_T^{(M)})^\top
\left(\prod_{t=T}^{k+1} C_t\right)
\left(\prod_{j=M}^{l+1} A_k^{(j)}\right)
\frac{\eta_k}{b}
\nabla_\theta \ell(\theta_k^{(l-1)},z^*).
\label{eq:rlinf-loss-closed}
\end{equation}

\paragraph{Proof.}
We prove the closed form in \eqref{eq:rlinf-loss-closed} by induction.

First, consider a future stage $t>k$. Recall $\Delta_t^{(j)} = A_t^{(j)}\Delta_t^{(j-1)} + B_t^{(j)}\Delta_{t}^{(0)}.$ We claim that, for every $r=1,\dots,M$,
\begin{equation}
\Delta_t^{(r)}
=
\left(\prod_{j=r}^{1}A_t^{(j)}\right)\Delta_t^{(0)}
+
\sum_{q=1}^{r}
\left(\prod_{j=r}^{q+1}A_t^{(j)}\right)
B_t^{(q)}\Delta_{t}^{(0)},
\label{eq:rlinf-induction-claim}
\end{equation}
where an empty product is the identity. For $r=1$, \eqref{eq:rlinf-induction-claim} gives
\[
\Delta_t^{(1)}
=
A_t^{(1)}\Delta_t^{(0)}
+
B_t^{(1)}\Delta_{t}^{(0)},
\]
which is exactly \eqref{eq:rlinf-recursion-main}. Assume \eqref{eq:rlinf-induction-claim} holds for some $r-1$. Then
\begin{align*}
\Delta_t^{(r)}
&=
A_t^{(r)}\Delta_t^{(r-1)}
+
B_t^{(r)}\Delta_{t}^{(0)} \\
&=
A_t^{(r)}
\left[
\left(\prod_{j=r-1}^{1}A_t^{(j)}\right)\Delta_t^{(0)}
+
\sum_{q=1}^{r-1}
\left(\prod_{j=r-1}^{q+1}A_t^{(j)}\right)
B_t^{(q)}\Delta_{t}^{(0)}
\right]
+
B_t^{(r)}\Delta_{t}^{(0)} \\
&=
\left(\prod_{j=r}^{1}A_t^{(j)}\right)\Delta_t^{(0)}
+
\sum_{q=1}^{r-1}
\left(\prod_{j=r}^{q+1}A_t^{(j)}\right)
B_t^{(q)}\Delta_{t}^{(0)}
+
B_t^{(r)}\Delta_{t}^{(0)} \\
&=
\left(\prod_{j=r}^{1}A_t^{(j)}\right)\Delta_t^{(0)}
+
\sum_{q=1}^{r}
\left(\prod_{j=r}^{q+1}A_t^{(j)}\right)
B_t^{(q)}\Delta_{t}^{(0)}.
\end{align*}
Thus the claim holds for $r$, and therefore for all $r=1,\dots,M$. Setting $r=M$ gives
\begin{equation*}
\Delta_t^{(M)}
=
\left(\prod_{j=M}^{1}A_t^{(j)}\right)\Delta_t^{(0)}
+
\sum_{q=1}^{M}
\left(\prod_{j=M}^{q+1}A_t^{(j)}\right)
B_t^{(q)}\Delta_{t}^{(0)}.
\end{equation*}
Factoring out $\Delta_t^{(0)}$, we obtain
\begin{equation}
\Delta_t^{(M)}
=
\left[
\prod_{j=M}^{1}A_t^{(j)}
+
\sum_{q=1}^{M}
\left(\prod_{j=M}^{q+1}A_t^{(j)}\right)
B_t^{(q)}
\right]\Delta_{t}^{(0)}
=
C_t\Delta_{t}^{(0)}.
\label{eq:rlinf-stage-map-proof}
\end{equation}
Next, for the injection stage $k$, the perturbation after $(k,l)$ is $\Delta_k^{(l)} = \frac{\eta_k}{b} \nabla_\theta \ell(\theta_k^{(l-1)},z^*)$. For $j=l+1,\dots,M$, the minibatches are fixed, so $\Delta_k^{(j)} = A_k^{(j)}\Delta_k^{(j-1)}$. By another direct induction over $j$, this gives
\begin{equation}
\Delta_k^{(M)}
=
\left(\prod_{j=M}^{l+1}A_k^{(j)}\right)
\Delta_k^{(l)}.
\label{eq:rlinf-stage-k-proof}
\end{equation}

Finally, applying \eqref{eq:rlinf-stage-map-proof} inductively over stages $t=k+1,\dots,T$ yields
\begin{equation*}
\Delta_T^{(M)}
=
\left(\prod_{t=T}^{k+1}C_t\right)
\Delta_k^{(M)}.
\end{equation*}
Substituting \eqref{eq:rlinf-stage-k-proof} and the expression for $\Delta_k^{(l)}$ gives
\begin{equation*}
\Delta_T^{(M)}
=
\left(\prod_{t=T}^{k+1} C_t\right)
\left(\prod_{j=M}^{l+1} A_k^{(j)}\right)
\frac{\eta_k}{b}
\nabla_\theta \ell(\theta_k^{(l-1)},z^*).
\end{equation*}
Multiplying both sides by $\nabla_\theta f(\theta_T^{(M)})^\top$ proves
\begin{equation*}
\mathrm{RL}\text{-}\mathrm{IF}^{(k,l)}(z^*;f)
=
\nabla_\theta f(\theta_T^{(M)})^\top
\left(\prod_{t=T}^{k+1} C_t\right)
\left(\prod_{j=M}^{l+1} A_k^{(j)}\right)
\frac{\eta_k}{b}
\nabla_\theta \ell(\theta_k^{(l-1)},z^*).
\end{equation*}
This completes the proof.

\subsection{Construction and Justification of the Reference Target}
\label{app:reference-target}

For an attribution window starting at round $k$, we use its terminal checkpoint $\bar\theta:=\theta_{k+H}$ as the reference policy. We collect an on-policy rollout $D_{k+H}$ from $\pi_{\bar\theta}$ and compute the reference advantages using the critic associated with this checkpoint and the same generalized advantage estimation (GAE) procedure and hyperparameters as PPO. In particular, writing $V^{\mathrm{ref}}$ for the reference critic, the estimates take the form
\begin{align}
\delta_t^{\mathrm{ref}}
&=r_t+\gamma V^{\mathrm{ref}}(s_{t+1})-V^{\mathrm{ref}}(s_t),
\\
\hat A_t^{\mathrm{ref}}
&=\sum_{j=0}^{L_t-1}(\gamma\lambda)^j\delta_{t+j}^{\mathrm{ref}},
\end{align}
where $L_t$ is the number of available steps used in the estimate, with the same termination masking and truncation bootstrapping as in PPO. The reference rollout, critic, and advantage estimates are held fixed during attribution. This construction does not involve tuning over candidate reference advantages.

The terminal checkpoint is used because RL-LOO evaluates the effect of removing a training occurrence on the policy obtained at the end of the window. The following result explains the connection between the reference target and the expected return of that policy.

\paragraph{Ideal reference target.}
For the formal statement, let $0\leq\gamma<1$ and define the normalized discounted return and discounted state occupancy by
\begin{align}
J(\theta)
&:=(1-\gamma)\mathbb E_{\tau\sim\pi_\theta}
\left[\sum_{t=0}^{\infty}\gamma^t r_t\right],
\\
d^{\pi_\theta}(s)
&:=(1-\gamma)\sum_{t=0}^{\infty}\gamma^t
\Pr_{\pi_\theta}(s_t=s).
\end{align}
The initial-state distribution is fixed. Let $A^{\pi_{\bar\theta}}$ denote the exact advantage for the usual discounted return. Following the negative-target convention in the main text, define
\begin{equation}
f_{\mathrm{ref}}(\phi)
:=-\mathbb E_{\substack{s\sim d^{\pi_{\bar\theta}}\\
a\sim\pi_{\bar\theta}(\cdot\mid s)}}
\left[\log\pi_\phi(a\mid s)A^{\pi_{\bar\theta}}(s,a)\right].
\label{eq:ideal-reference-target}
\end{equation}
The reference distribution and advantages in this expression are fixed as $\phi$ varies. This normalization makes the return-gradient identity explicit; the corresponding statement for unnormalized discounted return differs by the constant factor $1-\gamma$.

\begin{proposition}{Reference target and terminal return}{reference-return}
Assume the policy-gradient identity holds and that $f_{\mathrm{ref}}$ and $J$ are twice continuously differentiable in a neighborhood of $\bar\theta$. Let $\bar\theta_{-z}$ be the terminal parameter after deleting occurrence $z$, and set $\Delta_z:=\bar\theta_{-z}-\bar\theta$. For sufficiently small $\Delta_z$,
\begin{equation}
\mathrm{RL\text{-}LOO}(z;f_{\mathrm{ref}})
=J(\bar\theta)-J(\bar\theta_{-z})
+O(\|\Delta_z\|_2^2).
\label{eq:reference-return-loo}
\end{equation}
\end{proposition}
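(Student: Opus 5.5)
The plan is to Taylor-expand both $f_{\mathrm{ref}}$ and $J$ to second order around $\bar\theta$ and show that their gradients at $\bar\theta$ are negatives of each other. Then the first-order terms of $f_{\mathrm{ref}}(\bar\theta_{-z})-f_{\mathrm{ref}}(\bar\theta)$ and $J(\bar\theta)-J(\bar\theta_{-z})$ coincide, and only quadratic remainders survive. By definition, $\mathrm{RL\text{-}LOO}(z;f_{\mathrm{ref}})=f_{\mathrm{ref}}(\bar\theta+\Delta_z)-f_{\mathrm{ref}}(\bar\theta)$, so the whole statement reduces to a gradient identity plus two Taylor remainders.

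\textbf{Step 1 (gradient of the target).} In \eqref{eq:ideal-reference-target} the sampling distribution $d^{\pi_{\bar\theta}}\otimes\pi_{\bar\theta}$ and the advantage $A^{\pi_{\bar\theta}}$ are frozen. Differentiating in $\phi$ therefore acts only on $\log\pi_\phi$, and we may interchange gradient and expectation under the assumed regularity. Evaluating at $\phi=\bar\theta$ gives
\[
\nabla f_{\mathrm{ref}}(\bar\theta)=-\mathbb E_{s\sim d^{\pi_{\bar\theta}},\,a\sim\pi_{\bar\theta}(\cdot\mid s)}\big[\nabla_\theta\log\pi_{\bar\theta}(a\mid s)\,A^{\pi_{\bar\theta}}(s,a)\big].
\]

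\textbf{Step 2 (policy-gradient identity).} With the normalized return $J$ and normalized occupancy $d^{\pi_\theta}$, the policy-gradient theorem in its advantage form reads
\[
\nabla J(\theta)=\mathbb E_{s\sim d^{\pi_\theta},\,a\sim\pi_\theta}\big[\nabla\log\pi_\theta(a\mid s)A^{\pi_\theta}(s,a)\big].
\]
Replacing $Q$ by $A$ is justified by the baseline identity $\mathbb E_{a\sim\pi_\theta(\cdot\mid s)}[\nabla\log\pi_\theta(a\mid s)]=0$. The factors of $1-\gamma$ match because both $J$ and $d$ carry one factor of $(1-\gamma)$. At $\theta=\bar\theta$ this gives $\nabla J(\bar\theta)=-\nabla f_{\mathrm{ref}}(\bar\theta)$. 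The only care needed in the proof is checking this normalization bookkeeping.

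\textbf{Step 3 (Taylor).} Since $f_{\mathrm{ref}}$ and $J$ are $C^2$ near $\bar\theta$, their Hessians are bounded on a small closed ball around $\bar\theta$. For $\|\Delta_z\|$ small enough that $\bar\theta+\Delta_z$ lies in that ball, Taylor's theorem with remainder gives
\[
f_{\mathrm{ref}}(\bar\theta+\Delta_z)-f_{\mathrm{ref}}(\bar\theta)=\nabla f_{\mathrm{ref}}(\bar\theta)^\top\Delta_z+O(\|\Delta_z\|^2)
\]
and
\[
J(\bar\theta)-J(\bar\theta+\Delta_z)=-\nabla J(\bar\theta)^\top\Delta_z+O(\|\Delta_z\|^2).
\]
By Step 2 the two linear terms are equal. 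Subtracting the expansions yields \eqref{eq:reference-return-loo}.

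The main obstacle is Step 2, and specifically stating precisely which version of the policy-gradient identity is being assumed. Everything else is routine calculus. Note that the proposition does not assert that $\Delta_z$ itself is small for a given occurrence; it only gives the expansion once it is.
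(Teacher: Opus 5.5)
Your proposal is correct and follows essentially the same route as the paper: use the policy-gradient identity, with the reference distribution and advantages frozen, to get $\nabla f_{\mathrm{ref}}(\bar\theta)=-\nabla J(\bar\theta)$, then combine the two first-order Taylor expansions at $\bar\theta$ so that only the $O(\|\Delta_z\|_2^2)$ remainders survive. Your explicit checks of the $(1-\gamma)$ normalization and of the baseline argument for replacing $Q$ by $A$ spell out what the paper asserts in a single line.
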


\paragraph{Proof.}
Since the reference distribution and advantages are fixed, the policy-gradient theorem gives
\begin{align}
\left.\nabla_\phi f_{\mathrm{ref}}(\phi)\right|_{\phi=\bar\theta}
&=-\mathbb E_{\substack{s\sim d^{\pi_{\bar\theta}}\\
a\sim\pi_{\bar\theta}(\cdot\mid s)}}
\left[\nabla_\theta\log\pi_{\bar\theta}(a\mid s)
A^{\pi_{\bar\theta}}(s,a)\right]
\\
&=-\nabla J(\bar\theta).
\end{align}
Taylor expansion at $\bar\theta$ therefore yields
\begin{align}
f_{\mathrm{ref}}(\bar\theta_{-z})-f_{\mathrm{ref}}(\bar\theta)
&=-\nabla J(\bar\theta)^\top\Delta_z+O(\|\Delta_z\|_2^2),
\\
J(\bar\theta_{-z})-J(\bar\theta)
&=\nabla J(\bar\theta)^\top\Delta_z+O(\|\Delta_z\|_2^2).
\end{align}
Combining the two expansions proves \eqref{eq:reference-return-loo}.

Thus, with this negative target, a negative RL-LOO value locally corresponds to an increase in terminal return after deletion. RL-IF estimates this counterfactual through the first-order parameter perturbation. In practice, we replace the population target and exact advantages with the empirical reference rollout and GAE estimates, introducing sampling and advantage-estimation error in addition to the first-order approximation. Sensitivity to the reference construction is evaluated in Appendix~\ref{app:reference-ablation}.

\subsubsection{Choice of Reference Policy and Rollout}
\label{app:reference-ablation}

We examine sensitivity to the reference construction by comparing the default terminal checkpoint and its own rollout, $(\theta_{k+H},D_{k+H})$, with two alternatives. The first uses the initial checkpoint of the attribution window and its corresponding rollout, $(\theta_k,D_k)$, following the IIF-style reference construction. The second retains the terminal checkpoint but uses the union of rollout data collected within the window. Although the latter uses more reference data, these rollouts were generated by earlier policies rather than solely by the terminal policy.

\begin{figure}[t]
    \centering
    {\small \textcolor[HTML]{009E73}{\raisebox{0.45ex}{\rule{1.6em}{1pt}}}~$(\theta_{k+H},\,D_{k+H})$ (default)\quad \textcolor[HTML]{E69F00}{\raisebox{0.45ex}{\rule{1.6em}{1pt}}}~$(\theta_{k},\,D_{k})$\quad \textcolor[HTML]{0072B2}{\raisebox{0.45ex}{\rule{1.6em}{1pt}}}~$(\theta_{k+H},\,\bigcup D)$}\\[0.4em]
    \includegraphics[width=0.49\linewidth]{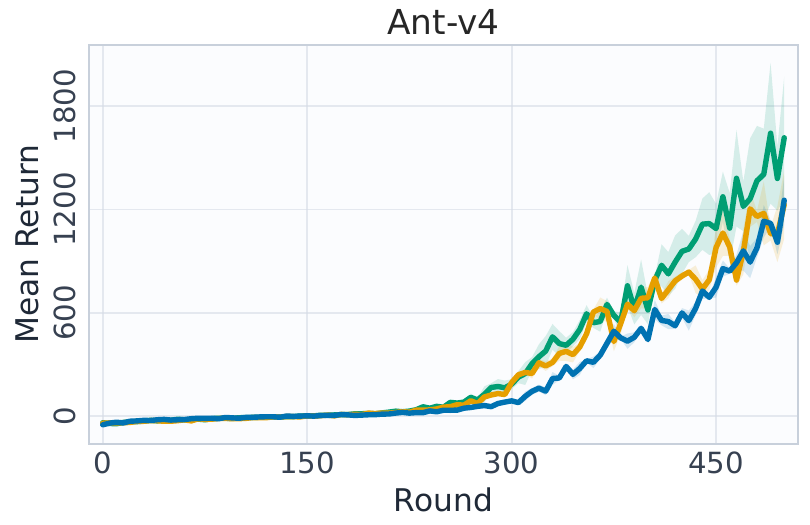}
    \hspace{1mm}
    \includegraphics[width=0.49\linewidth]{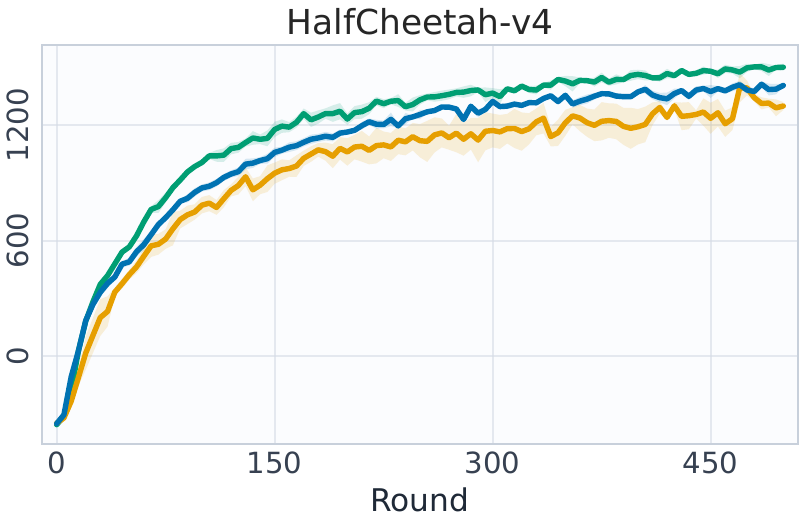}
    \caption{ Reference-construction ablation on Ant and HalfCheetah. Curves show mean return at the reported training rounds; shaded regions indicate one standard error. }
    \label{fig:reference-ablation}
\end{figure}

Figure~\ref{fig:reference-ablation} shows that the default reference achieves the highest final mean return on both environments. These results support pairing the terminal reference policy with its own on-policy rollout.

\section{Detailed Theorem Derivations}
\label{app:theorem-derivations}

In this section, we provide detailed proofs for the theorems.

\subsection{Uniform Approximation Error of First-Order RL-IF}
\label{app:theorem-approx-error}

\begin{theorem}{Uniform approximation error of first-order RL-IF}{RL-IF-uniform-error-bound}
Assume that $\ell(\theta;x)$ is twice differentiable, that its gradient norm is bounded by $G$, that its Hessian norm is bounded by $\Lambda$, and that its Hessian is $L_H$-Lipschitz continuous in $\theta$. Assume also that the rollout distribution $P_t(x\mid\theta)$ has score norm bounded by $S$ and score Hessian norm bounded by $L_P$. Furthermore, assume that the learning rate satisfies $M\overline{\Lambda}\eta_t \leq \frac{1}{T-k}$ for every $k\leq t\leq T$. Then for the first-order RL-IF estimator $\Delta_T^{(M)}$, the approximation satisfies
\[
\bigl\|\widetilde{\theta}_T^{(M)}-\theta_T^{(M)}-\Delta_T^{(M)}\bigr\|_2
\le
\frac{C_{\text{smooth}}}{M^2b^2},
\]
where $C_{\text{smooth}}=\frac{c G^2(L_H+\overline{\Lambda}S+GL_P)}{\overline{\Lambda}^3}$ and $\overline{\Lambda} = \Lambda + GS$ are smoothness constants.
\end{theorem}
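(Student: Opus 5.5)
We bound the second-order remainder of the perturbed trajectory by comparing it with its linearization along the whole training path, treating the sampling channel as part of a smooth ``effective'' update map. The first step is to pass from the sampled update to its population counterpart for $t\ge k+1$. Write the update as $\theta\mapsto \vartheta-\eta_t\,\mathbb{E}_{x\sim P_t(\cdot\mid\beta)}[\nabla_\theta\ell(\vartheta,x)]$, i.e. as the map $F_t^{(j)}(\vartheta,\beta)$ of \eqref{eq: stage update operator}. In this form the Jacobians $\partial_\vartheta F$ and $\partial_\beta F$ are exactly $A_t^{(j)}$ and the score-function term whose sampled version is $B_t^{(j)}$. By the score-function identity, $\|\partial_\beta F\|\le \eta_t G S$. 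Combined with $\|\partial_\vartheta F - I\|\le\eta_t\Lambda$, this gives a joint Lipschitz constant of the form $1+\eta_t\overline{\Lambda}$ with $\overline{\Lambda}=\Lambda+GS$. This is why $\overline{\Lambda}$ appears in the statement.

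The second derivatives of $F$ must also be bounded. Differentiating once more gives three terms:
\[
\partial^2_{\vartheta\vartheta}F \lesssim \eta_t L_H,\qquad
\partial^2_{\vartheta\beta}F\lesssim \eta_t\Lambda S,\qquad
\partial^2_{\beta\beta}F\lesssim \eta_t G(L_P+S^2).
\]
The last term comes from the second derivative of $P$, namely $\nabla^2 P/P=\nabla^2\log P+\nabla\log P\nabla\log P^\top$. Absorbing $S^2$ into $\overline{\Lambda}S$, we obtain a single curvature constant $K\eta_t$ with $K\asymp L_H+\overline{\Lambda}S+GL_P$.

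Next, we define the error $e_t^{(j)}:=\widetilde\theta_t^{(j)}-\theta_t^{(j)}-\Delta_t^{(j)}$ (with the perturbation at $\varepsilon=1$) together with the actual deviation $\delta_t^{(j)}:=\widetilde\theta_t^{(j)}-\theta_t^{(j)}$. A Taylor expansion of $F$ about $(\theta_t^{(j-1)},\theta_t^{(0)})$ gives
\[
e_t^{(j)}=A_t^{(j)}e_t^{(j-1)}+B_t^{(j)}e_t^{(0)}+R_t^{(j)},\qquad
\|R_t^{(j)}\|\le \tfrac{K\eta_t}{2}\bigl(\|\delta_t^{(j-1)}\|+\|\delta_t^{(0)}\|\bigr)^2.
\]
Before using this, we need an a priori bound on the deviation itself. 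The Lipschitz bound gives
\[
\|\delta\|\le \frac{\eta_k G}{b}\prod(1+\eta_t\overline\Lambda)\le \frac{\eta_kG}{b}\,e^{\sum_t M\eta_t\overline\Lambda}\le \frac{e\,\eta_kG}{b},
\]
where the last step uses the assumption $M\overline\Lambda\eta_t\le 1/(T-k)$, which makes the exponent sum at most $1$. The same argument controls the propagator of the error recursion by a factor of $e$ (per stage, the $B$ term only adds to the constant, again through the Lipschitz bound). Summing the remainders over at most $M(T-k)$ steps then gives
\[
\|e_T^{(M)}\|\lesssim \sum_{t,j} K\eta_t\|\delta\|^2\lesssim K\,\frac{\eta_k^2G^2}{b^2}\sum_t M\eta_t \lesssim \frac{K G^2\eta_k^2}{b^2\overline{\Lambda}}.
\]
Finally, inserting $\eta_k\le 1/(M\overline\Lambda(T-k))\le 1/(M\overline\Lambda)$ yields $\frac{cG^2K}{\overline\Lambda^3 M^2b^2}$, which is $C_{\text{smooth}}/(M^2b^2)$ as claimed.

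\textbf{Main obstacle.} The delicate step is the sampling channel. The true perturbed process resamples from $P_t(\cdot\mid\widetilde\theta_t)$, whereas the recursion uses a fixed minibatch, so the claim only makes sense for the population (expected) update map. I will therefore state and prove the bound for $F$ defined with expectations, using $B_t^{(j)}$ as its Jacobian. I will also take care that the $\beta$-dependence, which enters at every step through the stage-start iterate $\theta_t^{(0)}$, is handled by a two-argument Taylor expansion so that the cross-stage error $e_t^{(0)}$ is propagated rather than double counted.
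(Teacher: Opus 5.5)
Your proposal is correct and follows essentially the same route as the paper's proof. Both arguments use the population update map, a joint second-order remainder bound for $F_t^{(j)}$ with $K=L_H+\overline{\Lambda}S+GL_P$, the error recursion $e_t^{(j)}=A_t^{(j)}e_t^{(j-1)}+B_t^{(j)}e_t^{(0)}+r_t^{(j)}$, an a priori deviation bound from the per-step growth factor $1+\eta_t\overline{\Lambda}$, and a summation of the remainders using $\sum_t M\eta_t\le 2/\overline{\Lambda}$.

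The only slip is the claim that the exponent $\sum_t M\eta_t\overline{\Lambda}$ is at most $1$. The remaining $M-l$ steps of round $k$ also contribute, so the correct bound is $1+\tfrac{1}{T-k}\le 2$, giving $e^2$ rather than $e$ in the a priori bound; this changes only the universal constant $c$.
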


\begin{proof}
We analyze the population update map and we do not consider minibatch noise in this theorem. We define the true counterfactual perturbation as
\[
\delta_t^{(j)} := \widetilde{\theta}_t^{(j)}-\theta_t^{(j)}.
\]
The first-order RL-IF perturbation $\Delta_t^{(j)}$ is defined by \eqref{eq:sloo-delta-definition} and satisfies the recursion in Proposition~\ref{prop:sloo-recursion}. At the injection point $(k,l)$, the two coincide exactly:
\[
\delta_k^{(l)}=\Delta_k^{(l)}
=
\frac{\eta_k}{b}\nabla_\theta \ell(\theta_k^{(l-1)},z^*).
\]
Our goal is to bound the discrepancy
\[
e_t^{(j)}:=\delta_t^{(j)}-\Delta_t^{(j)}
\]
at the terminal point $(T,M)$. Recall the stage update map $F_t^{(j)}(\vartheta,\beta)$ defined in \eqref{eq: stage update operator}. By construction, the post-intervention trajectory is generated by
\[
\theta_t^{(j)}(\varepsilon)
=
F_t^{(j)}\!\left(\theta_t^{(j-1)}(\varepsilon),\theta_t^{(0)}(\varepsilon)\right).
\]
Since $\ell(\theta;x)$ has an $L_H$-Lipschitz Hessian, and the rollout distribution $P_t(x\mid\theta)$ has score norm bounded by $S$ and score Hessian norm bounded by $L_P$, the map $F_t^{(j)}(\vartheta,\beta)$ is jointly smooth in $(\vartheta,\beta)$. We next prove a bound for the remainder:
\begin{equation}
\label{eq:Ft-joint-remainder}
\begin{aligned}
\Bigl\|
&F_t^{(j)}(\vartheta+u,\beta+v)
-F_t^{(j)}(\vartheta,\beta)
-\partial_\vartheta F_t^{(j)}(\vartheta,\beta)u
-\partial_\beta F_t^{(j)}(\vartheta,\beta)v
\Bigr\|_2
\le
\frac{\eta_tK}{2}
(\|u\|_2+\|v\|_2)^2,
\end{aligned}
\end{equation}
where $ K:=L_H+\overline{\Lambda}S+GL_P, \overline{\Lambda}:=\Lambda+GS. $ We denote $m_t^{(j)}(\vartheta,\beta)=\mathbb E_{x\sim P_t(\cdot\mid\beta)}[\nabla_\theta \ell(\vartheta,x)]$, for a future round $t>k$, write $m=m_t^{(j)}$ and define
\[
R_m(u,v)
:=
m(\vartheta+u,\beta+v)-m(\vartheta,\beta)
-\partial_\vartheta m(\vartheta,\beta)u
-\partial_\beta m(\vartheta,\beta)v.
\]
We decompose this remainder as
\[
\begin{aligned}
R_m(u,v)
&=
\underbrace{
m(\vartheta+u,\beta+v)-m(\vartheta,\beta+v)
-\partial_\vartheta m(\vartheta,\beta+v)u
}_{(\mathrm{I})}\\
&\quad+
\underbrace{
\bigl(
\partial_\vartheta m(\vartheta,\beta+v)
-\partial_\vartheta m(\vartheta,\beta)
\bigr)u
}_{(\mathrm{II})}\\
&\quad+
\underbrace{
m(\vartheta,\beta+v)-m(\vartheta,\beta)
-\partial_\beta m(\vartheta,\beta)v
}_{(\mathrm{III})}.
\end{aligned}
\]

\paragraph{Bounding (I).}
For fixed $\beta+v$,
\[
\partial_\vartheta m(\vartheta,\beta+v)
=
\mathbb E_{x\sim P_t(\cdot\mid\beta+v)}
\bigl[\nabla_\theta^2\ell(\vartheta,x)\bigr].
\]
Since the loss Hessian is $L_H$-Lipschitz, we have
\[
\|(\mathrm{I})\|_2
\le
\frac{L_H}{2}\|u\|_2^2.
\]

\paragraph{Bounding (II).}
Using the score-function identity, the Hessian bound $\Lambda$, and the score bound $S$, we obtain
\[
\bigl\|
\partial_\vartheta m(\vartheta,\beta+v)
-\partial_\vartheta m(\vartheta,\beta)
\bigr\|_2
\le
\Lambda S\|v\|_2.
\]
Consequently,
\[
\|(\mathrm{II})\|_2
\le
\Lambda S\|u\|_2\|v\|_2.
\]

\paragraph{Bounding (III).}
The sampling derivative is
\[
\partial_\beta m(\vartheta,\beta)
=
\mathbb E_{x\sim P_t(\cdot\mid\beta)}
\left[
\nabla_\theta\ell(\vartheta,x)
\nabla_\beta\log P_t(x\mid\beta)^\top
\right].
\]
Differentiating once more in $\beta$ produces a score-Hessian term and a score outer-product term. The gradient bound $G$, score bound $S$, and score-Hessian bound $L_P$ therefore imply
\[
\|(\mathrm{III})\|_2
\le
\frac{G(L_P+S^2)}{2}\|v\|_2^2.
\]
Combining the three bounds yields
\[
\begin{aligned}
\|R_m(u,v)\|_2
&\le
\frac{L_H}{2}\|u\|_2^2
+\Lambda S\|u\|_2\|v\|_2
+\frac{G(L_P+S^2)}{2}\|v\|_2^2\\
&\le
\frac{K}{2}(\|u\|_2+\|v\|_2)^2,
\end{aligned}
\]
where $K=L_H+\Lambda S+G(L_P+S^2).$ Since $F_t^{(j)}(\vartheta,\beta) =\vartheta-\eta_t m_t^{(j)}(\vartheta,\beta)$, this proves \eqref{eq:Ft-joint-remainder} for $t>k$. For the remaining updates of round $k$, the map is independent of $\beta$, and the fixed-minibatch average has an $L_H$-Lipschitz Jacobian. Its remainder is bounded by $\eta_kL_H\|u\|_2^2/2$, so \eqref{eq:Ft-joint-remainder} holds there as well. Applying \eqref{eq:Ft-joint-remainder} with $u=\delta_t^{(j-1)}$ and $v=\delta_t^{(0)}$ gives
\[
\delta_t^{(j)}
=
A_t^{(j)}\delta_t^{(j-1)}
+
B_t^{(j)}\delta_t^{(0)}
+
r_t^{(j)},
\]
where
\begin{equation}
\label{eq:rt-bound}
\|r_t^{(j)}\|_2
\le
\frac{\eta_tK}{2}
\left(
\|\delta_t^{(j-1)}\|_2+\|\delta_t^{(0)}\|_2
\right)^2.
\end{equation}
Subtracting the first-order recursion yields
\begin{equation}
\label{eq:error-main-recursion}
e_t^{(j)}
=
A_t^{(j)}e_t^{(j-1)}
+
B_t^{(j)}e_t^{(0)}
+
r_t^{(j)}.
\end{equation}
This also applies to the remaining updates of round $k$, with $B_k^{(j)}=0$. We next control the perturbation size. The uniform Jacobian bounds are
\[
\|\partial_\vartheta F_t^{(j)}(\vartheta,\beta)\|_2
\le 1+\eta_t\Lambda,
\qquad
\|\partial_\beta F_t^{(j)}(\vartheta,\beta)\|_2
\le \eta_tGS.
\]
Integrating these bounds along the line segment between the two inputs gives
\[
\begin{aligned}
&\|F_t^{(j)}(\vartheta+u,\beta+v)
-F_t^{(j)}(\vartheta,\beta)\|_2\le
(1+\eta_t\Lambda)\|u\|_2+\eta_tGS\|v\|_2.
\end{aligned}
\]
For round $k$, the second term can be omitted. Thus both the true perturbation and its first-order approximation grow by at most a factor $q_t:=1+\eta_t\overline{\Lambda}$ per update, when measured relative to the maximum perturbation encountered so far. The learning-rate assumption implies
\[
\sum_{t=k}^T M\eta_t\overline{\Lambda}
\le
\frac{T-k+1}{T-k}
\le 2.
\]
Starting from the injection bound $\|\delta_k^{(l)}\|_2=\|\Delta_k^{(l)}\|_2 \le\eta_kG/b$, we therefore obtain
\begin{equation}
\label{eq:uniform-delta-bound}
\begin{aligned}
D
&:=
\max_{(t,j)\text{ at or after }(k,l)}
\left\{
\|\delta_t^{(j)}\|_2,\|\Delta_t^{(j)}\|_2
\right\}
\le
\frac{\eta_kG}{b}
\prod_{t=k}^T q_t^M
\le
\frac{e^2\eta_kG}{b}
\le
\frac{e^2G}{M\overline{\Lambda}b}.
\end{aligned}
\end{equation}
Here we used $T-k\ge1$ in the last inequality. The sampling perturbation at the start of round $k$ is zero. Consequently, \eqref{eq:rt-bound} gives, for every update after the injection,
\[
\|r_t^{(j)}\|_2
\le
R_t,
\qquad
R_t:=2\eta_tKD^2.
\]

Define
\[
E_k:=\max_{l\le j\le M}\|e_k^{(j)}\|_2,
\qquad
E_t:=\max_{0\le j\le M}\|e_t^{(j)}\|_2
\quad(t>k),
\]
and let $a_t:=M\eta_t\overline{\Lambda}$. Taking norms in \eqref{eq:error-main-recursion} and applying the triangle inequality yields
\[
\|e_t^{(j)}\|_2
\le
q_t
\max\left\{
\|e_t^{(j-1)}\|_2,\|e_t^{(0)}\|_2
\right\}
+
R_t.
\]
For $t>k$, using $e_t^{(0)}=e_{t-1}^{(M)}$ and iterating within the round gives
\[
\begin{aligned}
E_t
&\le
q_t^M E_{t-1}
+
R_t\sum_{i=0}^{M-1}q_t^i\\
&\le
e^{a_t}\bigl(E_{t-1}+MR_t\bigr).
\end{aligned}
\]
In round $k$, we instead start from $e_k^{(l)}=0$. The remaining $M-l$ updates satisfy
\[
E_k
\le
R_k\sum_{i=0}^{M-l-1}q_k^i
\le
Me^{a_k}R_k,
\]
where the sum is empty when $l=M$. In particular, the error accumulated during round $k$ is included.

Unrolling the round-wise bounds therefore gives
\[
\begin{aligned}
E_T
&\le
M\sum_{t=k}^T
R_t\exp\left(\sum_{s=t}^T a_s\right)\\
&\le
2e^2MKD^2\sum_{t=k}^T\eta_t.
\end{aligned}
\]
Finally,
\[
\sum_{t=k}^T\eta_t
\le
\frac{T-k+1}{(T-k)M\overline{\Lambda}}
\le
\frac{2}{M\overline{\Lambda}}.
\]
Combining this with \eqref{eq:uniform-delta-bound}, we conclude
\[
\begin{aligned}
\bigl\|
\widetilde{\theta}_T^{(M)}
-\theta_T^{(M)}-\Delta_T^{(M)}
\bigr\|_2
&=
\|e_T^{(M)}\|_2\\
&\le
E_T\\
&\le
\frac{4e^2KD^2}{\overline{\Lambda}}\\
&\le
\frac{
4e^6G^2
\bigl(L_H+\overline{\Lambda}S+GL_P\bigr)
}{
M^2b^2\overline{\Lambda}^{\,3}
}.
\end{aligned}
\]
This proves the result with the universal constant $c=4e^6$.
\end{proof}

\subsection{Relative Bound between Stage-Wise Operators}
\begin{theorem}{Nonconvex relative gap between sampling-aware and optimization-only operators}{}
\label{thm:sloo_fb_gap_nonconvex}
Under the smoothness and bounded-gradient assumptions in Theorem~\ref{thm:RL-IF-uniform-error-bound}, assume in addition that, for each future stage $t>k$, the sampling family $P_t(\cdot\mid\theta)$ is KL-smooth with score second moment bounded by $L_t$, and that $\eta_t\Lambda < 1$.

Then, for every future stage $t>k$,
\begin{equation}
\|C_t - A_t\|_2
\le
\rho_t \|A_t\|_2,
\qquad
\rho_t
:=
\frac{
G\sqrt{L_t}\,
\frac{(1+\eta_t\Lambda)^{M}-1}{\Lambda}
}{
(1-\eta_t\Lambda)^{M}
}.
\label{eq:sloo-gap-rel-ncvx}
\end{equation}
\end{theorem}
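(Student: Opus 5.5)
We bound $C_t - A_t = \sum_{q=1}^{M}\bigl(\prod_{j=M}^{q+1}A_t^{(j)}\bigr)B_t^{(q)}$ directly, and then convert the absolute bound into one relative to $\|A_t\|_2$ via a lower bound on the singular values of the factors $A_t^{(j)}$.

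\textbf{Step 1: bound each sampling term.} Using the population form of $B_t^{(q)}$ from Appendix~\ref{app:rlloo-derivation}, $B_t^{(q)} = -\eta_t\,\mathbb E_{x\sim P_t}[\nabla_\theta\ell(\vartheta,x)\nabla\log P_t(x\mid\theta_t)^\top]$, we apply Cauchy--Schwarz to the expectation, for unit vectors $u,w$:
\[
|u^\top B_t^{(q)}w|\le \eta_t\,\mathbb E\bigl[|u^\top\nabla\ell|\,|\nabla\log P_t^\top w|\bigr]\le \eta_t G\,\bigl(\mathbb E[(\nabla\log P_t^\top w)^2]\bigr)^{1/2}.
\]
KL-smoothness with score second moment (Fisher information) bounded by $L_t$ then gives $\|B_t^{(q)}\|_2\le \eta_t G\sqrt{L_t}$.

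\textbf{Step 2: bound the propagation factors.} Since $\|H_t^{(j)}\|$ (normalized per minibatch) is at most $\Lambda$, we have $\|A_t^{(j)}\|_2\le 1+\eta_t\Lambda$. This yields
\[
\|C_t-A_t\|_2\le \eta_tG\sqrt{L_t}\sum_{q=1}^{M}(1+\eta_t\Lambda)^{M-q}
= G\sqrt{L_t}\,\frac{(1+\eta_t\Lambda)^M-1}{\Lambda},
\]
which is exactly the geometric-sum numerator of $\rho_t$.

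\textbf{Step 3: make the bound relative.} Because $\eta_t\Lambda<1$ and $H_t^{(j)}$ is symmetric with eigenvalues in $[-\Lambda,\Lambda]$, each $A_t^{(j)}$ has smallest singular value at least $1-\eta_t\Lambda>0$. Hence $A_t$ is invertible with $\sigma_{\min}(A_t)\ge(1-\eta_t\Lambda)^M$, so
\[
\|A_t\|_2\ge\sigma_{\min}(A_t)\ge(1-\eta_t\Lambda)^M,
\qquad\text{i.e.}\qquad 1\le \|A_t\|_2/(1-\eta_t\Lambda)^M.
\]
Multiplying the Step~2 bound by this factor gives $\|C_t-A_t\|_2\le\rho_t\|A_t\|_2$. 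Alternatively, one can write $C_t-A_t=A_t\sum_q\bigl(\prod_{j=1}^{q}(A_t^{(j)})^{-1}\bigr)B_t^{(q)}$ and bound the inverses by $(1-\eta_t\Lambda)^{-q}$, which gives a slightly sharper constant; I would present whichever matches $\rho_t$ exactly, and the first route does.

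\textbf{Main obstacle.} The delicate step is Step~1: the score-function identity yields an expectation, whereas $B_t^{(q)}$ in the recursion is the minibatch estimate. I would state the bound for the population operator, consistent with the population-map convention used in Theorem~\ref{thm:RL-IF-uniform-error-bound}. I would also interpret ``KL-smooth with score second moment $L_t$'' as $\sup_{\|w\|=1}\mathbb E[(\nabla\log P_t^\top w)^2]\le L_t$, which is what the Cauchy--Schwarz step requires. The $1/b$ normalization of $H_t^{(j)}$ must be handled consistently so that the Hessian factor is bounded by $\Lambda$ rather than $b\Lambda$.
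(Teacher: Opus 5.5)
Your proposal is correct and follows the paper's proof essentially step for step: the same expansion of $C_t-A_t$, the bounds $\|A_t^{(j)}\|_2\le 1+\eta_t\Lambda$ and $\|B_t^{(q)}\|_2\le\eta_t G\sqrt{L_t}$ combined through the geometric sum, and the lower bound $\|A_t\|_2\ge(1-\eta_t\Lambda)^M$ to make the bound relative. Your Cauchy--Schwarz argument for the $B_t^{(q)}$ bound makes explicit a step the paper only asserts, and so does your caveat that the bound holds for the population operator rather than an arbitrary minibatch average.
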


\begin{proof}
Fix a future stage $t>k$. Recall that
\[
C_t
=
A_t
+
\sum_{q=1}^{M}
\left(
\prod_{j=M}^{q+1} A_t^{(j)}
\right)
B_t^{(q)},
\qquad
A_t
=
\prod_{j=M}^{1} A_t^{(j)} .
\]
Therefore,
\[
C_t-A_t
=
\sum_{q=1}^{M}
\left(
\prod_{j=M}^{q+1} A_t^{(j)}
\right)
B_t^{(q)} .
\]
Taking operator norms and using submultiplicativity,
\[
\|C_t-A_t\|_2
\le
\sum_{q=1}^{M}
\left\|
\prod_{j=M}^{q+1} A_t^{(j)}
\right\|_2
\|B_t^{(q)}\|_2 .
\]

By smoothness,
\[
\|A_t^{(j)}\|_2
=
\left\|
I-\frac{\eta_t}{b}H_t^{(j)}
\right\|_2
\le
1+\eta_t\Lambda .
\]
Thus,
\[
\left\|
\prod_{j=M}^{q+1} A_t^{(j)}
\right\|_2
\le
(1+\eta_t\Lambda)^{M-q}.
\]
By the bounded-gradient assumption and the KL-smoothness assumption,
\[
\|B_t^{(q)}\|_2
\le
\eta_t G\sqrt{L_t}.
\]
Therefore,
\[
\|C_t-A_t\|_2
\le
\eta_t G\sqrt{L_t}
\sum_{q=1}^{M}
(1+\eta_t\Lambda)^{M-q}.
\]
Using the geometric sum,
\[
\sum_{q=1}^{M}
(1+\eta_t\Lambda)^{M-q}
=
\frac{(1+\eta_t\Lambda)^M-1}{\eta_t\Lambda},
\]
we obtain
\[
\|C_t-A_t\|_2
\le
G\sqrt{L_t}
\frac{(1+\eta_t\Lambda)^M-1}{\Lambda}.
\]

It remains to relate this bound to $\|A_t\|_2$. Since $\eta_t\Lambda<1$, for every vector $v$,
\[
\|A_t^{(j)}v\|_2
=
\left\|
v-\frac{\eta_t}{b}H_t^{(j)}v
\right\|_2
\ge
\|v\|_2-\frac{\eta_t}{b}\|H_t^{(j)}v\|_2
\ge
(1-\eta_t\Lambda)\|v\|_2 .
\]
Applying this bound through the product gives
\[
\|A_t v\|_2
\ge
(1-\eta_t\Lambda)^M\|v\|_2 .
\]
Hence,
\[
\|A_t\|_2 \ge (1-\eta_t\Lambda)^M .
\]
Combining this with the absolute bound yields
\[
\|C_t-A_t\|_2
\le
\frac{
G\sqrt{L_t}
\frac{(1+\eta_t\Lambda)^M-1}{\Lambda}
}{
(1-\eta_t\Lambda)^M
}
\|A_t\|_2 .
\]
This proves \eqref{eq:sloo-gap-rel-ncvx}.
\end{proof}

\subsection{Accumulated Relative Gap}
\label{app:acc-bound}

\begin{theorem}{Accumulated nonconvex relative gap}{sloo_acc_gap_nonconvex_refined}
Under the assumptions of Theorem~\ref{thm:sloo_fb_gap_nonconvex}, for any $T>k$,
\begin{equation}
\left\|
\prod_{t=T}^{k+1} C_t
-
\prod_{t=T}^{k+1} A_t
\right\|_2
\le
\left[
\prod_{t=k+1}^{T}(1+\rho_t)-1
\right]
\prod_{t=k+1}^{T}\|A_t\|_2 .
\label{eq:sloo-product-gap-general}
\end{equation}
Furthermore, if the learning rate takes the harmonic form
\[
\eta_t = \frac{\eta_0}{M(t-k)},
\]
with $\eta_0\Lambda < 1$, and if $L_{\max}:=\sup_{k<t\le T} L_t$, then, with $H:=T-k$,
\begin{equation}
\left\|
\prod_{t=T}^{k+1} C_t
-
\prod_{t=T}^{k+1} A_t
\right\|_2
\le
\left(
e^\alpha H^\alpha-1
\right)
\prod_{t=k+1}^{T}\|A_t\|_2,
\label{eq:sloo-product-gap-harmonic}
\end{equation}
where
\[
\alpha
:=
\frac{G\sqrt{L_{\max}}\eta_0}{(1-\eta_0\Lambda)^2}.
\]
\end{theorem}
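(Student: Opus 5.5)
The plan is to prove \eqref{eq:sloo-product-gap-general} by a telescoping/subset expansion that uses the per-stage relative bound of Theorem~\ref{thm:sloo_fb_gap_nonconvex}. I would then specialize $\rho_t$ to the harmonic schedule to get \eqref{eq:sloo-product-gap-harmonic}.

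\emph{Step 1: the general bound.} For each $t>k$, write $C_t = A_t + E_t$ with $E_t := C_t - A_t$. Theorem~\ref{thm:sloo_fb_gap_nonconvex} gives $\|E_t\|_2 \le \rho_t\|A_t\|_2$. Expanding the ordered product $\prod_{t=T}^{k+1}(A_t+E_t)$ gives a sum over subsets $S\subseteq\{k+1,\dots,T\}$. In the term indexed by $S$, the factor at position $t$ is $E_t$ if $t\in S$ and $A_t$ otherwise, with the order preserved. The term with $S=\emptyset$ is exactly $\prod_t A_t$, so the difference in \eqref{eq:sloo-product-gap-general} is the sum over nonempty $S$. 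By submultiplicativity, each such term has norm at most $\prod_{t\in S}\rho_t\prod_{t=k+1}^T\|A_t\|_2$. Summing over nonempty $S$ gives $\bigl[\prod_t(1+\rho_t)-1\bigr]\prod_t\|A_t\|_2$. The same result can be obtained by induction on $T$ using $XY-X'Y' = (X-X')Y + X'(Y-Y')$.

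\emph{Step 2: bounding $\rho_t$ under the harmonic schedule.} Set $x_t := \eta_0\Lambda/(t-k)\le\eta_0\Lambda<1$, so that $\eta_t\Lambda = x_t/M<1$ and Theorem~\ref{thm:sloo_fb_gap_nonconvex} applies. For the numerator, use $(1+x_t/M)^M-1 \le e^{x_t}-1 \le x_t e^{x_t}$. Then use $e^{x_t}\le 1/(1-x_t)\le 1/(1-\eta_0\Lambda)$ to get $\frac{(1+\eta_t\Lambda)^M-1}{\Lambda}\le \frac{\eta_0}{(t-k)(1-\eta_0\Lambda)}$. For the denominator, Bernoulli's inequality gives $(1-x_t/M)^M\ge 1-x_t\ge 1-\eta_0\Lambda$. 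Combining these with $L_t\le L_{\max}$ yields $\rho_t\le \alpha/(t-k)$.

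\emph{Step 3: summing the harmonic series.} Using $1+u\le e^u$, we get $\prod_{s=1}^H(1+\alpha/s)\le \exp(\alpha\sum_{s=1}^H 1/s)\le\exp(\alpha(1+\log H)) = e^\alpha H^\alpha$. Substituting into Step 1 gives \eqref{eq:sloo-product-gap-harmonic}.

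The main obstacle is Step 2: the elementary inequalities must be chained so that exactly one factor of $(1-\eta_0\Lambda)^{-1}$ comes from the numerator and one from the denominator, which is what matches the stated constant $\alpha$. I would first try the route above, namely $e^{x}-1\le x/(1-x)$ together with Bernoulli. If that loses a constant, I would instead bound $(1+\eta_t\Lambda)^M-1\le M\eta_t\Lambda(1+\eta_t\Lambda)^{M-1}$ and control $(1+\eta_t\Lambda)^{M-1}$ by $e^{\eta_0\Lambda}$.
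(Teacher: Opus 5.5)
Your proposal is correct and is essentially the paper's argument. The paper proves the general bound with the telescoping identity $\prod_{t=T}^{k+1}C_t-\prod_{t=T}^{k+1}A_t=\sum_{s=k+1}^{T}\bigl(\prod_{t=T}^{s+1}C_t\bigr)(C_s-A_s)\bigl(\prod_{t=s-1}^{k+1}A_t\bigr)$ together with $\|C_t\|_2\le(1+\rho_t)\|A_t\|_2$, which is your subset expansion grouped by the smallest index in $S$. It then bounds $\rho_t\le\alpha/(t-k)$ via $(1+x)^M-1\le Mx/(1-Mx)$ and $(1-x)^{-M}\le 1/(1-Mx)$, which are the same estimates as your $e^{x}-1\le x/(1-x)$ and Bernoulli steps, and finishes with the identical harmonic-sum bound.
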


\begin{proof}
We first prove the general accumulated product bound. Define
\[
P_C(k,T)
=
\prod_{t=T}^{k+1} C_t,
\qquad
P_A(k,T)
=
\prod_{t=T}^{k+1} A_t .
\]
Using the telescoping expansion for products,
\[
P_C(k,T)-P_A(k,T)
=
\sum_{s=k+1}^{T}
\left(
\prod_{t=T}^{s+1} C_t
\right)
(C_s-A_s)
\left(
\prod_{t=s-1}^{k+1} A_t
\right).
\]
Taking norms gives
\[
\|P_C(k,T)-P_A(k,T)\|_2
\le
\sum_{s=k+1}^{T}
\left(
\prod_{t=s+1}^{T}\|C_t\|_2
\right)
\|C_s-A_s\|_2
\left(
\prod_{t=k+1}^{s-1}\|A_t\|_2
\right).
\]
By Theorem~\ref{thm:sloo_fb_gap_nonconvex},
\[
\|C_s-A_s\|_2
\le
\rho_s\|A_s\|_2.
\]
Moreover,
\[
\|C_t\|_2
\le
\|A_t\|_2+\|C_t-A_t\|_2
\le
(1+\rho_t)\|A_t\|_2.
\]
Substituting these inequalities gives
\[
\|P_C(k,T)-P_A(k,T)\|_2
\le
\sum_{s=k+1}^{T}
\left(
\prod_{t=s+1}^{T}(1+\rho_t)\|A_t\|_2
\right)
\rho_s\|A_s\|_2
\left(
\prod_{t=k+1}^{s-1}\|A_t\|_2
\right).
\]
Factoring out the full product of $\|A_t\|_2$ gives
\[
\|P_C(k,T)-P_A(k,T)\|_2
\le
\left[
\sum_{s=k+1}^{T}
\rho_s
\prod_{t=s+1}^{T}(1+\rho_t)
\right]
\prod_{t=k+1}^{T}\|A_t\|_2 .
\]
Finally, using the identity
\[
\sum_{s=k+1}^{T}
\rho_s
\prod_{t=s+1}^{T}(1+\rho_t)
=
\prod_{t=k+1}^{T}(1+\rho_t)-1,
\]
we obtain
\[
\left\|
\prod_{t=T}^{k+1} C_t
-
\prod_{t=T}^{k+1} A_t
\right\|_2
\le
\left[
\prod_{t=k+1}^{T}(1+\rho_t)-1
\right]
\prod_{t=k+1}^{T}\|A_t\|_2 .
\]

We next prove the harmonic-learning-rate refinement. From Theorem~\ref{thm:sloo_fb_gap_nonconvex},
\[
\rho_t
=
G\sqrt{L_t}
\frac{(1+\eta_t\Lambda)^M-1}{\Lambda(1-\eta_t\Lambda)^M}.
\]
Let $x_t=\eta_t\Lambda$. Since $\eta_t=\eta_0/(M(t-k))$, we have
\[
Mx_t
=
\frac{\eta_0\Lambda}{t-k}
\le
\eta_0\Lambda
<
1.
\]
Using
\[
(1+x_t)^M-1
\le
\frac{Mx_t}{1-Mx_t},
\qquad
(1-x_t)^{-M}
\le
\frac{1}{1-Mx_t},
\]
we get
\[
\rho_t
\le
G\sqrt{L_t}
\frac{M\eta_t}{(1-M\eta_t\Lambda)^2}.
\]
Since
\[
1-M\eta_t\Lambda
=
1-\frac{\eta_0\Lambda}{t-k}
\ge
1-\eta_0\Lambda,
\]
we have
\[
\rho_t
\le
\frac{G\sqrt{L_t}M\eta_t}{(1-\eta_0\Lambda)^2}
=
\frac{G\sqrt{L_t}\eta_0}{(1-\eta_0\Lambda)^2}
\frac{1}{t-k}.
\]
Using $L_t\le L_{\max}$ gives
\[
\rho_t
\le
\frac{G\sqrt{L_{\max}}\eta_0}{(1-\eta_0\Lambda)^2}
\frac{1}{t-k}
=
\frac{\alpha}{t-k}.
\]
Therefore,
\[
\sum_{t=k+1}^{T}\rho_t
\le
\alpha
\sum_{h=1}^{H}\frac{1}{h}
\le
\alpha(1+\log H).
\]
Finally,
\[
\prod_{t=k+1}^{T}(1+\rho_t)-1
\le
\exp\left(\sum_{t=k+1}^{T}\rho_t\right)-1
\le
\exp\!\big(\alpha(1+\log H)\big)-1.
\]
Substituting this into \eqref{eq:sloo-product-gap-general} proves \eqref{eq:sloo-product-gap-harmonic}.
\end{proof}

\section{Detailed Experimental Setup}
\label{app:experimental-setups}

In this section, we provide details of the experimental setup, hyperparameter selection, and implementation details. All experiments are conducted using Amazon Web Services (AWS) c7a high-performance CPUs and 4 NVIDIA A40 GPUs, each with 46GB of memory.

\subsection{Experimental Setup for Traditional RL}
\label{app:experiment-rl}

\paragraph{Environment description.}
Table~\ref{tab:env-summary} summarizes the RL environments used in our experiments. The MuJoCo environments used in this work are distributed under the Apache 2.0 License, while the Gymnasium environments and wrappers, including Box2D and FrozenLake environments, are distributed under the MIT License.

\begin{table}[t]
\centering
\caption{Summary of RL environments used in experiments. MuJoCo environments are from~\citet{todorov2012mujoco}, while Box2D and FrozenLake environments are from Gym/Gymnasium~\citep{brockman2016openai,towers2024gymnasium}.}
\label{tab:env-summary}
\small
\begin{tabular}{lccc}
\toprule
Env & Env ID & Type & Action Space \\
\midrule
Ant & \texttt{Ant-v4} & MuJoCo locomotion & Continuous \\
HalfCheetah & \texttt{HalfCheetah-v4} & MuJoCo locomotion & Continuous \\
Hopper & \texttt{Hopper-v4} & MuJoCo locomotion & Continuous \\
LunarLander & \texttt{LunarLander-v2} & Box2D control & Discrete \\
BipedalWalker & \texttt{BipedalWalker-v3} & Box2D locomotion & Continuous \\
FrozenLake & \texttt{FrozenLake-v1} & Tabular navigation & Discrete \\
\bottomrule
\end{tabular}
\end{table}

\paragraph{Training setup.}
We use \texttt{Stable-Baselines3}~\citep{stable-baselines3} (MIT license) as our training framework for all the traditional RL tasks. We implement a KL-regularized PPO~\citep{schulman2017proximal}, using KL estimated via the $k_3$ approximation~\citep{schulman2020klapprox}
\[
\widehat{\mathrm{KL}}(\pi_{\mathrm{ref}}\,\|\,\pi_\theta) = (r-1) - \log r,\quad r = \pi_\theta(a\mid s)/\pi_{\mathrm{ref}}(a\mid s)
\]

We summarize the detailed hyperparameter selection in table~\ref{tab:ppo-hyperparams}. For base hyperparameters selection, we refer to RL Baselines3 Zoo\footnote{\url{https://github.com/DLR-RM/rl-baselines3-zoo}}, with some minor tuning. We perform a grid search over KL coefficients $\beta$ ranging from 0.0 to 5.0, and select the value that yields the best standard training performance. Similarly, we search for the $\rho$ that produces the highest and most stable performance for the baseline method IIF. For network architecture, we use the default \texttt{MlpPolicy} provided by \texttt{Stable-Baselines3}.

\begin{table}[h]
\centering
\caption{PPO/training hyperparameters per environment.}
\label{tab:ppo-hyperparams}
\resizebox{\linewidth}{!}{%
\begin{tabular}{lcccccc}
\toprule
Hyperparameter & Ant & HalfCheetah & LunarLander & BipedalWalker & Hopper & FrozenLake \\
\midrule
$N_{\text{envs}}$      & 8    & 8    & 1    & 1     & 8    & 1    \\
Total rollouts         & 500  & 500  & 500  & 100   & 100  & 50   \\
$n_{\text{steps}}$     & 512  & 512  & 1024 & 8192  & 512  & 2048 \\
$n_{\text{epochs}}$    & 20   & 20   & 4    & 10    & 20   & 10   \\
Learning rate          & 1e-4 & 1e-4 & 5e-3 & 1e-4  & 1e-4 & 5e-3 \\
Optimizer              & Adam & Adam & SGD  & Adam  & SGD  & SGD  \\
$\gamma$               & 0.98 & 0.98 & 0.99 & 0.99  & 0.98 & 0.99 \\
$\lambda$ (GAE)        & 0.92 & 0.92 & 0.95 & 0.95  & 0.92 & 0.95 \\
$\beta_{\text{KL}}$    & 5.0  & 1.0  & 0.5  & 1.0   & 1.0  & 0.0  \\
$\rho$ (Filtering only)   & 10\% & 10\% & 10\% & 5\%  & 10\% & 50\%  \\
\bottomrule
\end{tabular} }
\end{table}

\paragraph{Evaluation setup.} During evaluation, we evaluate the stochastic performance of each policy averaged over multiple evaluation episodes. For most continuous control environments (Ant, HalfCheetah, Hopper), we average over 32 episodes by default; for BipedalWalker, we use 100 episodes to reduce variance due to its longer horizon; and for light-weight environments, LunarLander and FrozenLake, we use 1000 episodes. Each setup is run with three independent seeds; we report mean~$\pm$~standard error.

\subsection{Experimental Setup for RLHF}
\label{app:experiment-rlhf}

\paragraph{Base model.}
We use GPT-Neo 2.7B~\citep{black2021gpt} (\texttt{EleutherAI/gpt-neo-2.7B}, MIT license) as the base language model. We apply LoRA~\citep{hu2022lora} for parameter-efficient fine-tuning, with rank $r=16$, scaling factor $\alpha=32$, and dropout rate $0.05$.

\paragraph{Training data.}
We train on the Real Toxicity Prompts dataset~\citep{gehman2020realtoxicityprompts} (\texttt{allenai/real-toxicity-prompts}, Apache-2.0 license). We retain examples with a prompt toxicity score above $0.3$, use the 80\% training split, and shuffle with a fixed seed. For each example, we concatenate the prompt and continuation fields, tokenize, and truncate to a length sampled uniformly in $[10, 15]$ tokens to form the PPO prompt. The policy then generates a 30-token continuation with temperature $1.0$ and top-$p = 1.0$.

\paragraph{Reward model.}
The reward signal is provided by the RoBERTa-based hate-speech classifier \texttt{facebook/roberta-hate-speech-dynabench-r4-target}~\citep{vidgen2021lftw}. Following the reference implementation, we use the raw logit of the \texttt{nothate} class (index~$0$) as the scalar reward: a higher logit indicates less toxic output.

\paragraph{Training setup.}
We optimize a PPO objective with a fixed KL penalty and no probability-ratio clipping, with a learning rate of $5\times 10^{-6}$. The per-token KL divergence from the reference policy is penalized with coefficient $\beta = 0.4$, estimated via the $k_1$ approximation~\citep{schulman2020klapprox}. Each rollout collects a batch of $64$ samples; PPO updates use a mini-batch size of $8$ over $4$ epochs, yielding $32$ gradient steps per rollout. We use the Adam optimizer with gradient norm clipping at $1.0$ and no weight decay. For each run, we train for $300$ rollout steps. All runs use bfloat16 precision with Flash Attention~2~\citep{dao2023flashattention}.

For data attribution setups, we use $H=2$ for RINSE, and $\rho=10\%$ for both IIF and RINSE for fairness.

\paragraph{Evaluation setup.}
We evaluate on the Wiki-Toxic dataset~\citep{wulczyn2017ex} (\texttt{OxAISH-AL-LLM/wiki\_toxic}, test split), drawn from a different distribution than the training data. We select the first $200$ toxic examples (label $= 1$) and use up to the first $2{,}000$ characters of each comment as the evaluation prompt---substantially longer than training prompts and thus more likely to elicit toxic continuations. Continuations of $30$ tokens are generated under the same sampling parameters as training. To guard against reward hacking specific to the training signal, we score continuations with a \emph{separate} classifier, \texttt{DaNLP/da-electra-hatespeech-detection}~\citep{pauli2021danlp}, and report the mean toxicity probability across the $200$ samples. Each method is run with three independent seeds; we report mean~$\pm$~standard error.

\section{Additional Experimental Results}
\subsection{Further Details on Linear Data Modeling Scores and Evaluation}
\label{app:lds-details}

Directly validating attribution methods using single-sample leave-one-out (LOO) correlation can be challenging in online RL due to the extremely low signal-to-noise ratio of individual sample effects. In particular, removing a single rollout record often induces only a negligible change in the final policy behavior, while stochasticity from environment interaction introduces substantial variance across retraining runs. To obtain a more stable evaluation protocol, we additionally adopt the linear data modeling score (LDS), which evaluates whether attribution scores can linearly predict the effect of removing a subset of training data. Concretely, LDS samples random subsets of rollout records, sums the attribution scores within each subset, and compares the aggregated score against the actual performance change after retraining without that subset. This evaluation implicitly assumes approximate linear additivity of data influence across samples, and therefore measures how well attribution scores behave as a linear proxy for collective data influence.

\paragraph{Experimental setup.} For the LDS evaluation, we consider model training with KL-regularized PPO on two consecutive rounds. We construct 100 random subsets of training data, each containing 4096 samples. For each subset, we remove it and retrain the model with future rollout data resampled using the updated policy for 32 independent runs, and compute the average change in validation loss. All results are further averaged over 3 random seeds. We conduct the experiments on two MuJoCo benchmark environments~\citep{todorov2012mujoco}, \texttt{Ant-v4} and \texttt{HalfCheetah-v4}.

\begin{table}[H]
\centering
\caption{Linear data modeling scores of attribution methods on \texttt{Ant-v4} and \texttt{HalfCheetah-v4}.}
\label{tab:lds-results-full}
\begin{tabular}{lcccc}
\toprule
& \multicolumn{2}{c}{Ant-v4} & \multicolumn{2}{c}{HalfCheetah-v4} \\
\cmidrule(lr){2-3} \cmidrule(lr){4-5}
Score & Pearson $\uparrow$ & Spearman $\uparrow$ & Pearson $\uparrow$ & Spearman $\uparrow$ \\
\midrule
Random & $-0.020 \pm 0.041$ & $-0.031 \pm 0.058$ & $-0.069 \pm 0.037$ & $-0.063 \pm 0.032$ \\
TracIn & $0.029 \pm 0.081$ & $0.055 \pm 0.102$ & $0.166 \pm 0.043$ & $0.138 \pm 0.037$ \\
RL-IF-opt & $0.179 \pm 0.017$ & $0.191 \pm 0.037$ & $0.266 \pm 0.015$ & $0.218 \pm 0.027$ \\
RL-IF-full & $0.180 \pm 0.023$ & $0.192 \pm 0.033$ & $0.267 \pm 0.016$ & $0.218 \pm 0.026$ \\
\bottomrule
\end{tabular}
\end{table}

Table~\ref{tab:lds-results-full} reports LDS results on \texttt{Ant-v4} and \texttt{HalfCheetah-v4}. We additionally report Pearson correlations for completeness, which exhibit the same overall trend as the Spearman correlations in the main text. Both RL-IF-opt and RL-IF-full substantially outperform TracIn and random baselines across environments and correlation metrics, indicating that the proposed non-local attribution framework better captures the long-term effect of rollout data on downstream policy behavior.

\subsubsection{Direct Leave-One-Out Evaluation}
\label{app:direct-loo}

To complement the subset-removal LDS evaluation, we directly evaluate LOO attribution on FrozenLake with attribution horizon $H=2$. Unlike the fixed-rollout experiment reported separately, this evaluation allows future rollout data to change with the perturbed policy. We use 256 retraining runs to estimate the deletion effects and compare them with the attribution scores of TracIn, RL-IF-opt, and RL-IF-full.

\begin{table}[htbp]
    \centering
    \caption{ Direct LOO attribution evaluation on FrozenLake with $H=2$. Correlations are reported as per-seed means with standard errors. }
    \label{tab:direct-loo-frozenlake}
    \begin{tabular}{lc}
        \toprule
        Method & Pearson $\rho$ (mean $\pm$ SE) \\
        \midrule
        TracIn & $+0.538 \pm 0.165$ \\
        RL-IF-opt & $+0.461 \pm 0.161$ \\
        \textbf{RL-IF-full} & $\mathbf{+0.687 \pm 0.041}$ \\
        \bottomrule
    \end{tabular}
\end{table}

RL-IF-full achieves the highest mean correlation in this experiment, outperforming both RL-IF-opt and TracIn. This provides a setting in which explicitly accounting for policy-induced sampling changes improves agreement with the estimated LOO effects.

\subsubsection{Empirical Characterization of the Sampling Effect}
\label{app:sampling-diagnostics}

To examine why the sampling contribution differs across environments, we measure the top Hessian eigenvalue of the KL-regularized loss, denoted by $\Lambda$, the rollout-score second moment, summarized by $\sqrt{L_{\max}}$, and the learning rate $\eta$ within the attribution window. Table~\ref{tab:sampling-diagnostics} reports these measurements and the diagnostic ratio $\eta\sqrt{L_{\max}}/\Lambda$.

\begin{table}[htbp]
    \centering
    \small
    \caption{Empirical diagnostics of the sampling contribution.}
    \label{tab:sampling-diagnostics}
    \begin{tabular}{lcccc}
        \toprule
        Environment
        & $\Lambda$
        & $\sqrt{L_{\max}}$
        & $\eta$
        & $\eta\sqrt{L_{\max}}/\Lambda$ \\
        \midrule
        FrozenLake
        & $1.3$ & $2.4$ & $5\times10^{-3}$ & $9.2\times10^{-3}$ \\
        HalfCheetah
        & $65$ & $12.0$ & $1\times10^{-4}$ & $1.8\times10^{-5}$ \\
        Ant
        & $2.4\times10^3$ & $12.2$
        & $1\times10^{-4}$ & $5.0\times10^{-7}$ \\
        \bottomrule
    \end{tabular}
\end{table}

FrozenLake has a much smaller measured curvature scale and a larger learning rate than Ant and HalfCheetah. Its reported diagnostic ratio is approximately four orders of magnitude larger than Ant's. Together with the direct LOO comparison in Table~\ref{tab:direct-loo-frozenlake}, these measurements identify FrozenLake as an empirical setting in which retaining the sampling correction is useful. In contrast, the full and optimization-only estimators have nearly identical LDS on Ant and HalfCheetah.

\subsection{Additional Results for LOO}

\begin{figure}[htbp]
    \centering
    \includegraphics[width=\linewidth]{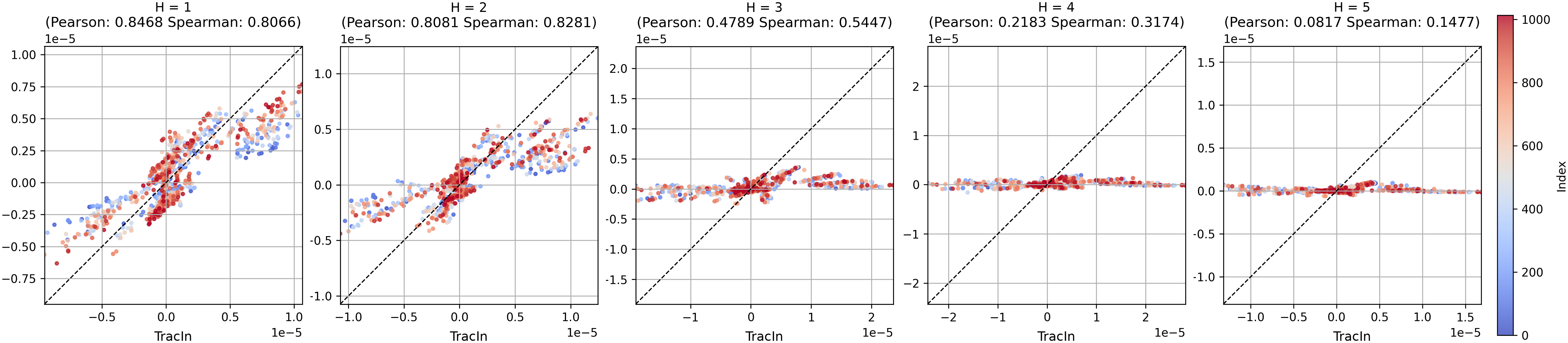}
    \vspace{0.4em}
    \includegraphics[width=\linewidth]{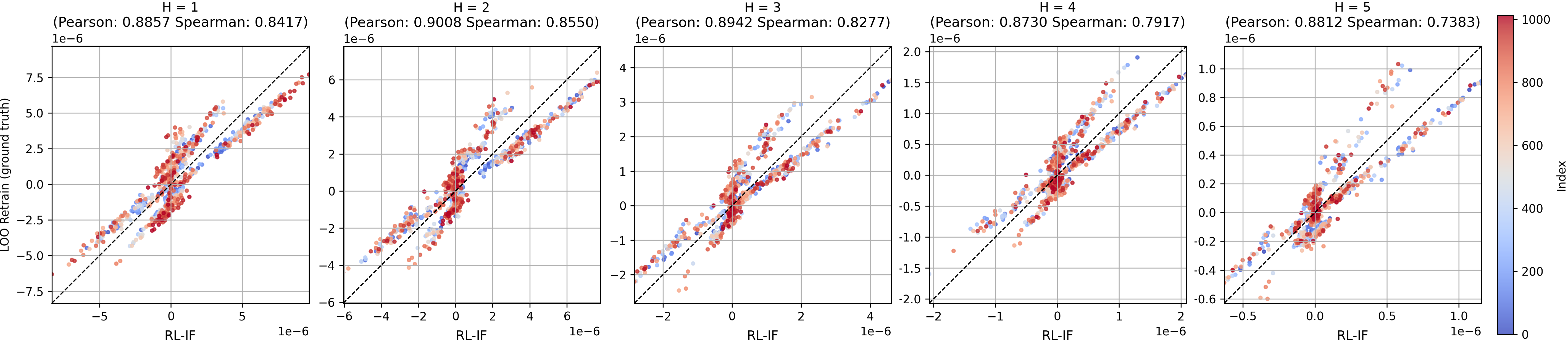}
    \caption{Comparison between TracIn and RL-IF. The top row shows the comparison between the TracIn method and the ground-truth LOO values, while the bottom row shows the comparison between RL-IF and the ground-truth LOO values under the same held-out settings. From left to right, the panels show the attribution horizon $H$ from 1 to 5. The colorbar (point index) is shared across each row and shown once on the right. Each panel keeps its own axes because the attribution and LOO magnitudes differ across horizons. }
    \label{fig:ten_figures}
\end{figure}

Additional LOO results are presented in Figure~\ref{fig:ten_figures}. We compare our RL-IF score with TracIn and ground truth (LOO Retrain), under the fixed rollout setting, where we do not consider the sampling effect. RL-IF shows high correlation with the ground truth LOO retrain, with correlation decaying slowly with respect to the attribution horizon. TracIn shows high correlation with ground truth when attribution horizon equals to 1, but fails to capture the long horizon effect.

\subsection{Ablation on Algorithmic Design}
\label{app:ablation}

In this section, we conduct ablation studies to justify our design choices of the horizon selection $H$ in RINSE.

\subsubsection{Ablation on Adaptive Horizon}

We ablate the effect of the adaptive horizon by adopting a fixed horizon instead, on \texttt{Ant-v4}, \texttt{HalfCheetah-v4}, \texttt{BipedalWalker-v3}, and \texttt{LunarLander-v2}. As shown in Figure~\ref{fig:dynamic}, the adaptive horizon matches or exceeds the fixed variant in every case, and the gap is largest on the harder-exploration tasks: on \texttt{Ant-v4} and \texttt{BipedalWalker-v3} the fixed-horizon runs progress slowly and erratically, while on \texttt{LunarLander-v2} the adaptive horizon reaches a higher return with lower variance. This shows that the adaptive horizon is necessary: it stabilizes training and yields consistently higher-quality data selection without per-environment tuning.

\begin{figure}[t]
  \centering
  {\small \textcolor[HTML]{E69F00}{\raisebox{0.45ex}{\rule{1.6em}{1pt}}}~RINSE + Fixed $H$\qquad \textcolor[HTML]{009E73}{\raisebox{0.45ex}{\rule{1.6em}{1pt}}}~RINSE + Dynamic $H$}\\[0.4em]
  \includegraphics[width=0.49\linewidth]{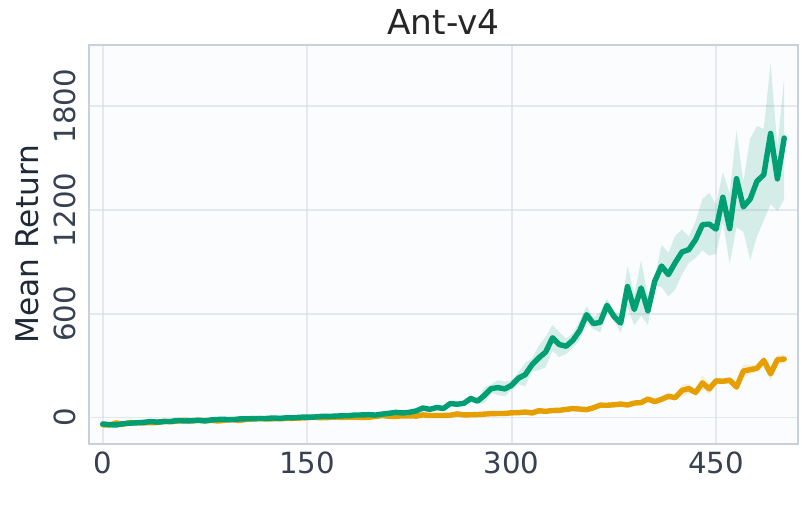}
  \hspace{1mm}
  \includegraphics[width=0.49\linewidth]{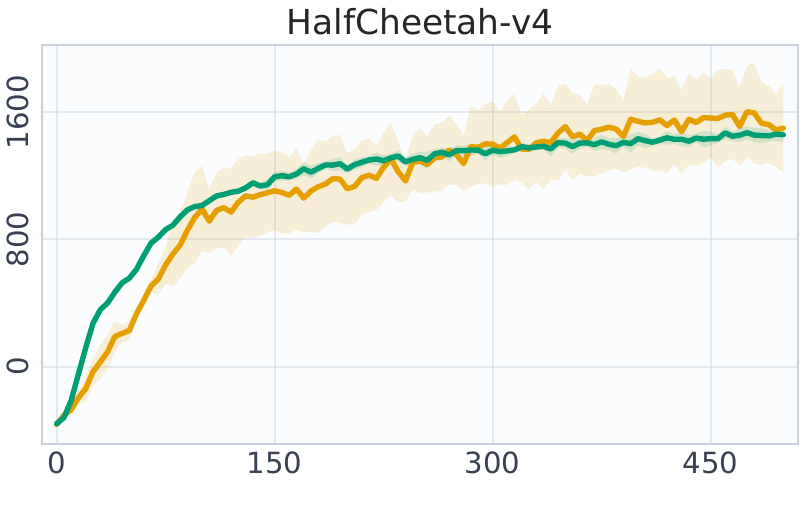}

  \vspace{0.2em}
  \includegraphics[width=0.49\linewidth]{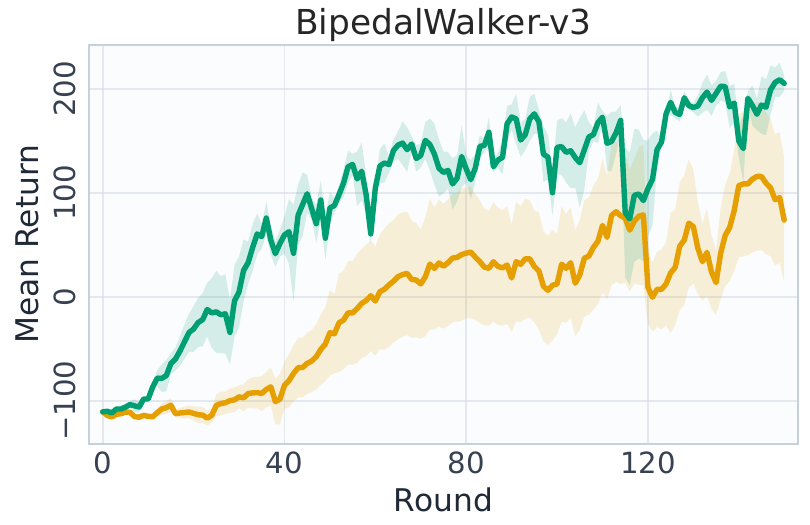}
  \hspace{1mm}
  \includegraphics[width=0.49\linewidth]{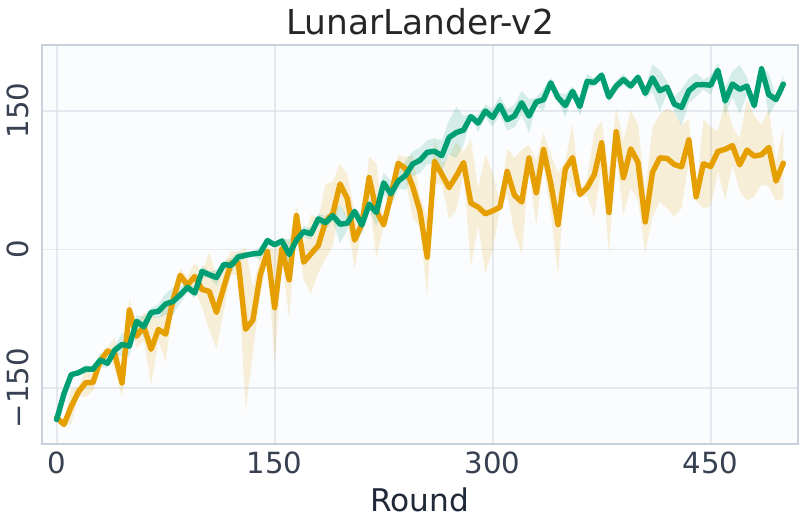}
  \caption{\textbf{Dynamic-horizon ablation.} Fixed horizon vs.\ the adaptive (dynamic) horizon. The dynamic horizon matches or outperforms the fixed variant on all four environments, with the largest gains on the harder-exploration tasks.}
  \label{fig:dynamic}
\end{figure}

\subsubsection{Effect of Horizon in RINSE}

We sweep the horizon over $H \in \{1, 2, 4, 5, 8, 10\}$ on \texttt{Ant-v4} and \texttt{HalfCheetah-v4}, holding all other hyperparameters fixed. Figure~\ref{fig:horizon} shows a consistent trend that emerges across both environments. As the horizon $H$ grows, the quality of the selected data improves gradually. This improvement, however, saturates at a moderate horizon of around $H=5$. Beyond this point, additional horizon brings no further benefit, and for the largest values it can even slightly degrade selection quality. This motivates some medium horizons, such as $H=5$, as a robust default.

\begin{figure}[t]
  \centering
  {\small \textcolor[HTML]{481D6F}{\raisebox{0.45ex}{\rule{1.6em}{1pt}}}~$H{=}1$\quad \textcolor[HTML]{3C4F8A}{\raisebox{0.45ex}{\rule{1.6em}{1pt}}}~$H{=}2$\quad \textcolor[HTML]{2A788E}{\raisebox{0.45ex}{\rule{1.6em}{1pt}}}~$H{=}4$\quad \textcolor[HTML]{1F9F88}{\raisebox{0.45ex}{\rule{1.6em}{1pt}}}~$H{=}5$\quad \textcolor[HTML]{4EC36B}{\raisebox{0.45ex}{\rule{1.6em}{1pt}}}~$H{=}8$\quad \textcolor[HTML]{B0DD2F}{\raisebox{0.45ex}{\rule{1.6em}{1pt}}}~$H{=}10$}\\[0.4em]
  \includegraphics[width=0.49\linewidth]{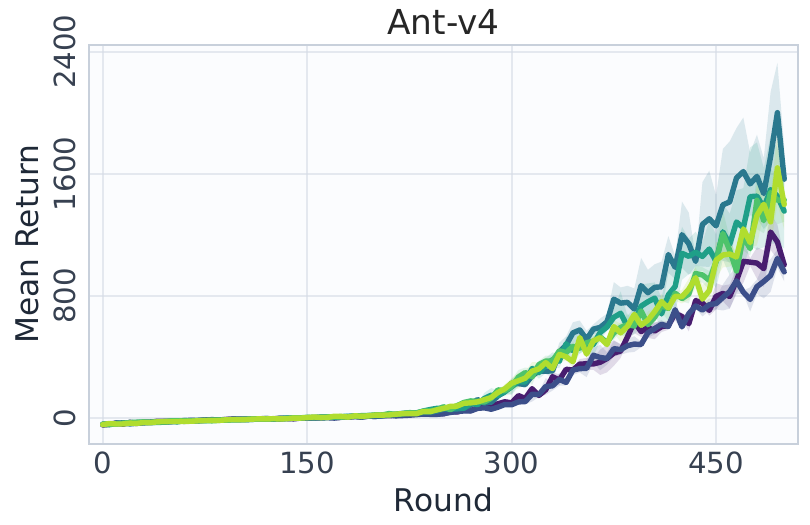}
  \hspace{1mm}
  \includegraphics[width=0.49\linewidth]{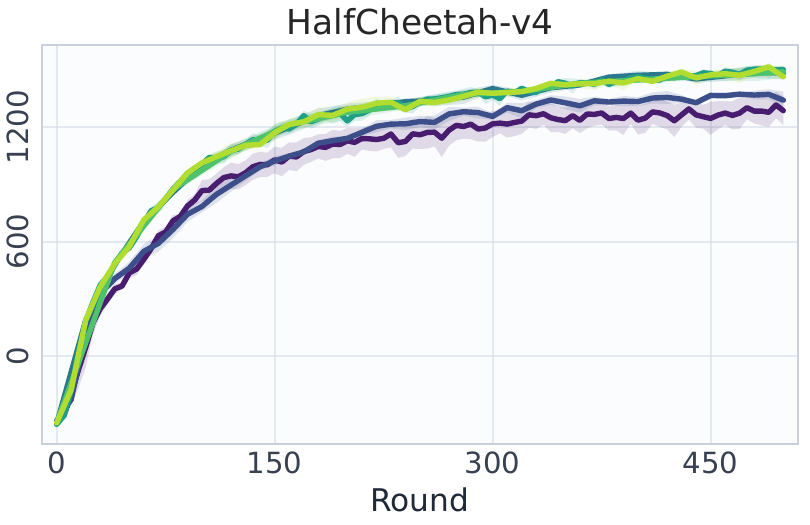}
  \caption{\textbf{Horizon ablation.} Effect of the different horizons $H \in \{1,2,4,5,8,10\}$. Longer horizons speed up early learning, but the benefit saturates around $H\!\approx\!4$--$5$.}
  \label{fig:horizon}
\end{figure}

\subsubsection{Effect of the Discard Ratio}

We ablate the discard ratio $\rho$ on \texttt{Ant-v4} and \texttt{HalfCheetah-v4}, sweeping $\rho \in \{0\%, 5\%, 10\%, 20\%, 50\%, 100\%\}$, where $\rho=0\%$ recovers standard PPO (no filtering) and $\rho=10\%$ is our default. Figure~\ref{fig:rho} shows that filtering improves performance across the tested ratios on \texttt{Ant-v4}, while \texttt{HalfCheetah-v4} is more sensitive to aggressive filtering. On \texttt{Ant-v4}, any non-zero discard ratio substantially outperforms the no-selection baseline, confirming that removing low-influence data is beneficial. On \texttt{HalfCheetah-v4}, $\rho=50\%$ and $\rho=100\%$ yield lower final mean returns than the no-selection baseline. These results support using a moderate discard ratio.

\begin{figure}[t]
  \centering
  {\small \textcolor[HTML]{444444}{\raisebox{0.45ex}{\rule{1.6em}{1pt}}}~$\rho{=}0\%$ (baseline)\quad \textcolor[HTML]{481D6F}{\raisebox{0.45ex}{\rule{1.6em}{1pt}}}~$\rho{=}5\%$\quad \textcolor[HTML]{375A8C}{\raisebox{0.45ex}{\rule{1.6em}{1pt}}}~$\rho{=}10\%$ (default)\quad \textcolor[HTML]{228B8D}{\raisebox{0.45ex}{\rule{1.6em}{1pt}}}~$\rho{=}20\%$\quad \textcolor[HTML]{3BBB75}{\raisebox{0.45ex}{\rule{1.6em}{1pt}}}~$\rho{=}50\%$\quad \textcolor[HTML]{B0DD2F}{\raisebox{0.45ex}{\rule{1.6em}{1pt}}}~$\rho{=}100\%$}\\[0.4em]
  \includegraphics[width=0.49\linewidth]{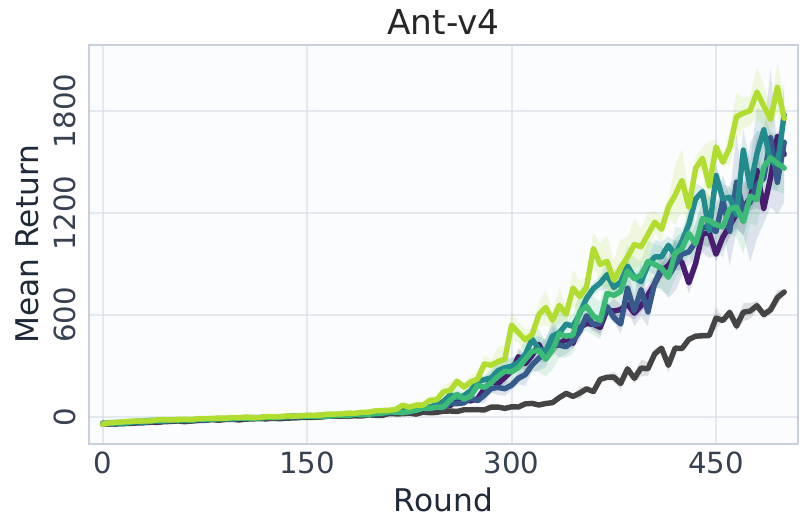}
  \hspace{1mm}
  \includegraphics[width=0.49\linewidth]{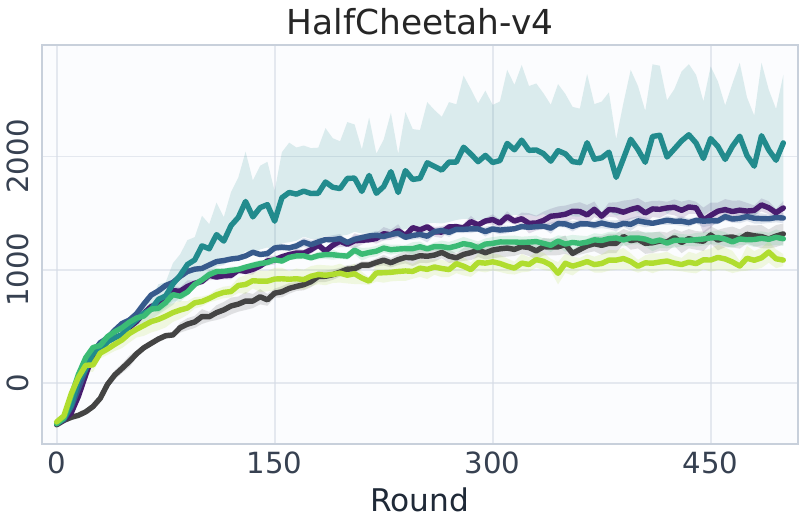}
  \caption{\textbf{Discard-ratio ablation.} Effect of the discard ratio $\rho$ on \texttt{Ant-v4} and \texttt{HalfCheetah-v4}. Moderate ratios ($\rho{\approx}10\%$--$20\%$) are best, while very aggressive filtering can degrade performance.}
  \label{fig:rho}
\end{figure}

\subsection{Further Details on Time and Sample Efficiency}
\label{app:efficiency-details}

We provide additional details on the computation of time efficiency (TE) and sample efficiency (SE), along with extended results. As described in the main text, both metrics are defined relative to the standard PPO baseline, which is normalized to $100\%$. Lower values therefore indicate better efficiency.

\paragraph{Evaluation protocol.}
For each method, we track the training trajectory and record the first time step at which a set of target performance levels is reached. In the main results, we use three thresholds corresponding to $90\%$, $95\%$, and $100\%$ of the peak performance achieved by the standard PPO baseline. TE and SE are then computed by averaging the relative runtime and data usage required to reach these thresholds.

In addition to this multi-level evaluation, we also report a stricter variant based on a single threshold at $100\%$ performance. This provides a complementary view that focuses on the final stage of training.

\paragraph{Results.}
Table~\ref{tab:efficiency-merged} summarizes both versions of the efficiency metrics. The upper block (``100\% only'') corresponds to the single-threshold evaluation, while the lower block (``90/95/100\%'') matches the results reported in the main text.

\begin{table*}[t]
\centering
\footnotesize
\caption{Relative time efficiency (TE) and sample efficiency (SE) across environments. All values are normalized to the PPO baseline ($100\%$); lower is better. Best results per column are in bold.}
\label{tab:efficiency-merged}
\resizebox{\linewidth}{!}{%
\begin{tabular}{lllcccccc}
\toprule
Setting & Metric & Method & Ant & HalfCheetah & LunarLander & BipedalWalker & Hopper & FrozenLake \\
\midrule
\multirow{4}{*}{100\% only}
& \multirow{2}{*}{TE$\downarrow$}
& IIF & 109.7{\tiny$\pm$2.0} & 77.5{\tiny$\pm$8.5} & \textbf{80.3{\tiny$\pm$6.0}} & 93.9{\tiny$\pm$3.3} & 114.7{\tiny$\pm$4.0} & 105.2{\tiny$\pm$7.9} \\
& & RINSE & \textbf{102.9{\tiny$\pm$2.4}} & \textbf{60.2{\tiny$\pm$6.7}} & 88.6{\tiny$\pm$3.9} & \textbf{71.1{\tiny$\pm$9.5}} & \textbf{107.5{\tiny$\pm$4.5}} & \textbf{104.2{\tiny$\pm$2.7}} \\
& \multirow{2}{*}{SE$\downarrow$}
& IIF & 85.8{\tiny$\pm$1.8} & 59.5{\tiny$\pm$6.5} & 55.4{\tiny$\pm$4.3} & 74.1{\tiny$\pm$2.8} & 87.9{\tiny$\pm$3.0} & 51.0{\tiny$\pm$3.9} \\
& & RINSE & \textbf{74.7{\tiny$\pm$1.8}} & \textbf{42.8{\tiny$\pm$4.7}} & \textbf{49.1{\tiny$\pm$2.1}} & \textbf{49.7{\tiny$\pm$7.0}} & \textbf{77.2{\tiny$\pm$3.4}} & \textbf{48.0{\tiny$\pm$1.6}} \\
\midrule
\multirow{4}{*}{90/95/100\%}
& \multirow{2}{*}{TE$\downarrow$}
& IIF & 110.2{\tiny$\pm$3.7} & 75.0{\tiny$\pm$11.0} & \textbf{76.1{\tiny$\pm$8.5}} & 94.9{\tiny$\pm$6.0} & 94.5{\tiny$\pm$18.8} & 133.4{\tiny$\pm$26.0} \\
& & RINSE & \textbf{103.0{\tiny$\pm$4.9}} & \textbf{63.9{\tiny$\pm$10.4}} & 89.9{\tiny$\pm$6.8} & \textbf{67.9{\tiny$\pm$13.6}} & \textbf{88.9{\tiny$\pm$16.7}} & \textbf{116.3{\tiny$\pm$12.5}} \\
& \multirow{2}{*}{SE$\downarrow$}
& IIF & 86.3{\tiny$\pm$3.3} & 57.7{\tiny$\pm$8.5} & 52.6{\tiny$\pm$6.0} & 74.8{\tiny$\pm$5.0} & 72.2{\tiny$\pm$14.6} & 64.2{\tiny$\pm$12.2} \\
& & RINSE & \textbf{74.9{\tiny$\pm$3.7}} & \textbf{45.4{\tiny$\pm$7.3}} & \textbf{49.8{\tiny$\pm$3.7}} & \textbf{47.1{\tiny$\pm$9.7}} & \textbf{63.9{\tiny$\pm$11.9}} & \textbf{53.5{\tiny$\pm$5.0}} \\
\bottomrule
\end{tabular} }
\end{table*}

Across both evaluation protocols, RINSE improves mean sample efficiency over IIF on all six environments and mean time efficiency on five of the six environments. The gains are particularly pronounced in continuous control tasks such as Ant and HalfCheetah, where RINSE reduces sample usage by a substantial margin.

Comparing the two settings, the multi-threshold evaluation (90/95/100\%) yields slightly more stable estimates, as it averages performance across different stages of training. In contrast, the $100\%$-only metric is more sensitive to late-stage behavior but leads to similar overall conclusions.

Overall, these results suggest that the efficiency improvements of RINSE are not limited to a specific performance regime, but persist throughout the training process.

\subsubsection{Wall-Clock Runtime Breakdown}
\label{app:experiment-rt}

Table~\ref{tab:runtime-control} reports the wall-clock runtime per round for each control environment, separating rollout collection, the initial training pass, attribution, and filtered replay. The baseline cost is the sum of collection and training, while the additional cost is the sum of attribution and replay. The final column reports this additional cost relative to the baseline.

\begin{table}[htbp]
    \centering
    \caption{ Wall-clock runtime in seconds per round on the traditional RL environments. }
    \label{tab:runtime-control}
    \resizebox{\linewidth}{!}{%
    \begin{tabular}{lrrrrrrr}
        \toprule
        Environment & Collect & Train & Attribution & Replay
        & Base & Overhead & Overhead / Base \\
        \midrule
        Ant-v4
        & 1.46 & 17.66 & 15.90 & 5.51 & 19.12 & 21.41 & 112\% \\
        HalfCheetah-v4
        & 0.86 & 17.13 & 13.85 & 5.36 & 17.99 & 19.21 & 107\% \\
        Hopper-v4
        & 1.06 & 26.45 & 14.28 & 5.66 & 27.51 & 19.94 & 72\% \\
        BipedalWalker-v3
        & 7.24 & 26.95 & 15.56 & 5.66 & 34.19 & 21.22 & 62\% \\
        LunarLander-v2
        & 0.75 & 0.97 & 0.62 & 1.44 & 1.72 & 2.06 & 120\% \\
        FrozenLake-v1
        & 1.43 & 3.20 & 3.07 & 2.87 & 4.63 & 5.94 & 128\% \\
        \bottomrule
    \end{tabular}%
    }
\end{table}

Attribution and replay add between $62\%$ and $128\%$ of the baseline per-round cost in these measurements. This overhead differs from TE, which measures total time required to reach specified performance thresholds. Faster progress in return can offset the additional per-round computation, but the resulting wall-clock benefit depends on the environment.

\subsection{Additional Results for RLHF}
\label{app:results-rlhf}

We provide additional details on RLHF training dynamics and efficiency.

\begin{figure}[H]
    \centering
    \includegraphics[width=0.9\linewidth]{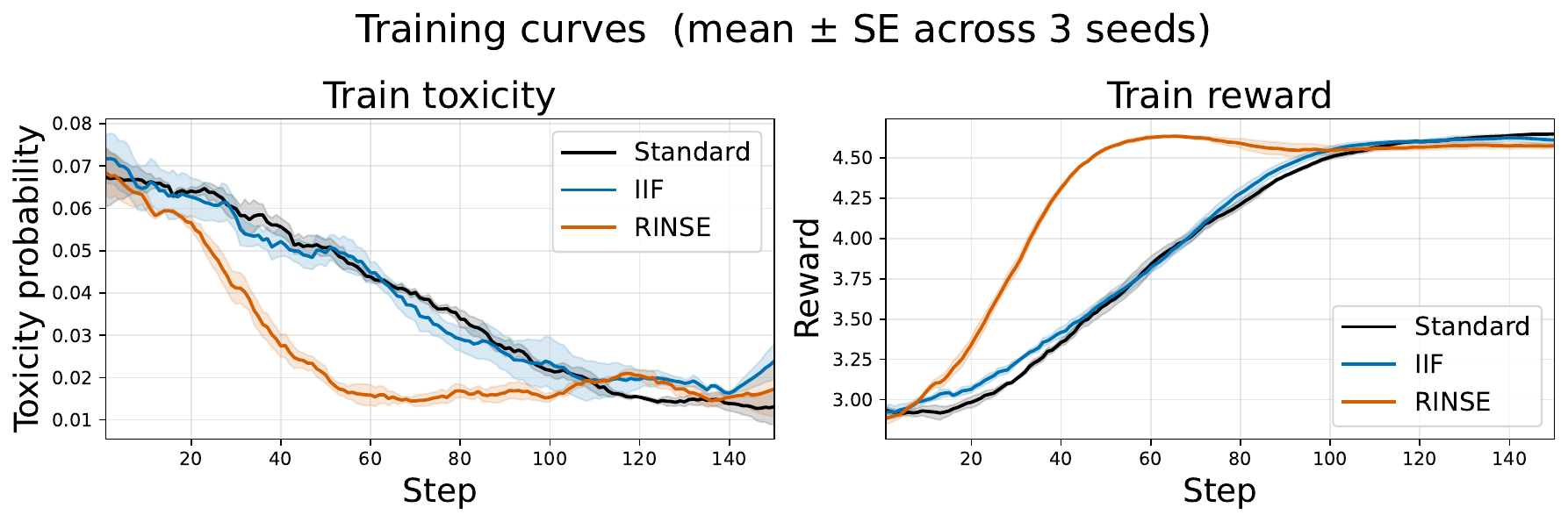}
    \caption{Training curves of Standard, IIF, and RINSE on RLHF toxicity reduction.}
    \label{fig:rlhf-train-curves}
\end{figure}

\paragraph{Training dynamics.}
Figure~\ref{fig:rlhf-train-curves} shows the training curves of Standard, IIF, and RINSE on training toxicity and reward. Standard and IIF exhibit similar learning dynamics, with IIF slightly faster than Standard, while RINSE achieves more significant toxicity reduction together with higher reward under the same number of training steps.

\begin{figure}[H]
    \centering
    \begin{minipage}{0.48\linewidth}
        \centering
        \includegraphics[width=\linewidth]{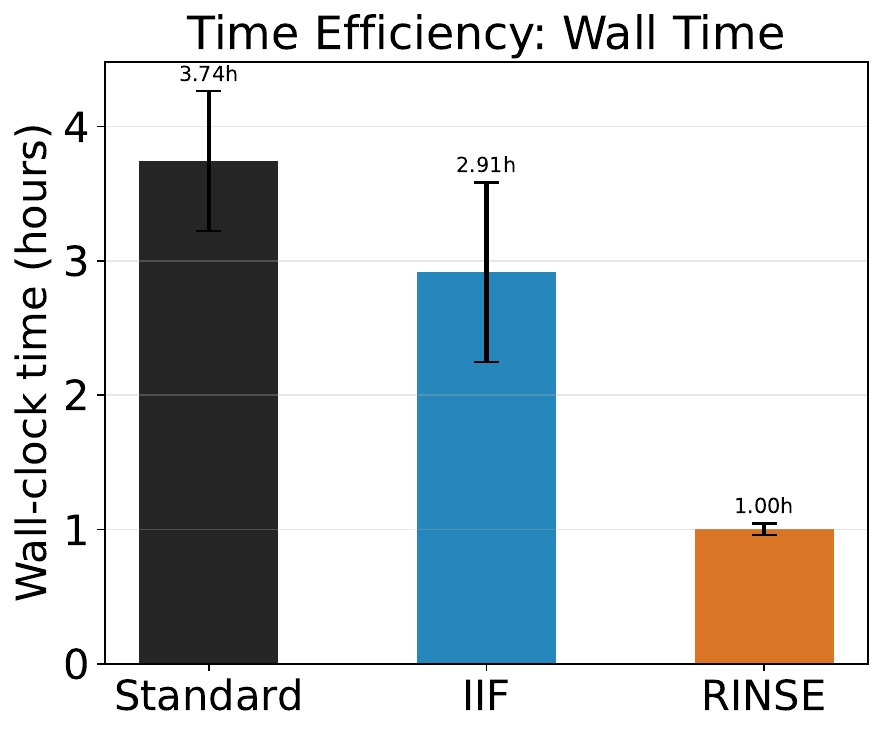}
        \caption{Average wall time of Standard, IIF, and RINSE on RLHF.}
        \label{fig:rlhf-runtime-efficiency}
    \end{minipage}
    \hfill
    \begin{minipage}{0.48\linewidth}
        \centering
        \includegraphics[width=\linewidth]{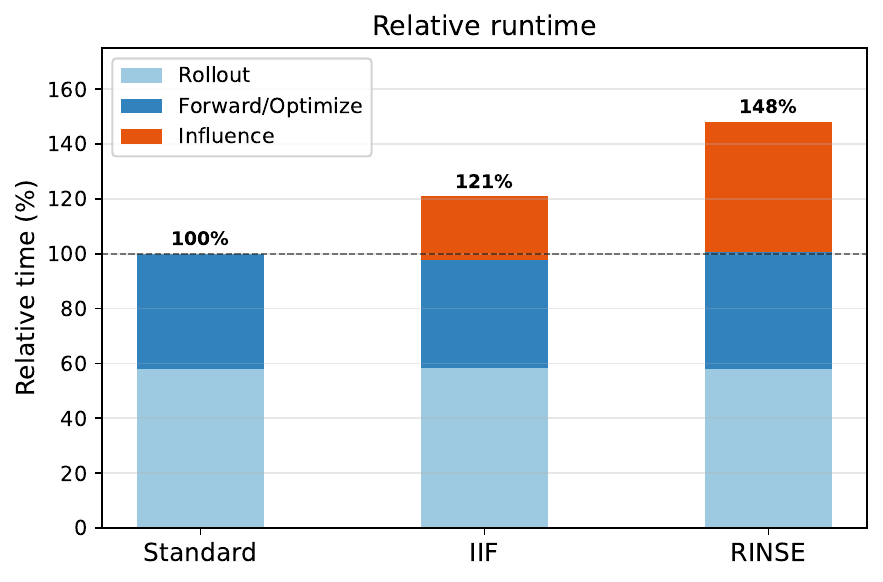}
        \caption{Per-step wall-clock time breakdown for Standard, IIF, and RINSE on RLHF.}
        \label{fig:rlhf-runtime-breakdown}
    \end{minipage}
\end{figure}

\paragraph{Time efficiency and runtime breakdown.}
Figure~\ref{fig:rlhf-runtime-efficiency} reports the wall-clock time required for each method to first reach the best test toxicity achieved by Standard on each seed. RINSE reaches the target substantially faster than Standard, demonstrating that non-local data filtering accelerates convergence.

Figure~\ref{fig:rlhf-runtime-breakdown} further breaks down the per-step wall-clock time by training phase. With the Generalized Gauss--Newton matrix~\cite{martens2020new} approximation, RINSE is only \textasciitilde$1.5\times$ slower than standard training per step. Although RINSE introduces this additional computation per training step, the overhead is compensated by the reduction in required training steps, leading to improved overall wall-clock efficiency.

\subsubsection{Generation Quality at Matched Toxicity}
\label{app:rlhf-quality}

We evaluate whether the reduction in toxicity is accompanied by a degradation in generation quality on held-out Wiki-Toxic prompts. We compare RINSE with the unperturbed baseline at a matched toxicity level, so that differences in quality are not explained solely by different degrees of toxicity reduction.

We assess fluency using the perplexity of the generated continuation under a separate language model, GPT-2 Large~\citep{radford2019language}, conditioned on the prompt. This evaluator is independent of both the policy and the training reward model. We assess diversity using corpus-level Distinct-2 and Distinct-3, and repetition using the within-sample repeated-4-gram rate, Seq-rep-4.

\begin{table}[htbp]
    \centering
    \caption{ Generation quality on held-out Wiki-Toxic prompts at matched toxicity. }
    \label{tab:rlhf-quality}
    \begin{tabular}{lcc}
        \toprule
        Metric & RINSE & Unperturbed baseline \\
        \midrule
        Median perplexity $\downarrow$
        & $\mathbf{43.4 \pm 2.6}$ & $50.5 \pm 5.3$ \\
        Distinct-2 $\uparrow$
        & $\mathbf{0.79 \pm 0.01}$ & $0.72 \pm 0.03$ \\
        Distinct-3 $\uparrow$
        & $\mathbf{0.96 \pm 0.00}$ & $0.92 \pm 0.02$ \\
        Seq-rep-4 $\downarrow$
        & $\mathbf{0.0037 \pm 0.0013}$ & $0.0046 \pm 0.0010$ \\
        \bottomrule
    \end{tabular}
\end{table}

At matched toxicity, RINSE achieves lower median perplexity, higher Distinct-2 and Distinct-3, and a lower repeated-4-gram rate than the unperturbed baseline. These results provide evidence that the toxicity improvement does not come at the cost of the evaluated fluency and diversity measures.

\newpage
\section*{NeurIPS Paper Checklist}

\begin{enumerate}

\item {\bf Claims}
\item[] Question: Do the main claims made in the abstract and introduction accurately reflect the paper's contributions and scope?
\item[] Answer: \answerYes{}
\item[] Justification: We tried our best to ensure that the abstract and introduction accurately reflect the paper's contributions and scope.
\item[] Guidelines:
\begin{itemize}
\item The answer \answerNA{} means that the abstract and introduction do not include the claims made in the paper.
\item The abstract and/or introduction should clearly state the claims made, including the contributions made in the paper and important assumptions and limitations. A \answerNo{} or \answerNA{} answer to this question will not be perceived well by the reviewers.
\item The claims made should match theoretical and experimental results, and reflect how much the results can be expected to generalize to other settings.
\item It is fine to include aspirational goals as motivation as long as it is clear that these goals are not attained by the paper.
\end{itemize}

\item {\bf Limitations}
\item[] Question: Does the paper discuss the limitations of the work performed by the authors?
\item[] Answer: \answerYes{}
\item[] Justification: We discuss the limitations in Section~\ref{sec:conclusions}.
\item[] Guidelines:
\begin{itemize}
\item The answer \answerNA{} means that the paper has no limitation while the answer \answerNo{} means that the paper has limitations, but those are not discussed in the paper.
\item The authors are encouraged to create a separate ``Limitations'' section in their paper.
\item The paper should point out any strong assumptions and how robust the results are to violations of these assumptions (e.g., independence assumptions, noiseless settings, model well-specification, asymptotic approximations only holding locally). The authors should reflect on how these assumptions might be violated in practice and what the implications would be.
\item The authors should reflect on the scope of the claims made, e.g., if the approach was only tested on a few datasets or with a few runs. In general, empirical results often depend on implicit assumptions, which should be articulated.
\item The authors should reflect on the factors that influence the performance of the approach. For example, a facial recognition algorithm may perform poorly when image resolution is low or images are taken in low lighting. Or a speech-to-text system might not be used reliably to provide closed captions for online lectures because it fails to handle technical jargon.
\item The authors should discuss the computational efficiency of the proposed algorithms and how they scale with dataset size.
\item If applicable, the authors should discuss possible limitations of their approach to address problems of privacy and fairness.
\item While the authors might fear that complete honesty about limitations might be used by reviewers as grounds for rejection, a worse outcome might be that reviewers discover limitations that aren't acknowledged in the paper. The authors should use their best judgment and recognize that individual actions in favor of transparency play an important role in developing norms that preserve the integrity of the community. Reviewers will be specifically instructed to not penalize honesty concerning limitations.
\end{itemize}

\item {\bf Theory assumptions and proofs}
\item[] Question: For each theoretical result, does the paper provide the full set of assumptions and a complete (and correct) proof?
\item[] Answer: \answerYes{}{}
\item[] Justification: We provide the theorems along with the assumptions and proof in Appendix~\ref{app:theorem-derivations}
\item[] Guidelines:
\begin{itemize}
\item The answer \answerNA{} means that the paper does not include theoretical results.
\item All the theorems, formulas, and proofs in the paper should be numbered and cross-referenced.
\item All assumptions should be clearly stated or referenced in the statement of any theorems.
\item The proofs can either appear in the main paper or the supplemental material, but if they appear in the supplemental material, the authors are encouraged to provide a short proof sketch to provide intuition.
\item Inversely, any informal proof provided in the core of the paper should be complemented by formal proofs provided in appendix or supplemental material.
\item Theorems and Lemmas that the proof relies upon should be properly referenced.
\end{itemize}

\item {\bf Experimental result reproducibility}
\item[] Question: Does the paper fully disclose all the information needed to reproduce the main experimental results of the paper to the extent that it affects the main claims and/or conclusions of the paper (regardless of whether the code and data are provided or not)?
\item[] Answer: \answerYes{}
\item[] Justification: We provide detailed information on the experimental setups in Appendix~\ref{app:experimental-setups}, and release our code at \url{https://github.com/sx-liu/RL-IF}.
\item[] Guidelines:
\begin{itemize}
\item The answer \answerNA{} means that the paper does not include experiments.
\item If the paper includes experiments, a \answerNo{} answer to this question will not be perceived well by the reviewers: Making the paper reproducible is important, regardless of whether the code and data are provided or not.
\item If the contribution is a dataset and\slash or model, the authors should describe the steps taken to make their results reproducible or verifiable.
\item Depending on the contribution, reproducibility can be accomplished in various ways. For example, if the contribution is a novel architecture, describing the architecture fully might suffice, or if the contribution is a specific model and empirical evaluation, it may be necessary to either make it possible for others to replicate the model with the same dataset, or provide access to the model. In general. releasing code and data is often one good way to accomplish this, but reproducibility can also be provided via detailed instructions for how to replicate the results, access to a hosted model (e.g., in the case of a large language model), releasing of a model checkpoint, or other means that are appropriate to the research performed.
\item While NeurIPS does not require releasing code, the conference does require all submissions to provide some reasonable avenue for reproducibility, which may depend on the nature of the contribution. For example
\begin{enumerate}
\item If the contribution is primarily a new algorithm, the paper should make it clear how to reproduce that algorithm.
\item If the contribution is primarily a new model architecture, the paper should describe the architecture clearly and fully.
\item If the contribution is a new model (e.g., a large language model), then there should either be a way to access this model for reproducing the results or a way to reproduce the model (e.g., with an open-source dataset or instructions for how to construct the dataset).
\item We recognize that reproducibility may be tricky in some cases, in which case authors are welcome to describe the particular way they provide for reproducibility. In the case of closed-source models, it may be that access to the model is limited in some way (e.g., to registered users), but it should be possible for other researchers to have some path to reproducing or verifying the results.
\end{enumerate}
\end{itemize}

\item {\bf Open access to data and code}
\item[] Question: Does the paper provide open access to the data and code, with sufficient instructions to faithfully reproduce the main experimental results, as described in supplemental material?
\item[] Answer: \answerYes{}
\item[] Justification: Our code is available at \url{https://github.com/sx-liu/RL-IF}.
\item[] Guidelines:
\begin{itemize}
\item The answer \answerNA{} means that paper does not include experiments requiring code.
\item Please see the NeurIPS code and data submission guidelines (\url{https://neurips.cc/public/guides/CodeSubmissionPolicy}) for more details.
\item While we encourage the release of code and data, we understand that this might not be possible, so \answerNo{} is an acceptable answer. Papers cannot be rejected simply for not including code, unless this is central to the contribution (e.g., for a new open-source benchmark).
\item The instructions should contain the exact command and environment needed to run to reproduce the results. See the NeurIPS code and data submission guidelines (\url{https://neurips.cc/public/guides/CodeSubmissionPolicy}) for more details.
\item The authors should provide instructions on data access and preparation, including how to access the raw data, preprocessed data, intermediate data, and generated data, etc.
\item The authors should provide scripts to reproduce all experimental results for the new proposed method and baselines. If only a subset of experiments are reproducible, they should state which ones are omitted from the script and why.
\item At submission time, to preserve anonymity, the authors should release anonymized versions (if applicable).
\item Providing as much information as possible in supplemental material (appended to the paper) is recommended, but including URLs to data and code is permitted.
\end{itemize}

\item {\bf Experimental setting/details}
\item[] Question: Does the paper specify all the training and test details (e.g., data splits, hyperparameters, how they were chosen, type of optimizer) necessary to understand the results?
\item[] Answer: \answerYes{}
\item[] Justification: We discuss the results in Section~\ref{sec:data+selection} and provide the detailed setups in Appendix~\ref{app:experimental-setups}
\item[] Guidelines:
\begin{itemize}
\item The answer \answerNA{} means that the paper does not include experiments.
\item The experimental setting should be presented in the core of the paper to a level of detail that is necessary to appreciate the results and make sense of them.
\item The full details can be provided either with the code, in appendix, or as supplemental material.
\end{itemize}

\item {\bf Experiment statistical significance}
\item[] Question: Does the paper report error bars suitably and correctly defined or other appropriate information about the statistical significance of the experiments?
\item[] Answer: \answerYes{}
\item[] Justification: We use at least 3 random seeds in all of our experiments. We report the standard errors in tables and include error bars in figures.
\item[] Guidelines:
\begin{itemize}
\item The answer \answerNA{} means that the paper does not include experiments.
\item The authors should answer \answerYes{} if the results are accompanied by error bars, confidence intervals, or statistical significance tests, at least for the experiments that support the main claims of the paper.
\item The factors of variability that the error bars are capturing should be clearly stated (for example, train/test split, initialization, random drawing of some parameter, or overall run with given experimental conditions).
\item The method for calculating the error bars should be explained (closed form formula, call to a library function, bootstrap, etc.)
\item The assumptions made should be given (e.g., Normally distributed errors).
\item It should be clear whether the error bar is the standard deviation or the standard error of the mean.
\item It is OK to report 1-sigma error bars, but one should state it. The authors should preferably report a 2-sigma error bar than state that they have a 96\% CI, if the hypothesis of Normality of errors is not verified.
\item For asymmetric distributions, the authors should be careful not to show in tables or figures symmetric error bars that would yield results that are out of range (e.g., negative error rates).
\item If error bars are reported in tables or plots, the authors should explain in the text how they were calculated and reference the corresponding figures or tables in the text.
\end{itemize}

\item {\bf Experiments compute resources}
\item[] Question: For each experiment, does the paper provide sufficient information on the computer resources (type of compute workers, memory, time of execution) needed to reproduce the experiments?
\item[] Answer: \answerYes{}{}
\item[] Justification: We discuss the compute in Appendix~\ref{app:experimental-setups}, and also provide detailed runtime analysis in Section~\ref{sec:data+selection}, Appendix~\ref{app:efficiency-details} and Appendix~\ref{app:results-rlhf}
\item[] Guidelines:
\begin{itemize}
\item The answer \answerNA{} means that the paper does not include experiments.
\item The paper should indicate the type of compute workers CPU or GPU, internal cluster, or cloud provider, including relevant memory and storage.
\item The paper should provide the amount of compute required for each of the individual experimental runs as well as estimate the total compute.
\item The paper should disclose whether the full research project required more compute than the experiments reported in the paper (e.g., preliminary or failed experiments that didn't make it into the paper).
\end{itemize}

\item {\bf Code of ethics}
\item[] Question: Does the research conducted in the paper conform, in every respect, with the NeurIPS Code of Ethics \url{https://neurips.cc/public/EthicsGuidelines}?
\item[] Answer: \answerYes{}
\item[] Justification: To the best of our knowledge, the research presented in this paper conforms with the NeurIPS Code of Ethics.
\item[] Guidelines:
\begin{itemize}
\item The answer \answerNA{} means that the authors have not reviewed the NeurIPS Code of Ethics.
\item If the authors answer \answerNo, they should explain the special circumstances that require a deviation from the Code of Ethics.
\item The authors should make sure to preserve anonymity (e.g., if there is a special consideration due to laws or regulations in their jurisdiction).
\end{itemize}

\item {\bf Broader impacts}
\item[] Question: Does the paper discuss both potential positive societal impacts and negative societal impacts of the work performed?
\item[] Answer: \answerNA{}
\item[] Justification: This paper conducts fundamental research in theory, and there is no societal impact of the work performed.
\item[] Guidelines:
\begin{itemize}
\item The answer \answerNA{} means that there is no societal impact of the work performed.
\item If the authors answer \answerNA{} or \answerNo, they should explain why their work has no societal impact or why the paper does not address societal impact.
\item Examples of negative societal impacts include potential malicious or unintended uses (e.g., disinformation, generating fake profiles, surveillance), fairness considerations (e.g., deployment of technologies that could make decisions that unfairly impact specific groups), privacy considerations, and security considerations.
\item The conference expects that many papers will be foundational research and not tied to particular applications, let alone deployments. However, if there is a direct path to any negative applications, the authors should point it out. For example, it is legitimate to point out that an improvement in the quality of generative models could be used to generate Deepfakes for disinformation. On the other hand, it is not needed to point out that a generic algorithm for optimizing neural networks could enable people to train models that generate Deepfakes faster.
\item The authors should consider possible harms that could arise when the technology is being used as intended and functioning correctly, harms that could arise when the technology is being used as intended but gives incorrect results, and harms following from (intentional or unintentional) misuse of the technology.
\item If there are negative societal impacts, the authors could also discuss possible mitigation strategies (e.g., gated release of models, providing defenses in addition to attacks, mechanisms for monitoring misuse, mechanisms to monitor how a system learns from feedback over time, improving the efficiency and accessibility of ML).
\end{itemize}

\item {\bf Safeguards}
\item[] Question: Does the paper describe safeguards that have been put in place for responsible release of data or models that have a high risk for misuse (e.g., pre-trained language models, image generators, or scraped datasets)?
\item[] Answer: \answerNA{}
\item[] Justification: This paper does not release high-risk models, datasets, or generative systems.
\item[] Guidelines:
\begin{itemize}
\item The answer \answerNA{} means that the paper poses no such risks.
\item Released models that have a high risk for misuse or dual-use should be released with necessary safeguards to allow for controlled use of the model, for example by requiring that users adhere to usage guidelines or restrictions to access the model or implementing safety filters.
\item Datasets that have been scraped from the Internet could pose safety risks. The authors should describe how they avoided releasing unsafe images.
\item We recognize that providing effective safeguards is challenging, and many papers do not require this, but we encourage authors to take this into account and make a best faith effort.
\end{itemize}

\item {\bf Licenses for existing assets}
\item[] Question: Are the creators or original owners of assets (e.g., code, data, models), used in the paper, properly credited and are the license and terms of use explicitly mentioned and properly respected?
\item[] Answer: \answerYes{}
\item[] Justification: We have cited the datasets, models, code frameworks and included their licenses in Appendix~\ref{app:experimental-setups}.
\item[] Guidelines:
\begin{itemize}
\item The answer \answerNA{} means that the paper does not use existing assets.
\item The authors should cite the original paper that produced the code package or dataset.
\item The authors should state which version of the asset is used and, if possible, include a URL.
\item The name of the license (e.g., CC-BY 4.0) should be included for each asset.
\item For scraped data from a particular source (e.g., website), the copyright and terms of service of that source should be provided.
\item If assets are released, the license, copyright information, and terms of use in the package should be provided. For popular datasets, \url{paperswithcode.com/datasets} has curated licenses for some datasets. Their licensing guide can help determine the license of a dataset.
\item For existing datasets that are re-packaged, both the original license and the license of the derived asset (if it has changed) should be provided.
\item If this information is not available online, the authors are encouraged to reach out to the asset's creators.
\end{itemize}

\item {\bf New assets}
\item[] Question: Are new assets introduced in the paper well documented and is the documentation provided alongside the assets?
\item[] Answer: \answerNA{}
\item[] Justification: This paper does not release new assets.
\item[] Guidelines:
\begin{itemize}
\item The answer \answerNA{} means that the paper does not release new assets.
\item Researchers should communicate the details of the dataset\slash code\slash model as part of their submissions via structured templates. This includes details about training, license, limitations, etc.
\item The paper should discuss whether and how consent was obtained from people whose asset is used.
\item At submission time, remember to anonymize your assets (if applicable). You can either create an anonymized URL or include an anonymized zip file.
\end{itemize}

\item {\bf Crowdsourcing and research with human subjects}
\item[] Question: For crowdsourcing experiments and research with human subjects, does the paper include the full text of instructions given to participants and screenshots, if applicable, as well as details about compensation (if any)?
\item[] Answer: \answerNA{}
\item[] Justification: This paper does not include crowdsourcing experiments or research with human subjects.
\item[] Guidelines:
\begin{itemize}
\item The answer \answerNA{} means that the paper does not involve crowdsourcing nor research with human subjects.
\item Including this information in the supplemental material is fine, but if the main contribution of the paper involves human subjects, then as much detail as possible should be included in the main paper.
\item According to the NeurIPS Code of Ethics, workers involved in data collection, curation, or other labor should be paid at least the minimum wage in the country of the data collector.
\end{itemize}

\item {\bf Institutional review board (IRB) approvals or equivalent for research with human subjects}
\item[] Question: Does the paper describe potential risks incurred by study participants, whether such risks were disclosed to the subjects, and whether Institutional Review Board (IRB) approvals (or an equivalent approval/review based on the requirements of your country or institution) were obtained?
\item[] Answer: \answerNA{}
\item[] Justification: This paper does not include research with human subjects.
\item[] Guidelines:
\begin{itemize}
\item The answer \answerNA{} means that the paper does not involve crowdsourcing nor research with human subjects.
\item Depending on the country in which research is conducted, IRB approval (or equivalent) may be required for any human subjects research. If you obtained IRB approval, you should clearly state this in the paper.
\item We recognize that the procedures for this may vary significantly between institutions and locations, and we expect authors to adhere to the NeurIPS Code of Ethics and the guidelines for their institution.
\item For initial submissions, do not include any information that would break anonymity (if applicable), such as the institution conducting the review.
\end{itemize}

\item {\bf Declaration of LLM usage}
\item[] Question: Does the paper describe the usage of LLMs if it is an important, original, or non-standard component of the core methods in this research? Note that if the LLM is used only for writing, editing, or formatting purposes and does \emph{not} impact the core methodology, scientific rigor, or originality of the research, declaration is not required.
\item[] Answer: \answerNA{}
\item[] Justification: This paper does not involve LLMs as any important, original, or non-standard components.
\item[] Guidelines:
\begin{itemize}
\item The answer \answerNA{} means that the core method development in this research does not involve LLMs as any important, original, or non-standard components.
\item Please refer to our LLM policy in the NeurIPS handbook for what should or should not be described.
\end{itemize}

\end{enumerate}

\end{document}